\newtheorem{theorem}{Theorem}
\newtheorem{lemma}{Lemma}
\title{Generative Ensemble Regression: Learning Particle Dynamics from Observations of Ensembles with Physics-Informed Deep Generative Models}
\author[1]{Liu Yang}
\author[2]{Constantinos Daskalakis}
\author[1,3,*]{George Em Karniadakis}
\affil[1]{Division of Applied Mathematics, Brown University, Providence, RI 02912, USA}
\affil[2]{Department of Electrical Engineering and Computer Science, Massachusetts Institute of Technology, Cambridge, Massachusetts, USA}
\affil[3]{Pacific Northwest National Laboratory, Richland, WA 99354, USA}
\affil[*]{Correspondence author, george\_karniadakis@brown.edu}
\date{\vspace{-5ex}}
\begin{document}

\maketitle

\begin{abstract}
We propose a new method for inferring the governing stochastic ordinary differential equations (SODEs) by observing particle ensembles at discrete and sparse time instants, i.e., multiple ``snapshots''. Particle coordinates at a single time instant, possibly noisy or truncated, are recorded in each snapshot but are unpaired across the snapshots. By training a physics-informed generative model that generates ``fake'' sample paths, we aim to fit the observed particle ensemble distributions with a curve in the probability measure space, which is induced from the inferred particle dynamics. We employ different metrics to quantify the differences between distributions, e.g., the sliced Wasserstein distances and the adversarial losses in generative adversarial networks (GANs). We refer to this method as generative ``ensemble-regression'' (GER), in analogy to the classic ``point-regression'', where we infer the dynamics by performing regression in the Euclidean space. We illustrate the GER by learning the drift and diffusion terms of particle ensembles governed by SODEs with Brownian motions and L\'evy processes up to 100 dimensions. We also discuss how to treat cases with noisy or truncated observations. Apart from systems consisting of independent particles, we also tackle nonlocal interacting particle systems with unknown interaction potential parameters by constructing a physics-informed loss function. Finally, we investigate scenarios of paired observations and discuss how to reduce the dimensionality in such cases by proving a convergence theorem that provides theoretical support. 

\end{abstract}

\section*{Introduction}
Classic methods for inferring the ordinary differential equation (ODE) dynamics from data usually require observations of a point or particle governed by the ODE at different time instants. We refer this learning paradigm as ``point-regression''. More specifically, as illustrated in Figure~\ref{fig:schematic0}, in point-regression problems we aim to infer the governing ODE of a point and perhaps also the initial condition, given the (possibly noisy) observations of its coordinates at different time instants. Typically, we optimize the dynamics and the initial coordinate so that the inferred curve matches the data in the Euclidean distance. Let us consider, for simplicity, the one-dimensional linear regression with quadratic loss as an example. Given observations of $x$ at multiple $t$, we want to optimize the parameter $a$ in the ODE $dx_t/dt = a$ as well as the initial point $x_0 = b$ so that the mean squared $L_2$ distance between predictions and data points is minimized, where $a$ and $b$ are the slope and the intercept of the linear function. Other examples in this category include logistic regression, recurrent neural networks~\cite{hochreiter1997long} and the neural ODE~\cite{chen2018neural} for time series, etc.

For systems consisting of an ensemble of particles, point-regression may fail to apply. For example, we want to infer the governing stochastic ordinary differential equations (SODE) from observations of particle ensembles at discrete and sparse time instants, but the data of an individual particle are not sufficiently informative for dynamic inference. Another example is a system consisting of a large number of interacting particles, even close to the mean field limit, e.g., we may want to infer how the fish interact with each other from discrete snapshots of the fish school. In such scenarios, instead of learning from individual particles, we need to {\em learn from the particle ensembles}. Specifically, we wish to infer the governing dynamics and perhaps also the initial condition, using observations of an ensemble of particle at discrete time instants. We call an observation at a single time instant a ``snapshot'', where part or all of the particle coordinates in the ensemble are recorded. Since the particles could be indistinguishable in observations, especially in a large system, we thus consider the case where the data are not labeled with particle indices, in other words, we cannot pair data across snapshots.

We call this paradigm ``ensemble-regression'' in analogy to the ``point-regression''. As illustrated in Figure~\ref{fig:schematic0}, the initial condition and dynamics for particles would induce a curve $t \rightarrow \rho_t$ in the probability measure space, where $\rho_t$ denotes the particle distributions at time $t$. Such $\rho_t$ will be governed by a corresponding partial differential equation (PDE), e.g., the Fokker-Planck equation if the particles are governed by SODEs of diffusion processes. We aim to optimize the dynamics and the initial distribution so that the inferred curve matches the distributions from the data, and the differences can be quantified with certain metrics.

\begin{figure}[H]
\centering
\includegraphics[width = 0.6\textwidth]{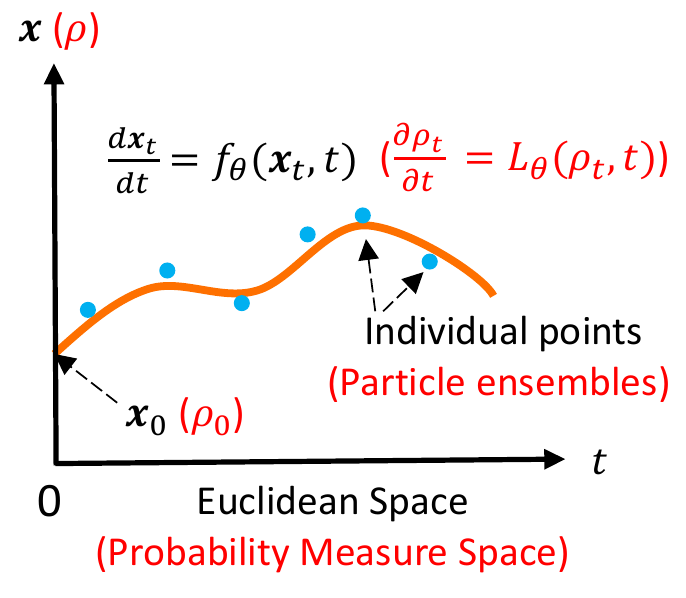}
\includegraphics[width = 0.8\textwidth]{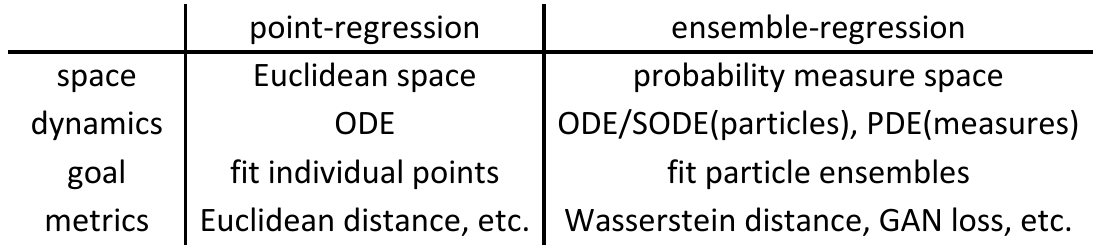}
\caption{Schematic showing the two paradigms of point-regression and ensemble-regression for dynamic inference. In point-regression problems (black labels), we aim to fit the point coordinates (blue dots) from data with the inferred curve (orange curve), determined by the initial coordinate $\bm{x}_0$ as well as the ODE $d\bm{x}_t/dt = f_{\theta}(\bm{x}_t, t)$, where $f_{\theta}$ is a function. As an analogue, in ensemble-regression problems (red labels), we aim to fit the distributions of the ensemble in the snapshots (blue dots) with the inferred curve (orange curve), determined by the initial distribution $\rho_0$ as well as the PDE $\partial\rho_t/\partial t = L_{\theta}(\rho_t, t)$, where $L_{\theta}$ is an operator.}
 \label{fig:schematic0}
\end{figure}

Herein we propose a new method to perform ensemble-regression. We use a generative model with deep neural networks as build blocks, which will generate ``fake'' particle systems, to represent the inferred curve in the probability measure space, and then perform regression in the probability measure space with the inferred curve. We thus name our method as {\em generative ensemble-regression}.  In this paper we test the sliced Wasserstein (SW) distance~\cite{deshpande2018generative} and the loss in generative adversarial networks (GANs)~\cite{goodfellow2014generative, gulrajani2017improved} as two examples of metrics, the latter proved to be very effective when analyzing high dimensional data~\cite{donoho2000high} in our problems. 

The deep generative model is physics-informed in that our partial physical knowledge of the dynamics will be encoded into the architecture or the loss function. Such physical knowledge is sometimes essential for a correct dynamic inference, since the particle dynamic can be not unique even if the curve $t\rightarrow \rho_t$ and its governing PDE are fully given. For example, the following two particle dynamics with $\mathcal{N}(0,1)$ as the initial distribution will lead to the same curve $\rho_t = \mathcal{N}(0,t+1)$:
\begin{itemize}
  \item Standard Brownian motion with no drift.
  \item $dx_t/dt = x_t/(2t+2)$ with no diffusion, i.e., $x_t = x_0\sqrt{t+1}$.
\end{itemize}
However, if we know that the particle dynamic is in the form of $d\bm{x}_t/dt = \bm{v}_t(\bm{x},t)$, so that $\rho_t$ is governed by the continuity equation $d\rho_t/dt = -\nabla \cdot (\bm{v}_t \rho_t)$, and $\bm{v}_t$ is limited to the $L^2(\rho_t; \mathbb{R}^d)$ closure of $\{\nabla\varphi:\varphi \in C_c^{\infty}(\mathbb{R}^d)\}$, then the solution of $\bm{v}_t$ is unique, for any curve $t\rightarrow \rho_t$ absolutely continuous from $[a,b]$ to $\mathcal{P}_2(\mathbb{R}^d)$, where $\mathcal{P}_2(\mathbb{R}^d)$ is the Wasserstein-2 space of probability measures with finite quadratic moments in $\mathbb{R}^d$ \cite{ambrosio2008gradient, ambrosio2008hamiltonian}.


In the generative model, we use discretized ODE or SODE with unknown terms parameterized as neural networks. This is referred as the ``neural ODE'' and ``neural SDE'' in the literature \cite{chen2018neural, li2020scalable, jia2019neural, liu2019neural, tzen2019neural, tzen2019theoretical}, but in applications the observations are mainly a time series, i.e., observations of a single particle. The idea of employing a neural networks as velocity surrogates with physics-informed loss functions in particle systems was also used for solving mean field game/control problems~\cite{ruthotto2020machine}.
There are other works using GANs to solve inverse problems, including \cite{yang2020physics}, where GANs were applied to learn parameters in time-independent stochastic differential equations, and \cite{liu2020rode} where GANs were applied to learn the random parameters from the (paired) observations of independent ODEs. 

In this paper, we tackle two typical types of particle systems: (1) independent particle systems governed by SODEs, and (2) interacting particle systems governed by nonlocal flocking dynamics. For inferring dynamics governed by SODE, most algorithms are based on observations of a sample path, and perform the inference by calculating or approximating the probability of observations conditioned on the system parameters, using Euler-Maruyama discretization~\cite{elerian2001likelihood, eraker2001mcmc}, Kalman filtering~\cite{sarkka2015posterior}, variational Gaussian process-based approximation~\cite{archambeau2007gaussian, vrettas2015variational}, etc. There are other works inferring the stochastic dynamics with the (estimated) densities from the perspective of Fokker-Planck equations~\cite{bakarji2021data}, but this approach is hard to scale to high dimensional problems. For flocking dynamic systems, other researchers infer the key parameters in the influence function by fitting the velocity field via Bayesian optimization~\cite{mao2019nonlocal}, or fitting the density field by solving a system of transformed PDEs and optimizing the parameters~\cite{mavridis2020learning}. But these methods require the knowledge of the initial condition, including the distribution and velocity field.

\section*{Problem Setup}\label{sec:problem}

\begin{figure*}[ht]
    \centering
    \includegraphics[width = 0.9\textwidth]{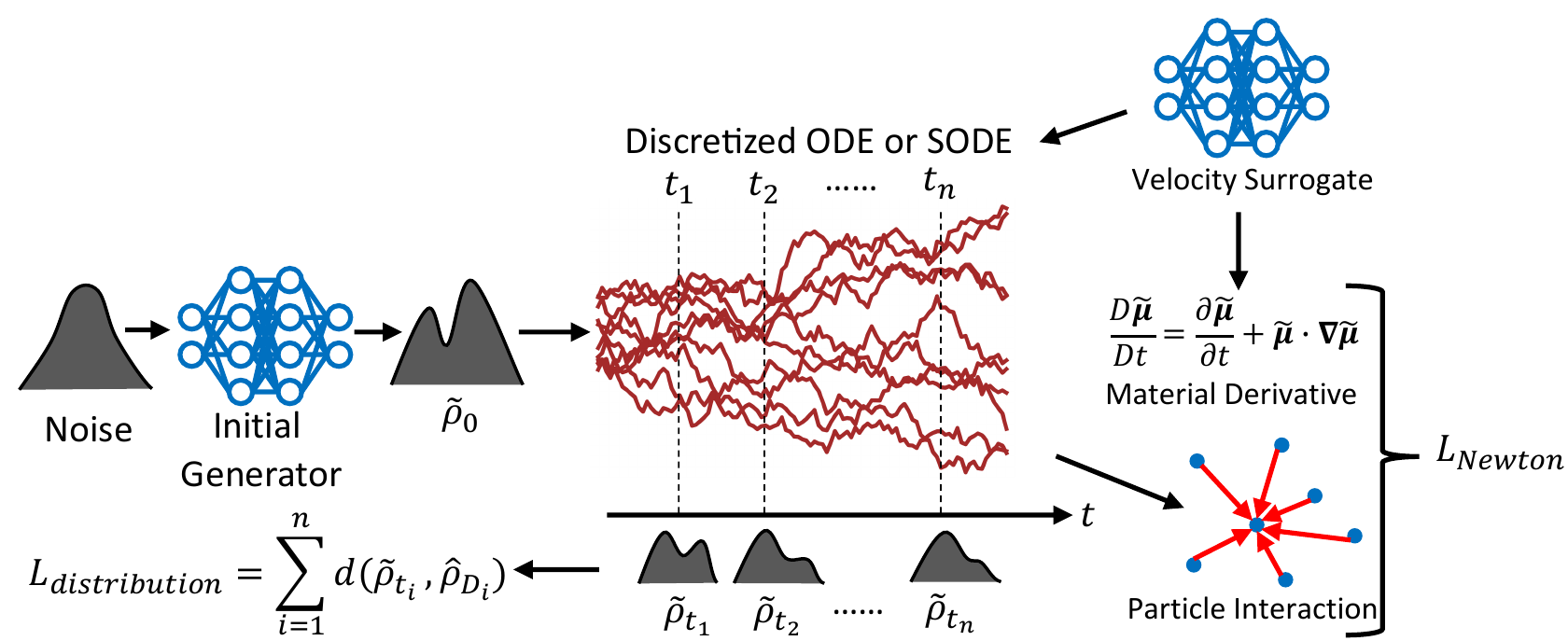}
    \caption{Schematic of the generative model for ensemble-regression. We first use a feed-forward neural network to map the input noise to the output $\tilde{\bm{X}}_0 \in \mathbb{R}^d$, whose distribution $\tilde{\rho}_0$ is intended to approximate the initial distribution $\rho_0$. Subsequently, we apply the discretized ODE or SODE with trainable parameters to generate particle trajectories $\tilde{\bm{X}}_t$ for $t>0$ with $\tilde{\bm{X}}_0$ as the initial condition (brown curves). Differences between the distributions of $\tilde{\bm{X}}_t$ and the snapshots from data are quantified as (a part of) our loss function. For interacting particle systems, a neural network is employed as the velocity surrogate to generate the particle trajectories. Another loss function term, namely the Newton loss $L_{\text{Newton}}$, is defined to quantify the consistency between the particle accelerations derived from the material derivatives, and the forces calculated from the particle interactions. With $L_{\text{Newton}}$, we enforce the inferred velocity to be consistent with our partial knowledge of the dynamics.}
    \label{fig:schematic1}
\end{figure*}

We start from a system consisting of an ensemble of particles, where the dynamics of the particles is independent of other particles. The most commonly used stochastic processes in physics and biology are diffusion processes and L\'evy processes; in particular, the particle dynamics is governed by the stochastic differential equation:
\begin{equation}\label{eqn:sode_diff}
    \begin{aligned}
    d\bm{X}_t = \bm{\mu}_t dt &+ \boldsymbol{\sigma}_t d\bm{B}_t, \quad t \ge 0, 
    \end{aligned}
\end{equation}
for diffusion processes, and 
\begin{equation}\label{eqn:sode_levy}
    \begin{aligned}
    d\bm{X}_t = \bm{\mu}_t dt &+  \boldsymbol{\sigma}_t d\bm{L}_t^\alpha, \quad t \ge 0, 
    \end{aligned}
\end{equation}
for L\'evy processes, where $\bm{X}_t \in \mathbb{R}^d$ is the position of a particle at time $t$ with $\bm{X}_0$ randomly drawn from the initial distribution $\rho_0$, $\bm{\mu}_t \in \mathbb{R}^d$ is the deterministic drift, $\bm{B}_t$ and $\bm{L}_t^\alpha$ are the $d$-dimensional standard Brownian motion and the $\alpha$-stable symmetric L\'evy process, respectively, and $\boldsymbol{\sigma}_t \in \mathbb{R}^{d\times d}$ is the diffusion coefficient. In the mean field limit, the density of the particles would be governed by Fokker-Planck equations or fractional Fokker-Planck equations. For simplicity, in this work we assume that $\bm{\mu}_t$ is a function of $\bm{X}_t$ while $\bm{\sigma}_t$ is constant, but, in principle, our proposed method can also tackle the time-dependent case.

Apart from systems consisting of independent particles, we further consider systems where particles interact with each other and the particle distributions exhibit more complicated behavior. As an example, we consider the Cucker-Smale particle model~\cite{cucker2007emergent}, which describes individuals in flocks with nonlocal interactions. The individual or particle motion is characterized by the following governing equations:
\begin{equation}\label{eqn:ode_cs}
    \begin{aligned}
    d\bm{X}_t^{(i)}/dt &= \bm{\mu}_t^{(i)}, \\
    d\bm{\mu}_t^{(i)}/dt = \frac{1}{N-1}\sum_{j\ne i} &\phi(\Vert\bm{X}_t^{(i)} - \bm{X}_t^{(j)}\Vert)(\bm{\mu}_t^{(j)}-\bm{\mu}_t^{(i)}),
    \end{aligned}
\end{equation}
where the superscripts denote the index of the particles, $N\gg 1$ is the number of particles in the system, and $\phi$ is the influence function. Here, we set
\begin{equation}\label{eqn:influence}
    \begin{aligned}
   \phi(r) = c_{d,\alpha} |r|^{-(d+\alpha)}, c_{d,\alpha} = \frac{\alpha\Gamma(\frac{d+\alpha}{2})}{2\pi^{\alpha+d/2}\Gamma(1-\alpha/2)},
    \end{aligned}
\end{equation}
where $\Gamma$ is the gamma function, $d$ is the dimension of $\bm{X}_t^{(i)}$, and $\alpha$ is the factor that characterizes the decay rate of particle interactions as the distance $r$ grows. In the mean field limit, as $N$ grows the density of the particles is governed by the following fractional PDE:
\begin{equation}\label{eqn:macro_cs}
    \begin{aligned}
    \frac{\partial \rho}{\partial t} + \nabla\cdot(\rho \bm{u})&=0, \\
    \frac{\partial\bm{u}}{\partial t} + \bm{u} \cdot \nabla \bm{u} &= [\mathcal{L}, \bm{u}](\rho), \\
     [\mathcal{L}, \bm{u}](\rho)(\bm{x}) = \text{p.v. } c_{d, \alpha} \int_{\mathbb{R}^d} &\frac{\bm{u}(\bm{y}) -\bm{u}(\bm{x})}{\Vert \bm{x}-\bm{y} \Vert^{d+\alpha}}\rho(\bm{y})d\bm{y},
    \end{aligned}
\end{equation}
where $\rho$ is the density, $\bm{u}$ is the velocity field, and p.v. means the principle value. Note that $\alpha$ determines the fractional order.

We consider the scenario where the data available are the observations of the particles coordinates at different time instants $\{t_i\}_{i=1}^n$ with $0 \le t_1 < t_2 ... <t_n$, namely ``snapshots''. In other words, the data $\mathcal{D}_i$ for time $t_i$ will be a set of samples drawn from $\rho_{t_i}$, the particle distributions at $t_i$. 
If $\{\mathcal{D}_i\}_{i=1}^n$ are observations of the same set of particles and we can distinguish these particles, we refer to these cases as the ``paired'' observations since we can pair the particles from different snapshots. In other cases, $\{D_i\}_{i=1}^n$ are observations of different sets of particles, or we cannot distinguish the particles. We refer to these cases as the ``unpaired'' observations. In this paper we mainly focus on the unpaired cases, but we also present some work on the paired cases in Paired Observations section, where we introduce how to reduce the effective dimensionality and we prove a theorem to support the introduced method. 
%
For independent particle systems, we assume that we are unaware of $\bm{\mu}_t$ and $\boldsymbol{\sigma}_t$, or we may only know the parametrized forms of these terms, and we aim to infer these terms directly or through a proper parametization. For interacting particle systems, we assume we know the forms of the dynamics and the influence function, i.e., Equation~\ref{eqn:ode_cs} and \ref{eqn:influence}, but need to infer the velocity field and the key parameter $\alpha$.



\section*{Physics-informed Generative Model}\label{sec:Methodology}

To perform ensemble regression, we will use a generative model with deep neural networks to represent a curve in the probability measure space. Note that the curve is determined by the initial condition $\rho_0$ and the governing equation, hence, the generative model  consists of two parts. In the first part, we employ a feed-forward neural network $G$ to represent $\rho_0$. In particular, $G$ takes samples from random noise $\mathcal{N}$, e.g., Gaussian noise, as input and the generated distribution ${G}_{\#}\mathcal{N}$ is intended to approximate $\rho_0$,  where $_{\#}$ denotes the push forward operator. The second part of the generative model will generate ``fake'' particle trajectories with initial coordinates generated by $G$, and the marginal distribution $\tilde{\rho}_t$ at time $t \ge 0$ will be used to represent $\rho_t$ in the curve.

Our knowledge of the physics will be incorporated into the generative model in two ways. For non-interacting particle systems, our knowledge including the form of the drift and diffusion as well as the type of stochastic processes will be directly embedded into the architecture of the generative model in the second part. For interacting particle systems, while we have no direct knowledge of the velocity field, we will enforce the inferred velocity to be consistent with our knowledge of the dynamics with a physics-based soft penalty. In the following, we introduce the details of the learning algorithm for both types of systems separately. A schematic overview of the method is shown in Figure~\ref{fig:schematic1}.

\subsection*{Non-interacting Particle Systems}

For non-interacting particle systems, generating particle trajectories is relatively straightforward by directly applying the discretization of governing SODE or ODE. For example, if the particle trajectories are diffusion processes or L\'evy processes, we can use the following forward Euler scheme:
\begin{equation}\label{eqn:num_diff}
    \begin{aligned}
    \tilde{\bm{X}}_0 &= G(\bm{z}), \quad \bm{z}\sim \mathcal{N} \\
    \tilde{\bm{X}}_{(i+1)\Delta t} &= \tilde{\bm{X}}_{i\Delta t} + \bm{\mu}_t \Delta t + \bm{\sigma}_t \sqrt{\Delta t}\bm{\xi}_i, \quad i\ge 0,\\
  \text{or }  \tilde{\bm{X}}_{(i+1)\Delta t} &= \tilde{\bm{X}}_{i\Delta t} + \bm{\mu}_t \Delta t + \bm{\sigma}_t \Delta t^{1/\alpha}\bm{\zeta}_{\alpha,i}, \quad i\ge 0,
    \end{aligned}
\end{equation}
where $\Delta t$ is the time step, $\bm{\xi}_i$ and $\bm{\zeta}_{\alpha,i}$ are i.i.d. standard Gaussian random variables and $\alpha$-stable random variables, respectively. We could represent $\bm{\mu}_t$ and $\bm{\sigma}_t$ with neural networks if they are unknown, or represent the unknown parameters with trainable variables if we know their parameterized form.

Our target is to tune the trainable variables in the generative model, including the parameters in $G$ and those for parameterizing $\bm{\mu}_t$ and $\bm{\sigma}_t$, so that the generated marginal distribution $\tilde{\rho}_{t_i}$ fits the data $\mathcal{D}_i$ for each $i$. We thus need to define a distance function $\mathsf{d}(\cdot, \cdot)$ to measure the difference between the two input distributions, which can be estimated from samples drawn from the two distributions. Consequently the loss function in non-interacting particle systems is defined as:
\begin{equation}\label{eqn:loss_for}
    \begin{aligned}
    L_{\text{distribution}} &= \sum_{i=1}^{n} \mathsf{d}(\tilde{\rho}_{t_i}, \hat{\rho}_{\mathcal{D}_i}), 
    \end{aligned}
\end{equation}
where $\hat{\rho}_{\mathcal{D}_i }$ is the empirical distribution induced from the sample set $\mathcal{D}_i$. We will refer to it as the distribution loss.

There could be many ways to define $\mathsf{d}$, including Wasserstein distances, maximum mean discrepancy, etc. In this paper we use two approaches to define $\mathsf{d}$.

First we choose the squared sliced Wasserstein-2 (SW) distance~\cite{santambrogio2017euclidean, deshpande2018generative} as the function $\mathsf{d}$:  \begin{equation}
    \mathsf{d}(\mu, \nu) =  SW_2^2(\mu, \nu) := \int_{\mathbb{S}^{d-1}} W_2^2({\pi_{\bm{e}}}_{\#}\mu,{\pi_{\bm{e}}}_{\#}\nu )d\mathcal{H}^{d-1}({\bm{e}}),
\end{equation}
where $W_2$ is the Wasserstein-2 distance, and ${\pi_{\bm{e}}}_{\#}\mu$ is the one dimensional distribution induced by projecting $\mu$ onto the direction $\bm{e}$, defined by
\begin{equation}
    ({\pi_{\bm{e}}}_{\#}\mu)(A) = \mu(\{\bm{x} \in \mathbb{R}^d: {\bm{e}}\cdot \bm{x} \in A\}), \forall A \in \mathcal{B}(\mathbb{R}),
\end{equation}
similarly for ${\pi_{\bm{e}}}_{\#}\nu$. $\mathcal{H}^{d-1}$ is the uniform Hausdorff measure on the sphere $\mathbb{S}^{d-1}$. In short, the squared sliced Wasserstein-2 distance is the expectation of the squared Wasserstein-2 distance between the two input measures projected onto uniformly random directions. The sliced Wasserstein-2 distance is exactly the Wasserstein-2 distance for one-dimensional distributions, but is easier to calculate for higher dimensional distributions. We present the details of the estimation in the Supplementary Information section S1.
    
We also use GANs to obtain $\mathsf{d}$. The generative model we introduced above can generate ``fake'' samples $\tilde{\bm{X}}_{t_i}$ at $\{t_i\}_{i=1}^n$, and for each $i$, we use a discriminator $D_i$ to discriminate generated samples $\tilde{\bm{X}}_{t_i}$ and real samples $\bm{X}_{t_i}$ from $\mathcal{D}_i$. The adversarial loss given by $D_i$ can act as a metric of the difference between $\tilde{\rho}_{t_i}$ and $\hat{\rho}_{\mathcal{D}_i}$. In particular, we use WGAN-GP~\cite{gulrajani2017improved} as our version of GANs in our paper, with 
    \begin{equation} \label{eqn:lossforWGANGP_G}
    \begin{aligned}
        \mathsf{d}(\tilde{\rho}_{t_i}, \hat{\rho}_{\mathcal{D}_i}) &= -\mathbb{E}_{\tilde{\bm{X}}_{t_i}\sim \tilde{\rho}_{t_i}}[D_i(\tilde{\bm{X}}_{t_i})] + \mathbb{E}_{\bm{X}_{t_i}\sim \hat{\rho}_{\mathcal{D}_i}}[D_i(\bm{X}_{t_i})],
    \end{aligned}
    \end{equation}
and the loss function for each discriminator $D_i$ is defined as
    \begin{equation} \label{eqn:lossforWGANGP_D}
    \begin{aligned}
	    L_{D_i} =& \mathbb{E}_{\tilde{\bm{X}}_{t_i}\sim \tilde{\rho}_{t_i}}[D_i(\tilde{\bm{X}}_{t_i})] - \mathbb{E}_{\bm{X}_{t_i}\sim \hat{\rho}_{\mathcal{D}_i}}[D_i(\bm{X}_{t_i})] \\
	    & + \lambda \mathbb{E}_{\hat{\bm{x}}_i\sim \rho_{\hat{\bm{x}}_i} } [(\Vert\nabla_{\hat{\bm{x}}_i} D_i(\hat{\bm{x}}_i) \Vert_2 - 1)^2], \text{  for } i = 1,2...n,
    \end{aligned}
    \end{equation}
    where $\rho_{\hat{\bm{x}}_i}$ is the distribution generated by uniform sampling on straight lines between pairs of points sampled from $\tilde{\rho}_{t_i}$ and $\hat{\rho}_{\mathcal{D}_i}$, and $\lambda = 0.1$ is the gradient penalty coefficient. 
    Here, $\mathsf{d}(\tilde{\rho}_{t_i}, \hat{\rho}_{\mathcal{D}_i})$ can be mathematically interpreted as the Wasserstein-1 distance between the two input distributions. WGAN-GP is computationally more expensive than the sliced Wasserstein distance since we need to train the generative model and the discriminators iteratively, but it is more scalable to high dimensional problems, for which we made a comparison in the supplementary information section S1.


\begin{figure*}[ht]
\centering
    \begin{subfigure}{1.0\textwidth}
    \centering
    \includegraphics[width =0.19 \textwidth]{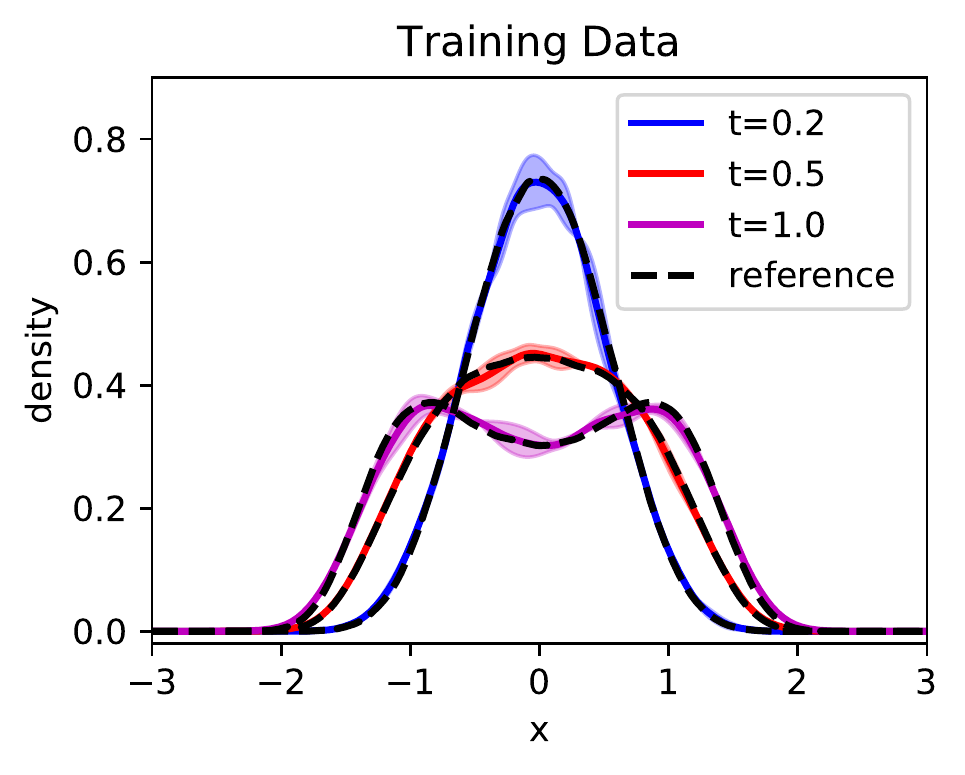}
    \includegraphics[width =0.19 \textwidth]{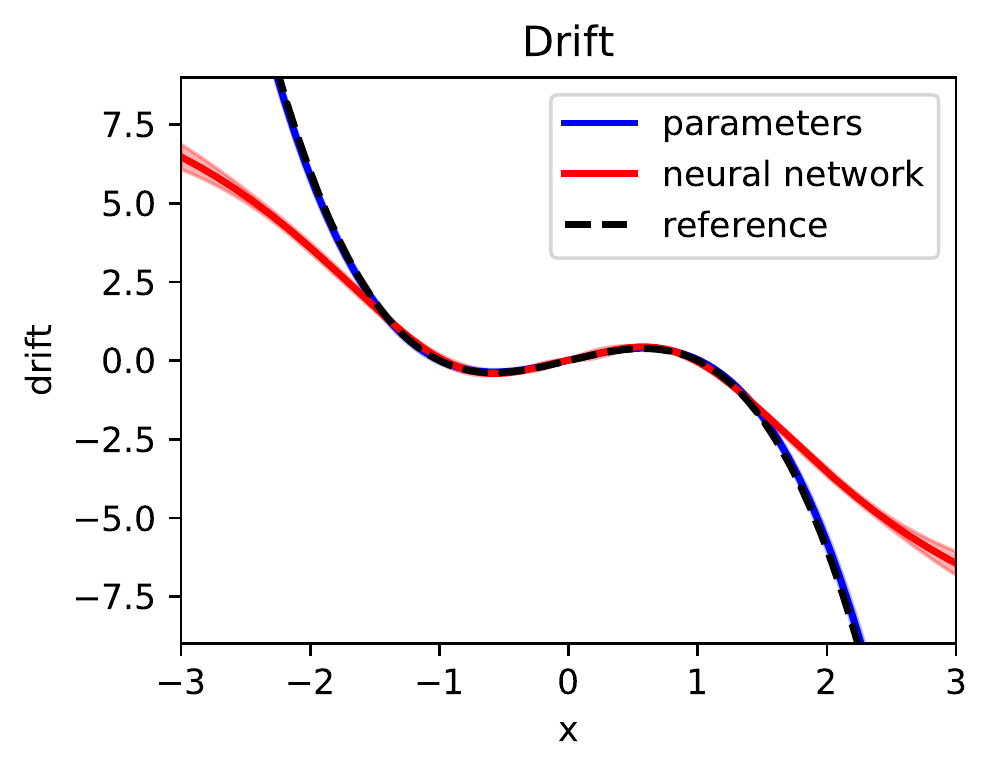}
    \includegraphics[width =0.19 \textwidth]{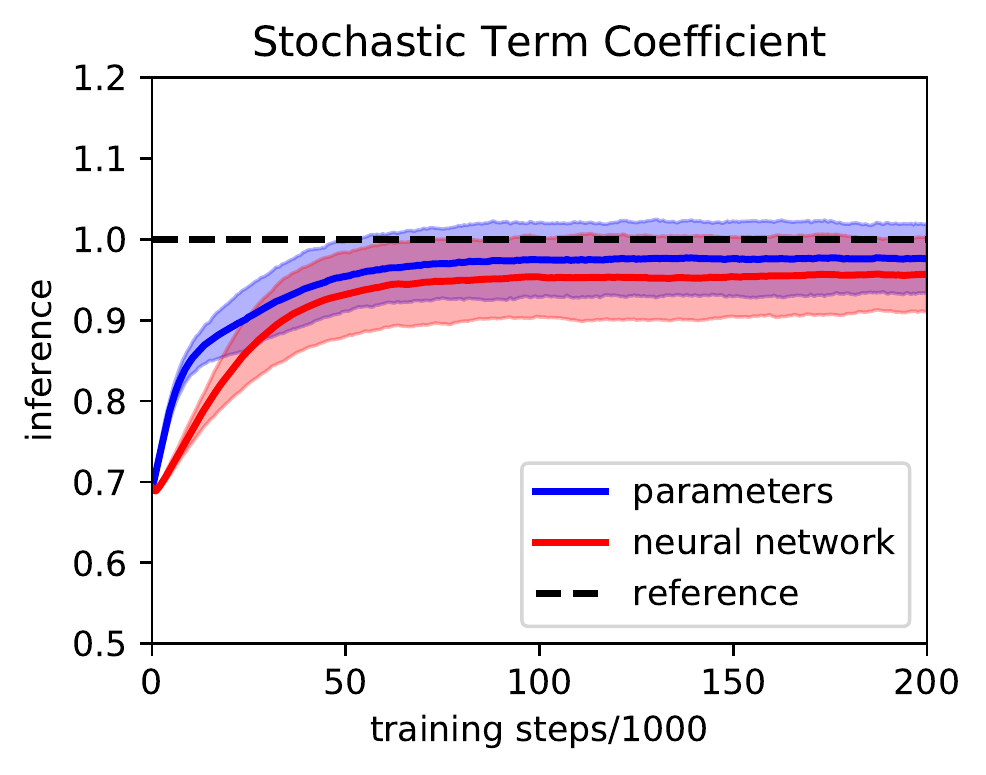}
    \includegraphics[width =0.19 \textwidth]{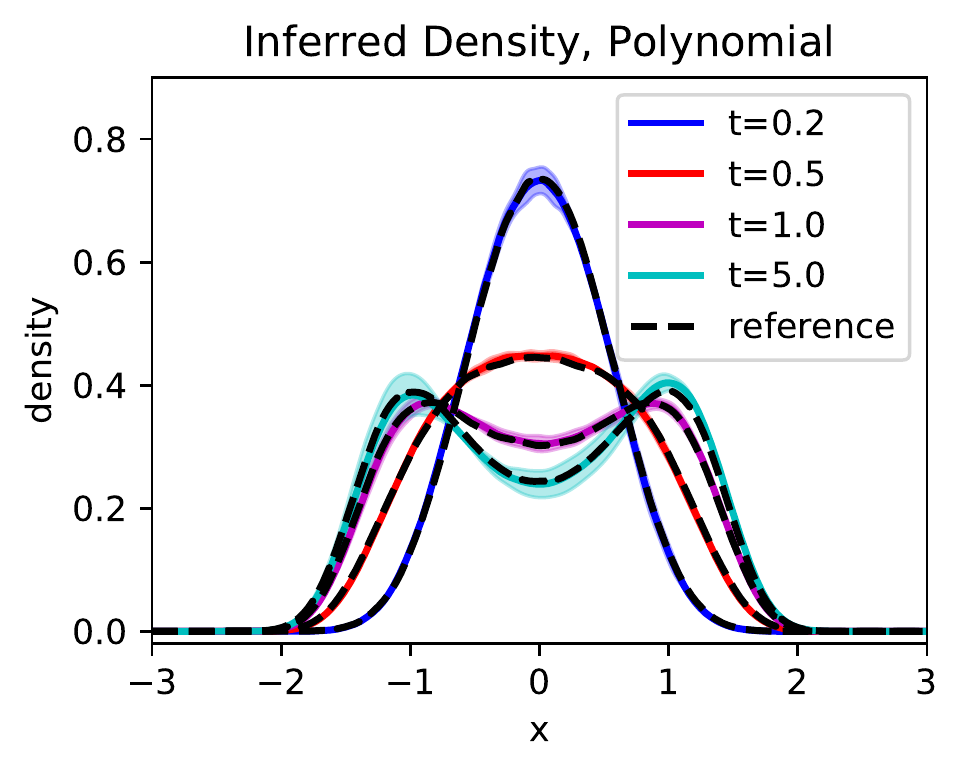}
    \includegraphics[width =0.19 \textwidth]{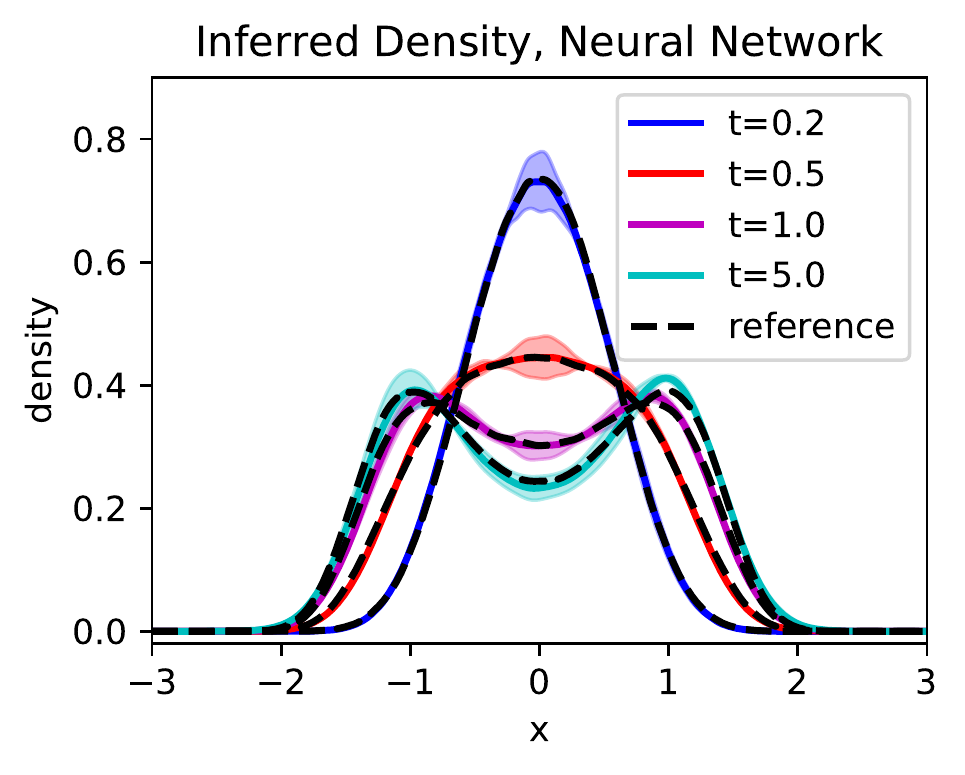}
    \caption{}
    \label{fig:Inverse1DBrown}
    \end{subfigure}
    \begin{subfigure}{1.0\textwidth}
    \centering
    \begin{subfigure}{0.2\textwidth}
    \centering
    \includegraphics[width = \textwidth]{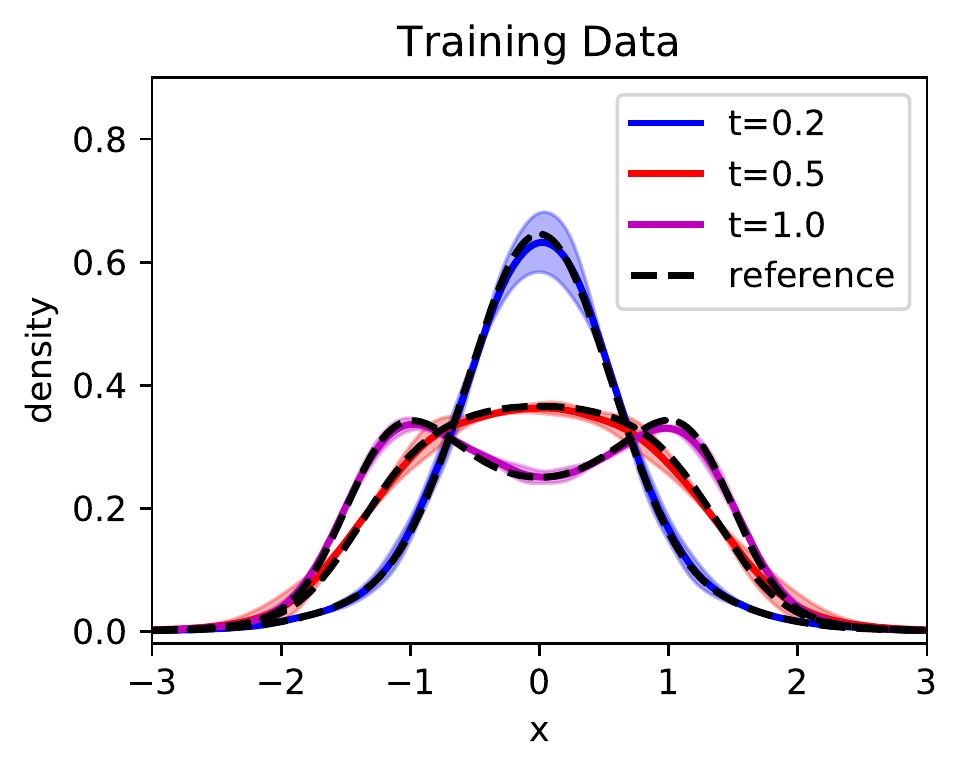}
    \end{subfigure}
    \begin{subfigure}{0.79\textwidth}
    \centering
    \includegraphics[width =0.24 \textwidth]{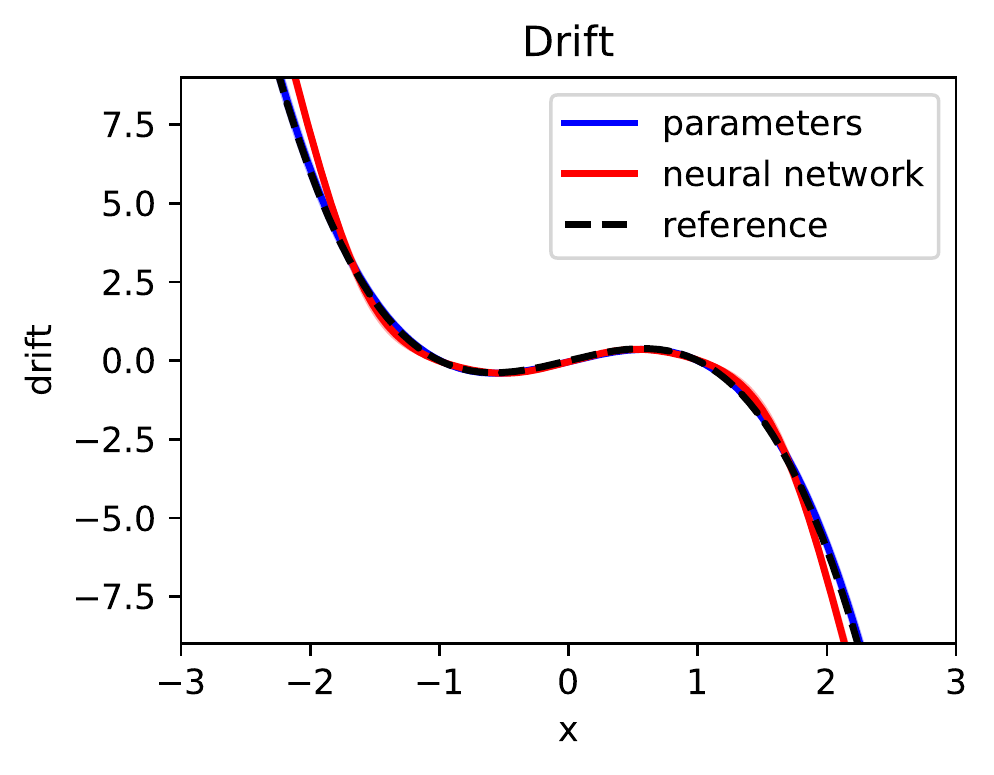}
    \includegraphics[width =0.24 \textwidth]{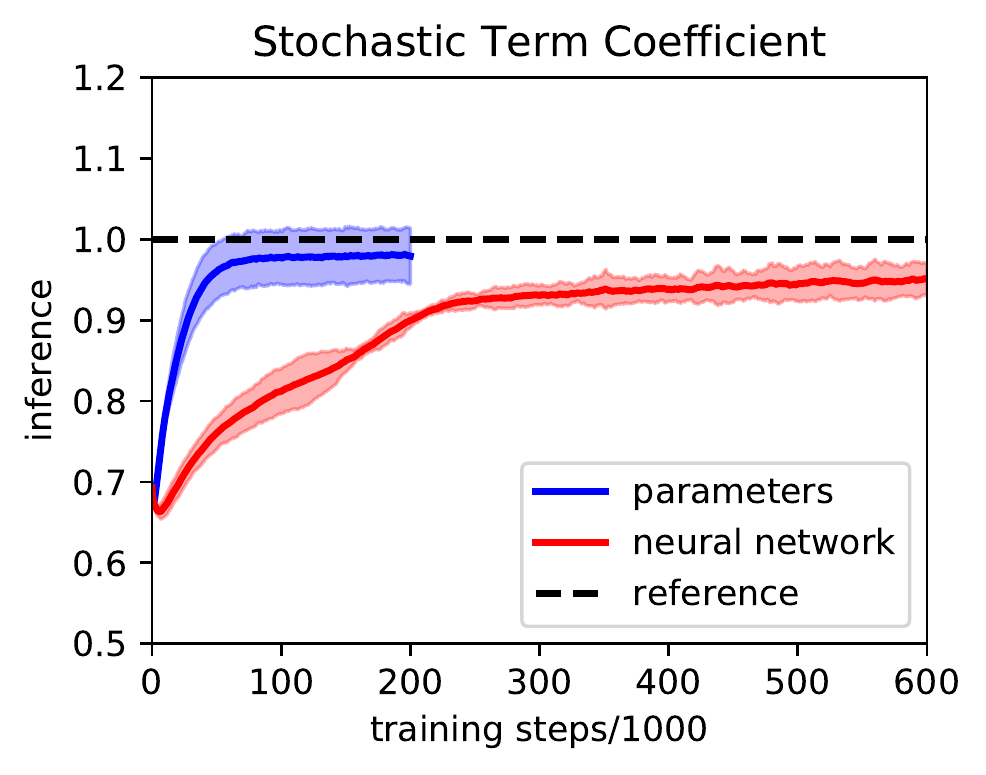}
    \includegraphics[width =0.24 \textwidth]{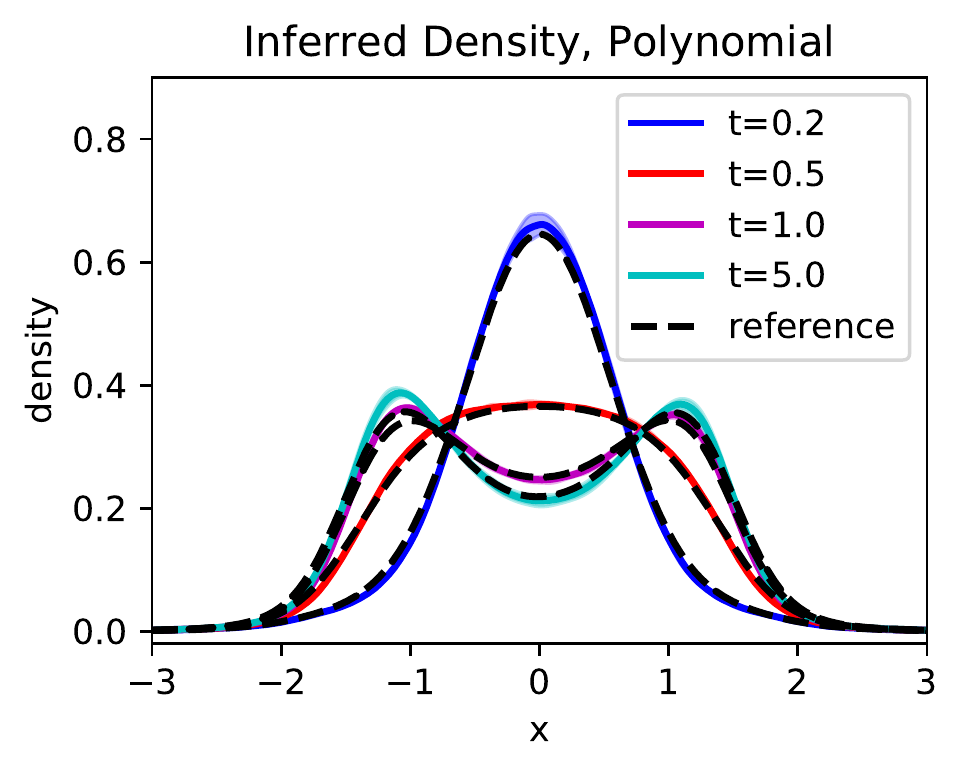}
    \includegraphics[width =0.24 \textwidth]{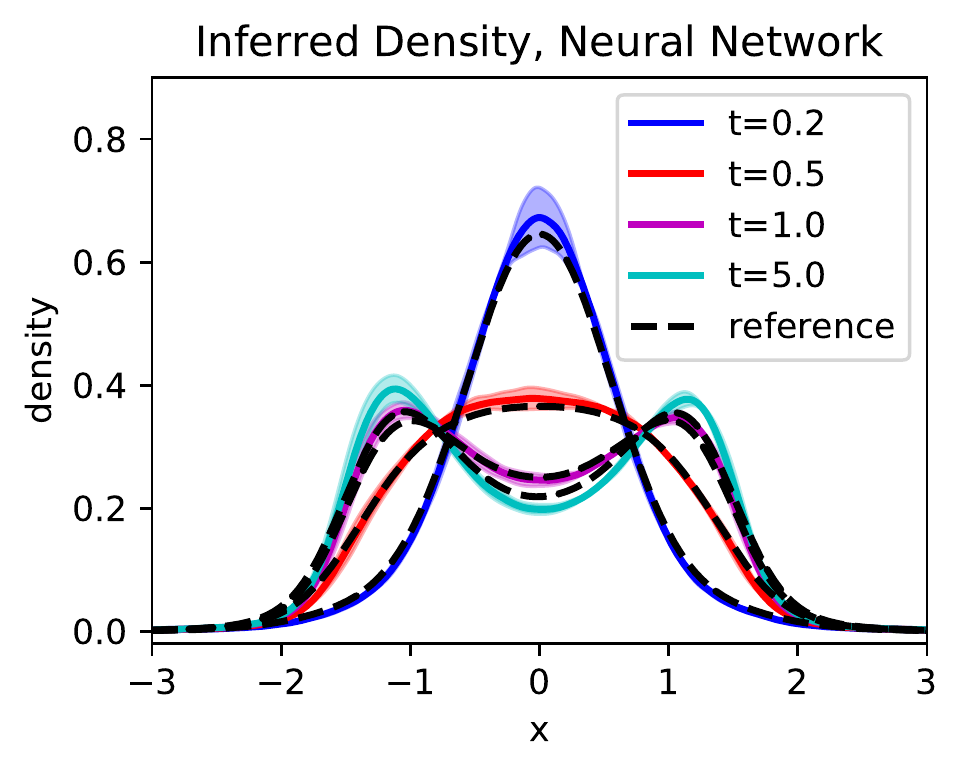}
    
    \includegraphics[width =0.24
    \textwidth]{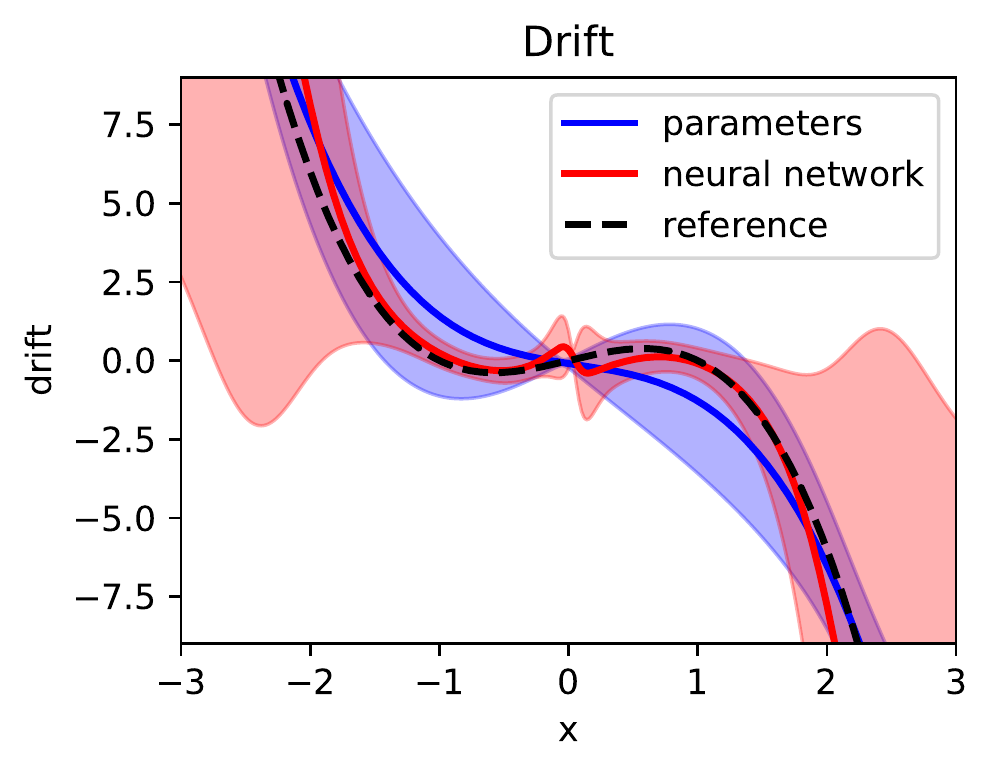}
    \includegraphics[width =0.24 \textwidth]{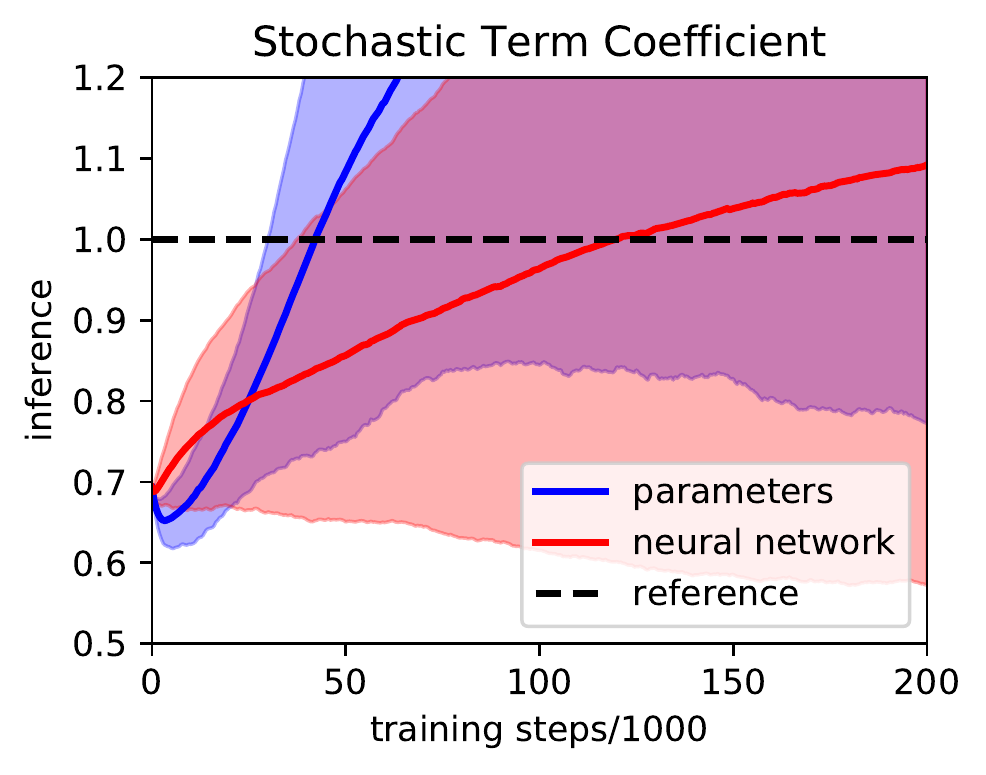}
    \includegraphics[width =0.24 \textwidth]{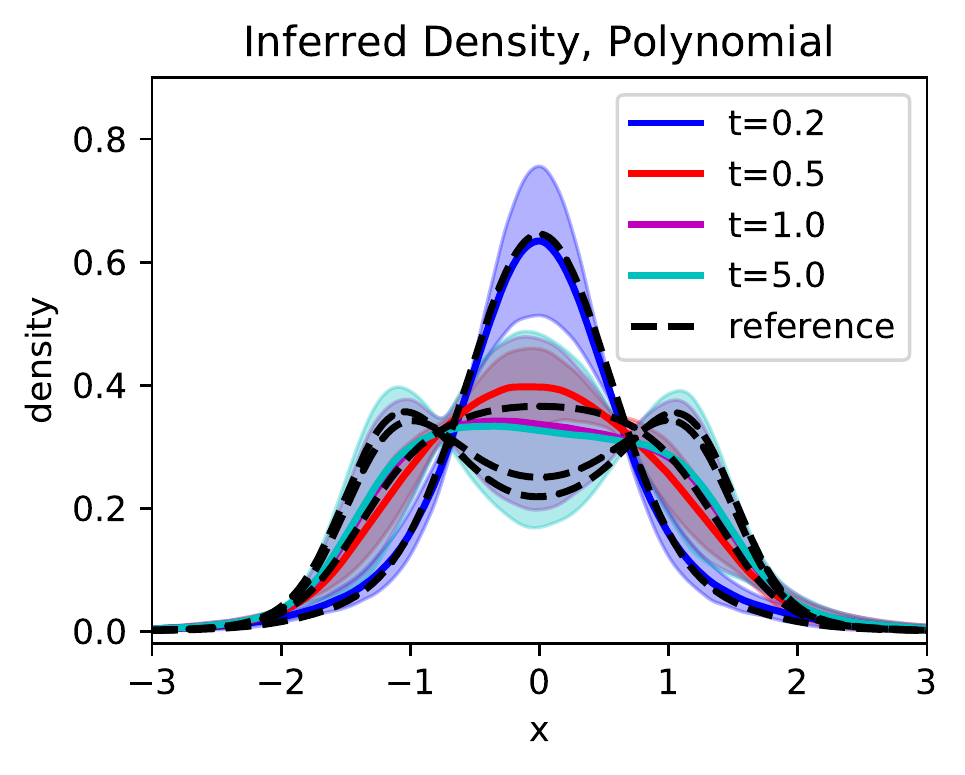}
    \includegraphics[width =0.24 \textwidth]{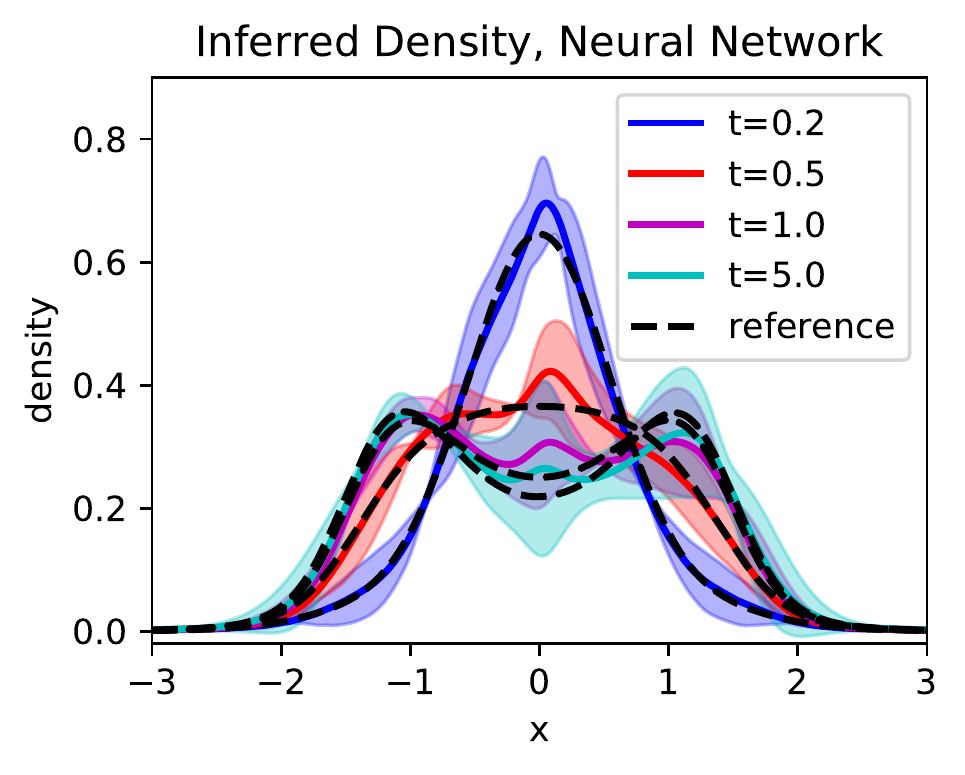}
    \end{subfigure}
    \caption{}
    \label{fig:Inverse1DLevy}
    \end{subfigure}
    
    \caption{Results for 1D problems with (a) Brownian Motion and (b) L\'evy process as the stochastic term. The two rows of (b) show the results with (top row) and without (bottom row) bounded map pre-processing, respectively. The first column visualizes densities estimated from training data. The second column shows the inferred drift functions in the end of training. The third column shows the inferred coefficient for the stochastic term during training. The fourth and fifth columns show the inferred densities in the two cases of using a cubic polynomial and a neural network to parameterize the drift, respectively. The solid lines and shaded areas refer to the mean and two standard deviations of three runs with different data. }
    \label{fig:Inverse1D}
\end{figure*}

\subsection*{Interacting Particle Systems}

The straightforward discretization of the governing Equations~\ref{eqn:ode_cs} cannot be directly applied to generate ``fake'' particle trajectories in interacting particle systems, especially for those with strong nonlocal interactions, since the computational cost for one time step would be $O(N^2)$, where $N$ is the number of particles required so that the system is close to the mean field limit, which makes the learning almost intractable. Instead, we propose to first employ a neural network $\tilde{\bm{\mu}}$ as a surrogate model of the velocity field $\bm{\mu_t}$ in the spatial-temporal domain to generate trajectories, and then apply an additional penalty in the loss function to enforce the velocity field $\tilde{\bm{\mu}}$ to be consistent with the Equations~\ref{eqn:ode_cs}. In this paper we use the forward Euler scheme to generate trajectories:
\begin{equation}\label{eqn:num_flock}
    \begin{aligned}
    \tilde{\bm{X}}_0 &= G(\bm{z}), \quad \bm{z}\sim \mathcal{N} \\
    \tilde{\bm{X}}_{(i+1)\Delta t} &= \tilde{\bm{X}}_{i\Delta t} + \tilde{\bm{\mu}}(\tilde{\bm{X}}_{i\Delta t}, i\Delta t) \Delta t, \quad i\ge 0,
    \end{aligned}
\end{equation}
where $\Delta t$ is the time step. By employing the surrogate model for velocity, the computational cost for generating the particle trajectories is linear, instead of quadratic, with the number of particles.

To enforce $\tilde{\bm{\mu}}$ to be consistent with our knowledge of the dynamics of Equations~\ref{eqn:ode_cs}, we first randomly generate $K$ particle trajectories with Equation~\ref{eqn:num_flock}, denoted as $\{\{\tilde{\bm{X}}^{(k)}_{i\Delta t}\}_{i=0}^{I}\}_{k=1}^K$, and then calculate the ``forces'' applied on these $k$ particles by another $M$ randomly generated particles $\{\{\tilde{\bm{Y}}^{(m)}_{i\Delta t}\}_{i=0}^{I}\}_{m=1}^M$, using the formula of interactions in Equations~\ref{eqn:ode_cs}, where we replace the unknown parameter $\alpha$ with a trainable variable. Meanwhile, from the velocity field $\tilde{\bm{\mu}}$ we can directly calculate the accelerations for these $k$ particles using a material derivative. The forces and accelerations should be consistent with each other. We can thus define an $L_2$ loss: 
\begin{equation}\label{eqn:newton}
    \begin{aligned}
    &L_{\text{Newton}} = \frac{1}{I+1}\sum_{i=0}^I\frac{1}{K}\sum_{k=1}^K(\bm{F}_i^{(k)}- \bm{a}_i^{(k)})^2,\\
    \bm{F}_i^{(k)} =& \frac{1}{M}\sum_{m=1}^M \phi(\Vert\tilde{\bm{X}}^{(k)}_{i\Delta t} - \tilde{\bm{Y}}^{(m)}_{i\Delta t}\Vert)(\tilde{\bm{\mu}}(\tilde{\bm{Y}}^{(m)}_{i\Delta t}, i\Delta t)- \tilde{\bm{\mu}} (\tilde{\bm{X}}^{(k)}_{i\Delta t}, i\Delta t)),\\
    \bm{a}_i^{(k)} =& \frac{D\tilde{\bm{\mu}}(\tilde{\bm{X}}^{(k)}_{i\Delta t}, i\Delta t))}{Dt} \\=&  \frac{\partial \tilde{\bm{\mu}}(\tilde{\bm{X}}^{(k)}_{i\Delta t}, i\Delta t))}{\partial t} + \tilde{\bm{\mu}}(\tilde{\bm{X}}^{(k)}_{i\Delta t}, i\Delta t)) \cdot \nabla \tilde{\bm{\mu}}(\tilde{\bm{X}}^{(k)}_{i\Delta t}, i\Delta t)),
    \end{aligned}
\end{equation}
and we name it as the Newton loss. The time span  $[0,I\Delta t]$ should cover the time for the latest snapshot $t_n$, and in this paper we simply set $I$ as the ceiling of $t_n / \Delta t$. To reduce computational cost, the average over $I+1$ time steps in $L_{\text{Newton}}$ can also be approximated by mini-batch, i.e., taking average over $B$ random time steps in each training step.

For one time step, the computational cost is $O(KM)$ for calculating the force terms and $O(K)$ for the acceleration terms. Note that $M$ should be larger than or equal to $N$, but $K$ can be much less than $N$, thus the total computational cost for $L_{\text{Newton}}$ can be much less than $O(N^2)$, and this makes the learning tractable.

In the end, the loss for interacting particle systems will be a combination of the Newton loss and the distribution loss:
\begin{equation}\label{eqn:loss_flock}
    \begin{aligned}
    L = \eta L_{\text{Newton}} + \frac{1}{n}  \sum_{i=1}^{n} \mathsf{d}(\tilde{\rho}_{t_i}, \hat{\rho}_{\mathcal{D}_i}),
    \end{aligned}
\end{equation}
where the weight $\eta$ is set as 1 in this paper.

\subsection*{Modifications to the Distributions}\label{sec:modication}

In order to provide flexibility and adapt the method to different problems, we can also modify the generated distribution and real data before feeding them to the distance function $\textsf{d}$ and calculate the distribution loss. We present some examples here.

In some cases $\rho_t$ may have heavy tails, e.g., when the particle trajectories correspond to a L\'evy process. The heavy tails could spoil the training, since the rare outlier samples could dominate the loss function. We can choose a suitable bounded map $h: \mathbb{R}^d \rightarrow \mathbb{R}^d$ to preprocess both the generated samples and the real data so that the heavy tails are removed.
If the observations of the particle coordinates are noisy, we can also perturb the generated samples to add artificial noise. By scaling the artificial noise with trainable variables, the size of the noise in observations can also be learned.
If we can only make observations of particles in a specific domain, i.e., the observations are truncated, we will also filter the generated samples using a corresponding mask so that the effective domain for the generated samples and observations are the same.

\section*{Computational Results}\label{sec:results}

All the neural networks in the main text are feed-forward neural networks with three hidden layers, each of width 128, except the discriminator for high dimensional problems which is a ResNet with 5 hidden layers, each of width 256, and shortcut connections skipping one hidden layer. We use the leaky ReLu \cite{maas2013rectifier} activation function with $\alpha=0.2$ for the discriminator neural networks in WGAN-GP, while using the hyperbolic tangent activation function for other neural networks. The neural network weights are initialized with the uniform Xavier initializer, and the biases are initialized as 0. The variables for parameterizing the diffusion and noise size are initialized as 0 (before been activated by the softplus function). The drift parameters are initialized as $0$ in 1D problems, $-0.5$ in high dimensional problems, and randomly initialized with standard Gaussian distribution in 2D problems. The batch size for the distribution loss is 1000 for non-interacting particle systems, except in the 2D problem with truncated observations, where we generate 10000 samples to compensate the loss of samples due to filtering. We use the Adam optimizer~\cite{kingma2014adam} with $lr =0.0001,\beta_1 = 0.9, \beta_2 = 0.999$ for the cases using the sliced Wasserstein distance, while $lr =0.0001,\beta_1 = 0.5, \beta_2 = 0.9$ for the cases with WGAN-GP. The time step in the generative model is set as $\Delta t = 0.01$.

\subsection*{1D Problems: Brownian Motion and L\'evy Process}

In this section, we test our method on the 1D problems using the sliced Wasserstein distance as $\mathsf{d}$. We first consider the SODE with Brownian motion and then with the $\alpha$-stable symmetric L\'evy process as the stochastic term: 
\begin{equation}\label{eqn:1DInv}
    \begin{aligned}
    dX_t &= (X_t - {X_t}^3) dt + dB_t,\\ 
    \text{or} \quad dX_t &= (X_t - {X_t}^3) dt +  dL_{t}^{\alpha}, \quad t \ge 0, 
    \end{aligned}
\end{equation}
where $\alpha = 1.5$, with $\rho_0 =  \mathcal{N}(0,0.04)$.

Note that we have no knowledge of $\rho_0$ in learning. Also, we assume we know that the stochastic term has a constant coefficient but we need to infer it; here the ground truth is $1.0$. To represent the unknown coefficient, we use a trainable variable rectified by a softplus function $\text{softplus}(x) = \ln(1+ e^x)$, which ensures positivity. As for the drift $\mu(x) = x- x^3$, we consider the following two cases for both SODE problems. In case 1, we know that the drift $\mu(x)$ is a cubic polynomial of $x$. In this case, we use a cubic polynomial $a_0 + a_1x + a_2 x^2 + a_3 x^3$ to parameterize $\mu(x)$, and want to infer the four coefficients $a_0, a_1, a_2$ and $a_3$. In case 2, we only know that the drift $\mu(x)$ is a function of $x$ and hence we use a neural network to parameterize $\mu(x)$.

For the SODE problem with Brownian motion, we first prepare a pool of $10^5$ sample paths, then independently draw $10,000$ samples at $t = 0.2, 0.5, 1.0$ from the pool as our training data. The results for both cases of drift parameterization are illustrated in Figure~\ref{fig:Inverse1DBrown}. In the figure, all the densities are estimated using Gaussian kernel density estimation with Scott's rule, and the inferred densities and the reference densities come from $10^6$ samples produced by the generative model or simulation. 

Both cases of drift parameterization provide a good inference of the diffusion coefficient, with an error less than $7\%$ after $2\times 10^5$ training steps, in all the runs. When using the cubic polynomial parameterization, the inferred drift fits well with the ground truth, with the relative $L_2$ error about $3\%$ in $[-3,3]$ averaged over three runs. The inferred drift using the neural network only fits the ground truth in the region between -1.5 and 1.5. This is reasonable since the particles mainly concentrate in this region, and we can hardly learn the drift outside this region, where the training data are sparse. However, we note that such an inference of drift is sufficiently good for an accurate time extrapolation of the particle distribution at $t = 5$.

One interesting observation is that in case 1 the inferred densities are more accurate than those estimated from the training data. This is because our knowledge of the governing SODE bridges the limited samples in three snapshots, and as we infer the density at, e.g., $t=0.2$, we are not only utilizing the data at $t=0.2$ but also implicitly leveraging the data at $t=0.5$ and $1.0$. We can make an analogy in the context of linear regression: multiple noisy data are helpful for inferring the hidden ground truth, since they are bridged by the regressed linear function. We present a more detailed discussion on this topic in the supplementary information in section S2.

For the SODE problem with L\'evy process, we prepare $1.5 \times 10^5$ sample paths, then independently draw $10,000$ samples within the region $[-1000, 1000]$ at $t = 0.2, 0.5, 1.0$ from the pool as our training data. To prevent instability during the training, we apply double precision and clip the generated $\alpha$-stable random variable in Equation \ref{eqn:sode_levy} between $-100$ and $100$.

We first test our method without pre-processing the samples as in the problem with Brownian motion. As we can see in the second row of Figure~\ref{fig:Inverse1DLevy}, the inferences are far away from the ground truth. This is due to the heavy tail of $\rho_t$ in the L\'evy process: some samples far away from $0$, although rare, could dominate the loss function and spoil the training. To deal with this problem, we then apply a bounded map $h(x)= 2 \tanh (x/2)$ to all the generated and real samples before feeding them to $\mathsf{d}$. The results are shown in the first row of Figure~\ref{fig:Inverse1DLevy} where we can see the inferences are much improved. In case 2 where the drift is parameterized by neural networks, the inferred drift outside $[-1.5,1.5]$ is better than that in the problem with Brownian motion. This is because the samples are more scattered in the L\'evy case.

\subsection*{2D Problems: Various Scenarios of Observations}\label{sec:2D}

In this section, we test our method on 2D problems using the sliced Wasserstein distance as $\mathsf{d}$, with various scenarios of observations. We consider the following 2D SODE:
\begin{equation}\label{eqn:2DMore}
    \begin{aligned}
    d\bm{X}_t &= \bm{\mu}(\bm{X_t})dt + 
    \begin{bmatrix}
    s_0 & 0\\
    s_1 & s_2
    \end{bmatrix} 
    d\bm{B}_t
    \end{aligned}
\end{equation}
where 
\begin{equation}\label{eqn:2DMore_drift}
\begin{aligned}
     \bm{\mu}(\bm{x}) = \nabla_{\bm{x}} \varphi(\bm{x}),
\end{aligned}
\end{equation}
and
\begin{equation}\label{eqn:2DMore_potential}
\begin{aligned}
    \varphi(\bm{x}) = & -(x_1 + a_0)^2(x_2 + a_1)^2\\
            &-(x_1 + a_2)^2(x_2 + a_3)^2 \quad \text{for } \bm{x} = (x_1, x_2). 
\end{aligned}
\end{equation}
The parameters are set as $a_0 = a_1 = s_0 = s_1 = s_2 = 1.0$, $a_2 = a_3 = -0.5$. 
We set the initial distribution as $\rho_0 =  \mathcal{N}(0,\bm{I}_2)$. We assume we know that the diffusion coefficient is a constant lower triangular matrix but we need to infer the three unknown parameters $s_0, s_1, s_2$. In particular, we use $(\text{softplus}(\tilde{s}_0), \tilde{s}_1, \text{softplus}(\tilde{s}_2))$ to approximate $(s_0, s_1, s_2)$, where $\tilde{s}_0, \tilde{s}_1, \tilde{s}_2$ are three trainable variables. Similar as in the 1D problems, we consider the following two cases for the drift. In case 1, we know the form of $\bm{\mu}$ and $\varphi$ in Equation~\ref{eqn:2DMore_drift} and \ref{eqn:2DMore_potential} but we need to infer $a_0, a_1, a_2$ and $a_3$ ($(a_0, a_1)$ and $(a_2, a_3)$ are exchangeable). In case 2, we only know that $\bm{\mu}$ is a gradient of $\varphi$ in Equation~\ref{eqn:2DMore_drift}, but we have no knowledge of $\varphi$. In this case, we use a neural network to parameterize $\varphi$. 


For the training data, we prepare $10^5$ sample paths, and consider the following various scenarios of observations at $t = 0, 0.1, 0.2, 0.3, 0.5, 0.7, 1.0$. The data are visualized in the supplementary information section S3.

\begin{itemize}
    \item Scenario 1: we assume that our observations are ideal, i.e., we make accurate observations of all the particle coordinates as the training data.
    \item Scenario 2: we assume that our observations are noisy. Specifically, we make observations of all the particle coordinates, but each coordinate is perturbed by an i.i.d. random noise $\mathcal{N}(\bm{0}, e^2 \bm{I}_2)$, where $e$ is set as $0.2$ but also need to be inferred during learning.
    \item Scenario 3: we assume that our observations are truncated. Specifically, we make observations of the particle coordinates in $\Omega = (\infty, 0.5) \times \mathbb{R}$, with the particles outside of $\Omega$ dropped.
\end{itemize}

For the first scenario, the inferred drift and diffusion are illustrated in Figure~\ref{fig:2DIdeal}. In case 1, all the inferred drift and diffusion parameters approach the ground truth during the training, with relative error less than $0.3\%$ for each $a_i$ and less than $4\%$ for each $\sigma_i$ after $2\times 10^5$ training steps. In case 2, the inferred diffusion parameters are also close to the ground truth, with error comparable with that in case 1. In the region where the training data are dense (i.e., the density estimated from all the training data is no less than 0.05), the inferred drift field has a relative $L_2$ error about $13\%$. The inference is much worse in other regions, and the reason is the same as in the 1D problem: the neural network can hardly learn the drift field where the training data are sparse.

For the second and third scenarios, we apply the technique of perturbing and filtering the generated samples, respectively. In Figure~\ref{fig:2DNoisy} and Figure~\ref{fig:2Dtrunc} we show the results of case 1 with the drift parameterized by four parameters. After $2\times 10^5$ training steps, all the parameters converge to ground truth. For scenario 2, the error is less than $0.5\%$ for each drift parameter $a_i$, less than $3\%$ for each diffusion parameter $s_i$, and about $5\%$ for the noise parameter $e$. For scenario 3, the error is less than $0.2\%$ for each drift parameter $a_i$, less than $1\%$ for each diffusion parameter $s_i$. We remark that we failed to learn the drift with neural network parameterization in the second and third scenario, suggesting a room for further improvements.

\begin{figure}[H]
    \centering
    \begin{subfigure}{0.8\textwidth}
    \centering
    \includegraphics[width = 0.48\textwidth]{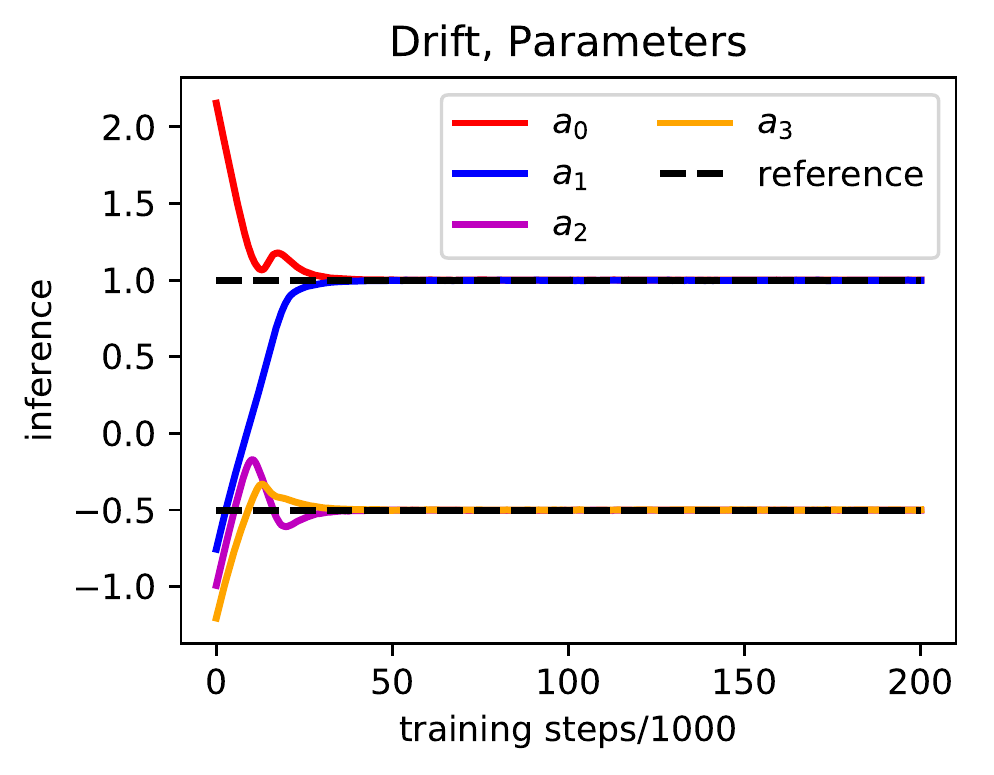}
    \includegraphics[width = 0.48\textwidth]{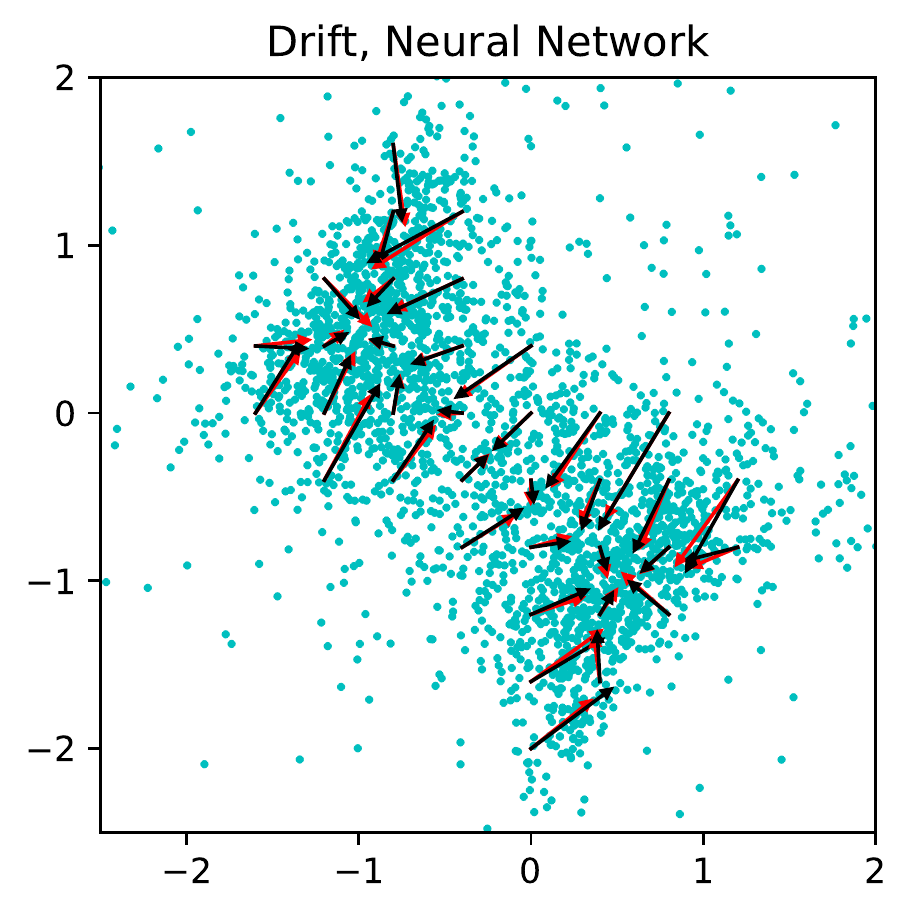}
    \includegraphics[width = 0.48\textwidth]{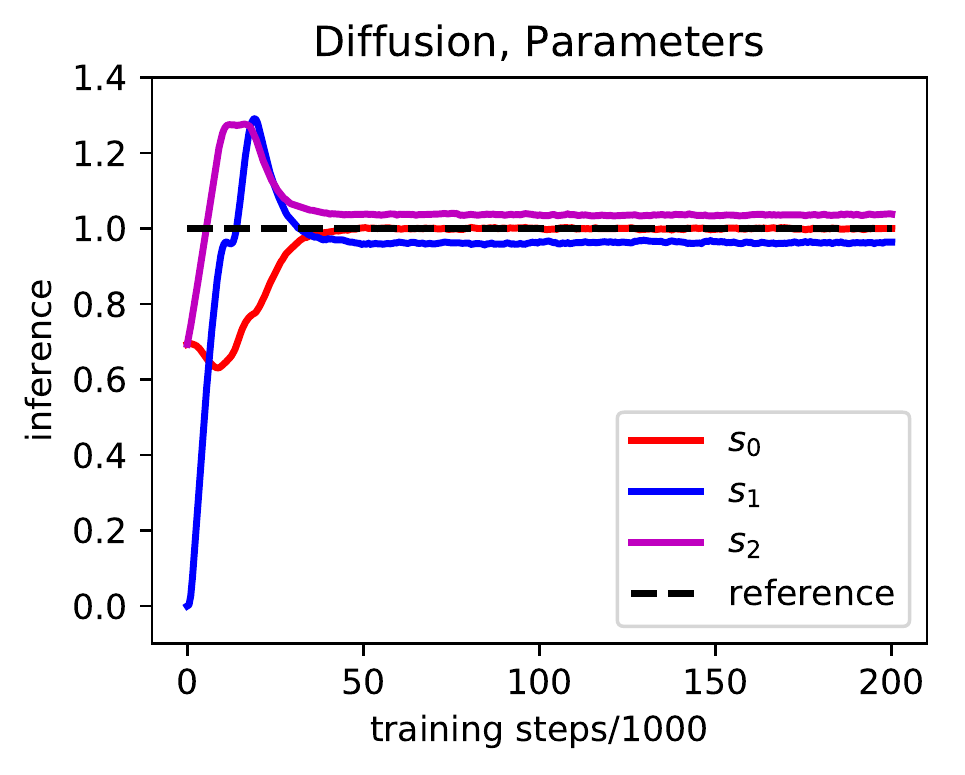}
    \includegraphics[width = 0.48\textwidth]{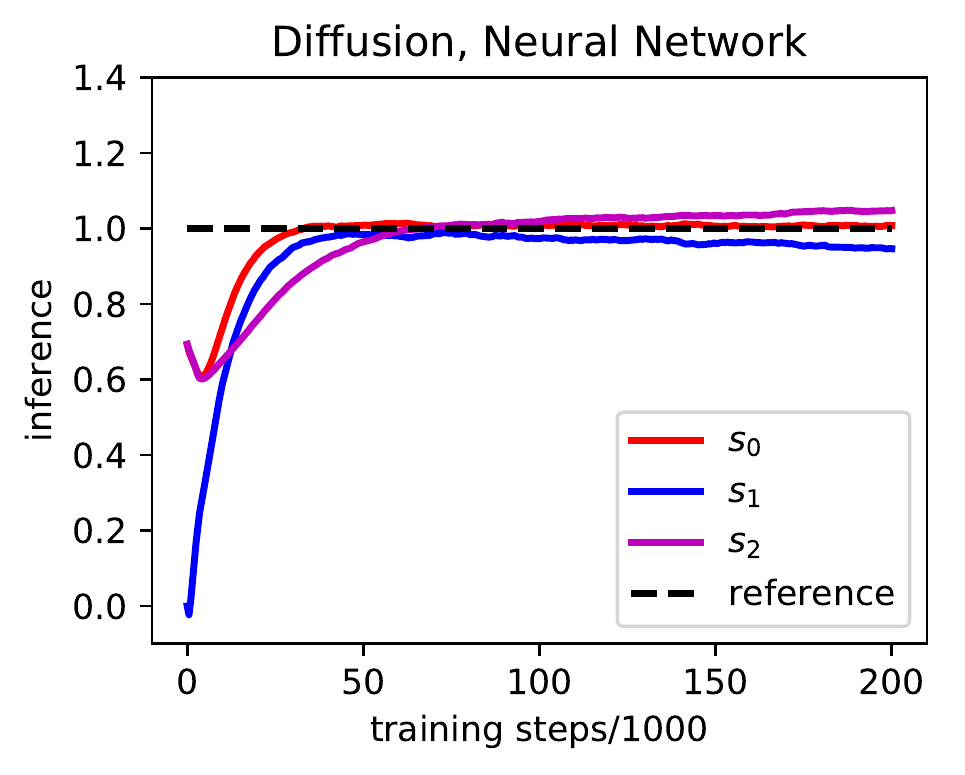}
    \caption{}
    \label{fig:2DIdeal}
    \end{subfigure}
    \begin{subfigure}{0.4\textwidth}
    \centering
    \includegraphics[width = \textwidth]{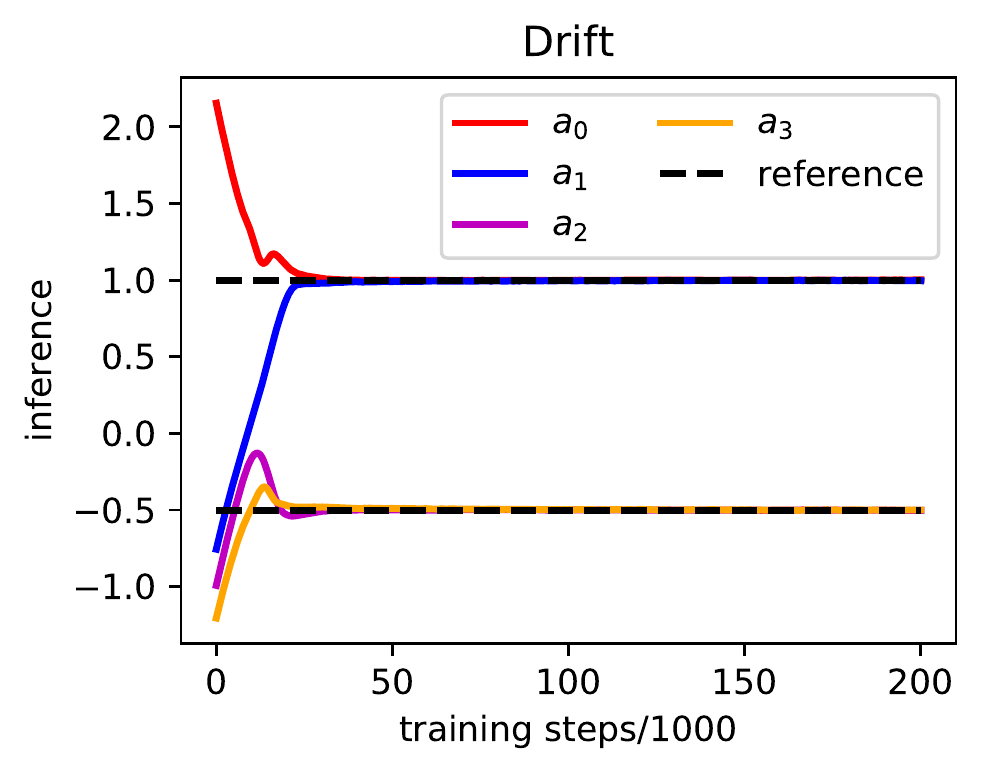}
    \includegraphics[width = \textwidth]{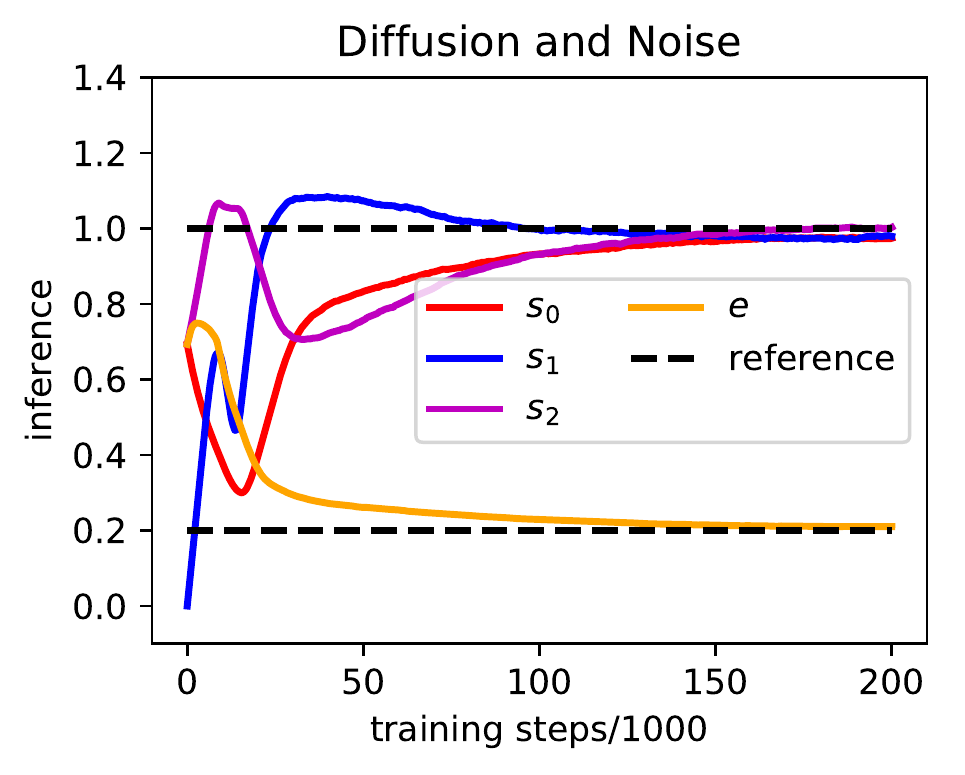}
    \caption{}
    \label{fig:2DNoisy}
    \end{subfigure}
    \begin{subfigure}{0.4\textwidth}
    \centering
    \includegraphics[width = \textwidth]{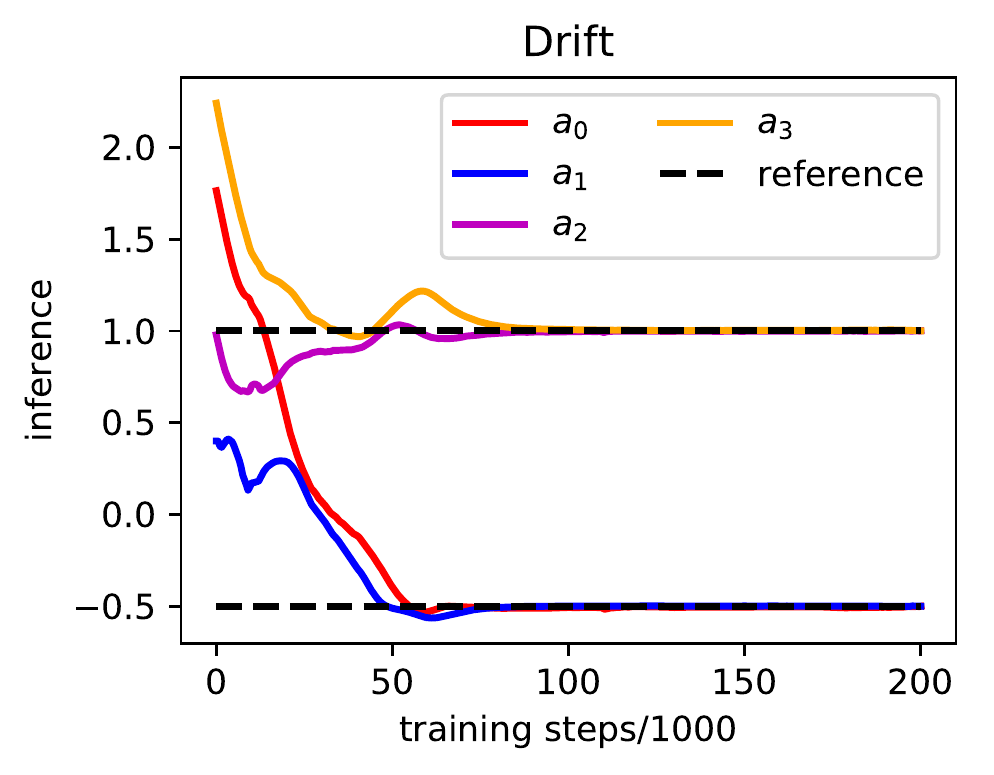}
    \includegraphics[width = \textwidth]{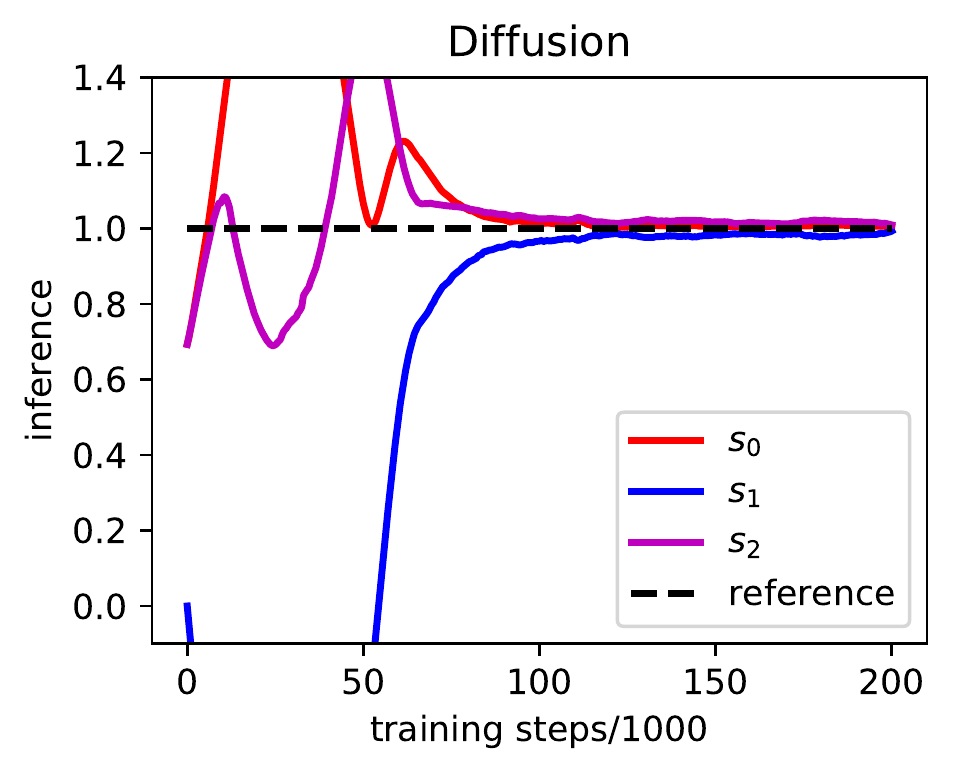}
    \caption{}
    \label{fig:2Dtrunc}
    \end{subfigure}
    
    \caption{Inferred drift and diffusion in 2D problems, with (a) ideal observations, (b) noisy observations, and (c) truncated observations. In (a), the left column shows the results in case 1 where the drift is parameterized by four parameters, while the right column shows the results in case 2 where the drift is parameterized by a neural network. In the top right figure of (a), we visualize the drift field where the training data are dense. The red and black arrows represent the inferred drift and the exact drift, respectively. For each arrow, the length represents the norm of the drift, scaled by $0.1$. The green dots are samples from the merged training data. }
    \label{fig:Inverse2DSW}
\end{figure}

\subsection*{Higher Dimensional Problems}\label{sec:highdim}
In this section, we test our method on higher dimensional non-interacting particle systems. Note that the sliced Wasserstein distance does not perform well in high dimensional problems, and we thus switch to the WGAN-GP to provide $\mathsf{d}$. The comparison between the sliced Wasserstein distance and WGAN-GP in high dimensional problems is presented in the supplementary information section S1.

We consider a $d$-dimensional SODE:
\begin{equation}\label{eqn:HighInvC}
    \begin{aligned}
    d\bm{X}_t &= \bm{\mu}(\bm{X_t})dt + \bm{\sigma}d\bm{B}_t
    \end{aligned}
\end{equation}
where 
\begin{equation}\label{eqn:HighInvC_drift}
\begin{aligned}
     \mu^{(i)}(\bm{X_t}) = X_t^{(i)} - (X_t^{(i)} )^3, i = 1,2...,d,
\end{aligned}
\end{equation}
for $\mu^{(i)}$ as the $i$-th component of $\bm{\mu} \in \mathbb{R}^d$, and $X_t^{(i)}$ is the $i$-th component of $\bm{X}_t \in \mathbb{R}^d$. We set the diffusion coefficient matrix as
\begin{equation}\label{eqn:HighInvC_diff}
    \begin{aligned}
    \bm{\sigma} =
    \begin{bmatrix}
    s_1  & 0   & 0 & 0 & \cdots & 0\\
    s'_2 & s_2 & 0 & 0 & \cdots & 0\\
    0 & s'_3 & s_3 & 0 & \cdots & 0\\
    0 & 0 & s'_4 & s_4 & \cdots & 0\\
    \vdots & \vdots & \vdots & \vdots & \ddots & \vdots\\
    0 & 0 & 0&  \cdots & s'_d& s_d\\
 \end{bmatrix} ,
    \end{aligned}
\end{equation}
where the $2d-1$ nonzero entries $\{s_i\}_{i=1}^d$ and $\{s'_i\}_{i=2}^d$ are set as 1. The initial distribution is $\rho_0 = \mathcal{N}(\bm{0}, 0.04 \bm{I}_d)$. Due to the non-diagonal diffusion coefficient matrix, the motion in different dimensions are coupled.

\begin{figure}[H]
    \centering
    \includegraphics[width = 0.35\textwidth]{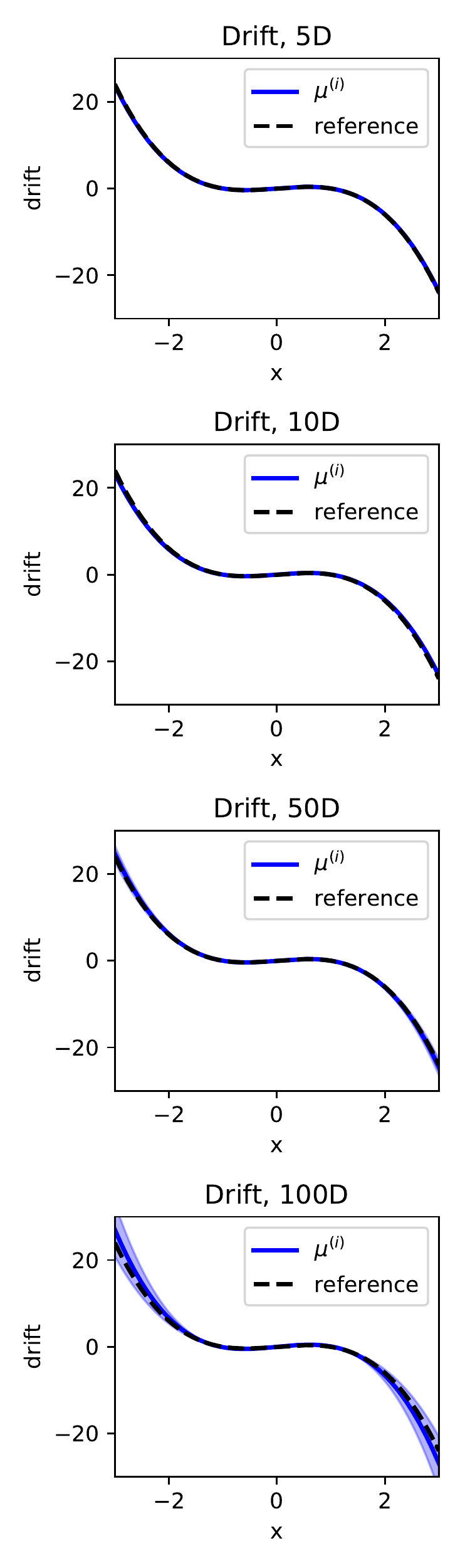}
    \includegraphics[width = 0.35\textwidth]{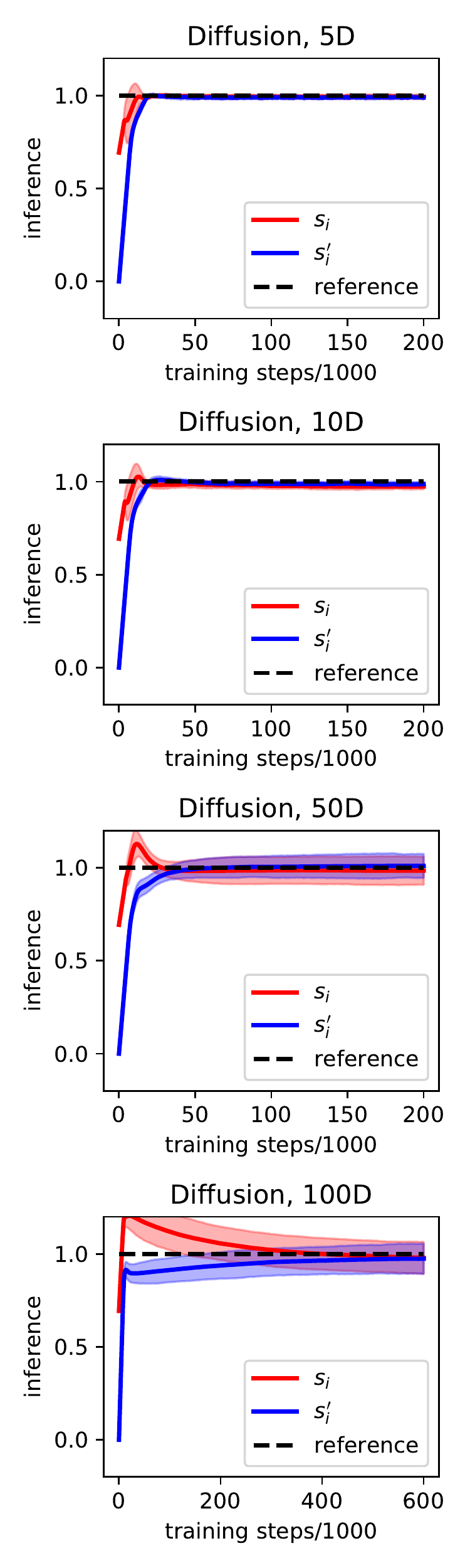}
    \caption{Inferred drift in the end of training and diffusion parameters during training, in 5D, 10D, 50D and 100D problems. The solid lines and shaded areas are the mean and two standard deviations over all $i$ and three runs with different random seeds.}
    \label{fig:Inverse20DC}
\end{figure}

We prepare $10^5$ sample paths and observe all the particle positions at $t = 0.2, 0.5, 1.0$ as our training data. We use a cubic polynomial with four variables to parameterize $\mu^{(i)}$ as a function of $X_t^{(i)}$ for each $i$, while using $2d-1$ trainable variables to learn the nonzero entries in the diffusion coefficient matrix, with the diagonal entries rectified by a softplus function for positivity. In other words, we use $6d-1$ variables to parameterize the drift and diffusion for the $d$-dimensional SODE. 
The results are shown in Figure~\ref{fig:Inverse20DC}. Even for the 100-dimensional problem, after $6\times 10^5$ training steps, the average error of the diffusion coefficients is about $0.04$, and the average relative $L_2$ error of the drift in the interval $[-3,3]$ in each dimension is about $14\%$. 

It may appear surprising that we can solve a 100-dimensional problem with only $10^5$ samples since usually an exponentially large number of samples is required to describe the distribution. However, we remark that the difficulty of the learning task is significantly reduced since our partial knowledge, i.e., the form of the SODE, is encoded into the generator.

\subsection*{Interacting Particle Systems}

In this section we consider the 1D and 2D cases of the interacting particle system with governing equations~\ref{eqn:ode_cs}. To prepare the training data, we follow \cite{mao2019nonlocal} and set the initial condition, including the density and velocity as
\begin{equation}
    \begin{aligned}
    \rho_0(x) &= \mathbbm{1}_{[-0.75,0.75]}\frac{\pi}{3}\cos\left(\frac{3\pi x}{2}\right), \\
    \mu_0(x) &= -\frac{1}{2}\sin\left(\frac{3\pi x}{2}\right),
    \end{aligned}
\end{equation}
for the 1D case and 
\begin{equation}
    \begin{aligned}
    \rho_0(x_1,x_2) &= \mathbbm{1}_{[-0.75,0.75]^2}\left(\frac{\pi}{3}\right)^2\cos\left(\frac{3\pi x_1}{2}\right)\cos\left(\frac{3\pi x_2}{2}\right),\\
    \bm{\mu}_0(x_1,x_2) &= \left(-\frac{1}{2\sqrt{2}}\sin(\frac{3\pi x_1}{2}),-\frac{1}{2\sqrt{2}}\sin(\frac{3\pi x_2}{2}) \right)
    \end{aligned}
\end{equation}
for the 2D case, where $\mathbbm{1}$ is the indicator function.  Using the Velocity-Verlet method with time step 0.01, we perform simulations with 1024 and 9976 particles for the 1D and 2D cases, respectively, and generate data at $t = 0.5,0.6,...,2.0$, i.e., 16 snapshots in total. The input radius $r$ of the influence function $\phi(r)$ is clipped to be at least $r_{\text{min}}=0.01$, both in simulation and learning, to avoid the singularity. While using the same number of particles and make observations at the same time instants as in \cite{mao2019nonlocal}, our method does not require knowledge of the initial condition or the velocity field in the data snapshots, as opposed to their method based on Bayesian optimization to infer $\alpha$. 

We apply the sliced Wasserstein distance for the distribution loss, with the batch size equal to the number of particles in training data. When calculating the distance at each time instant, the generated and real distributions are normalized with the mean and standard deviation of the real distribution. For the Newton loss, we set $K = 16$, $M = 10000$ and $B = 10$. We use $2\times \text{sigmoid}(\beta)$ to represent the inferred $\alpha$ so that the inference is bounded by 0 and 2, and the variable $\beta$ is initialized as 0.

\begin{figure}[H]
    \centering
    \begin{subfigure}{0.4\textwidth}
        \centering
        \includegraphics[width = \textwidth]{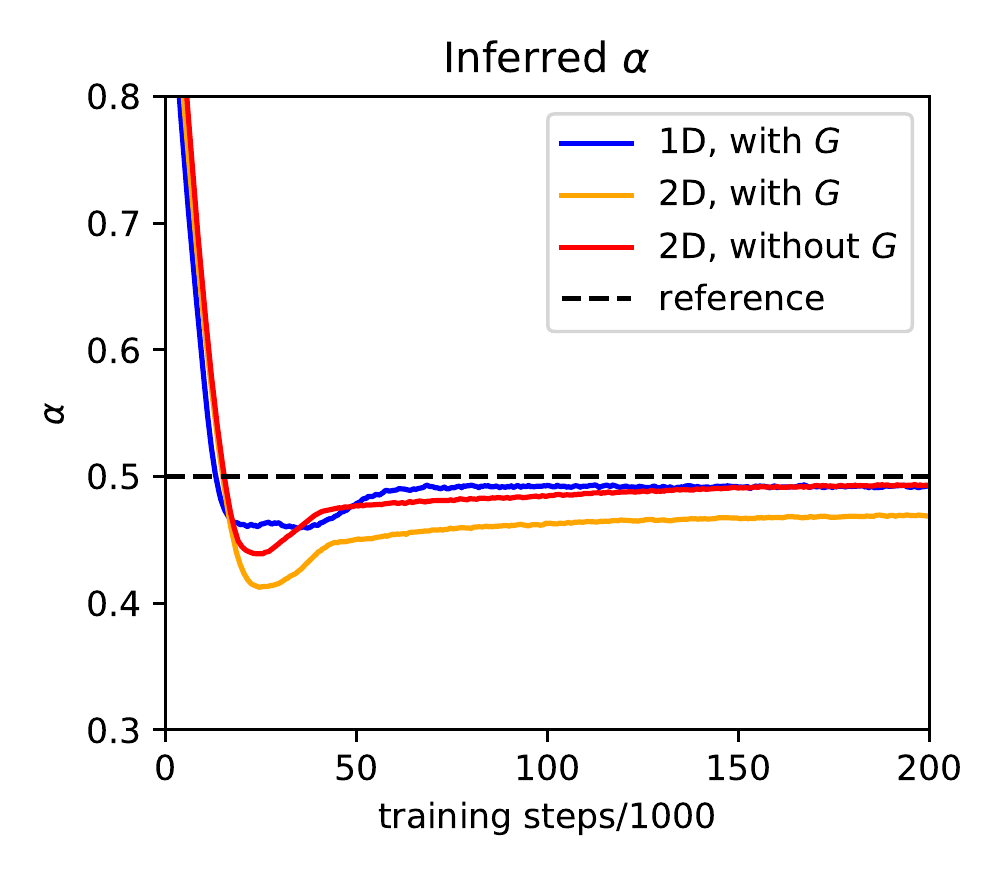}
        \caption{}
    \end{subfigure}
    \begin{subfigure}{0.4\textwidth}
        \centering
        \includegraphics[width = \textwidth]{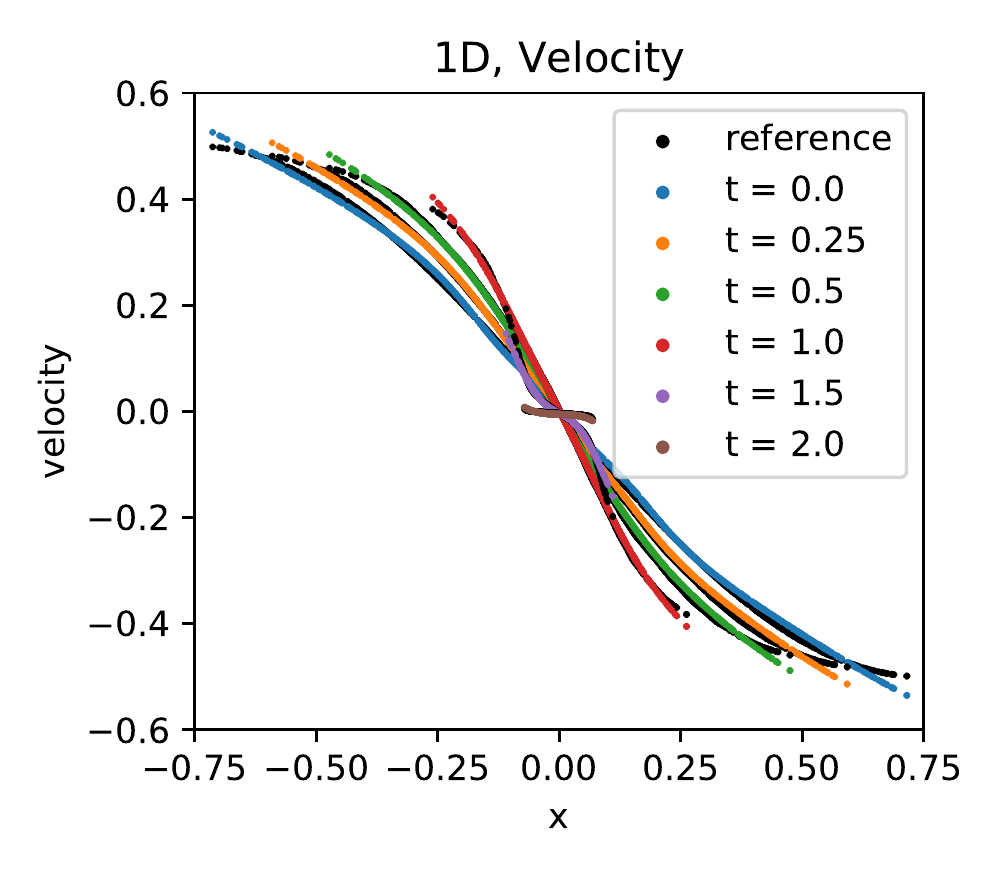}
        \caption{}
    \end{subfigure}
    
    \begin{subfigure}{0.8\textwidth}
    \centering
    \includegraphics[width =0.32\textwidth]{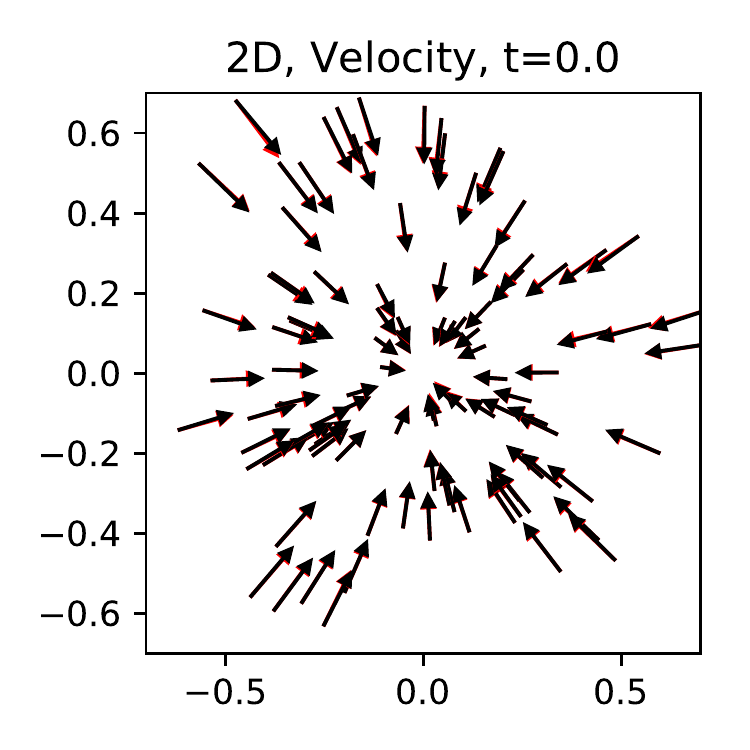}
    \includegraphics[width =0.32\textwidth]{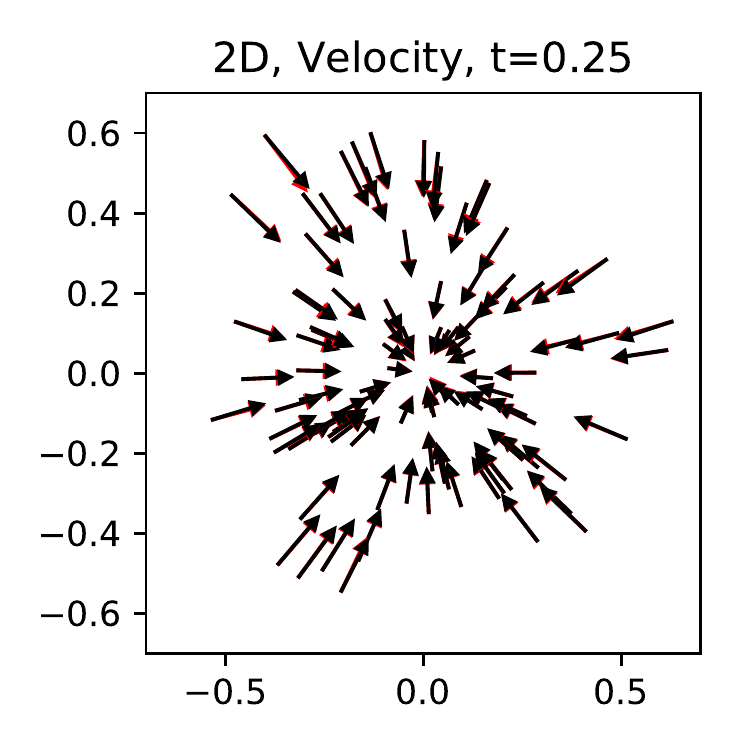}
    \includegraphics[width =0.32\textwidth]{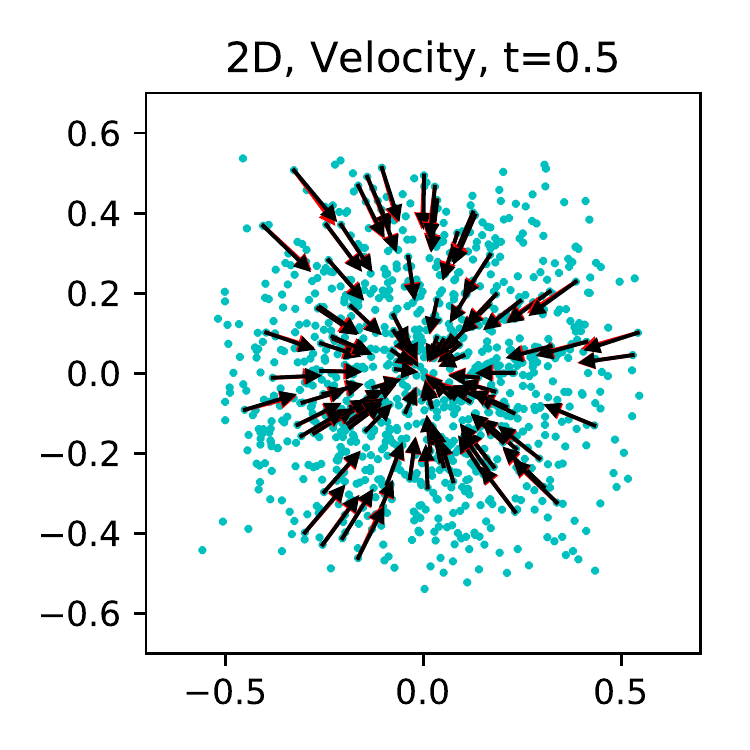}
    
    \includegraphics[width =0.32\textwidth]{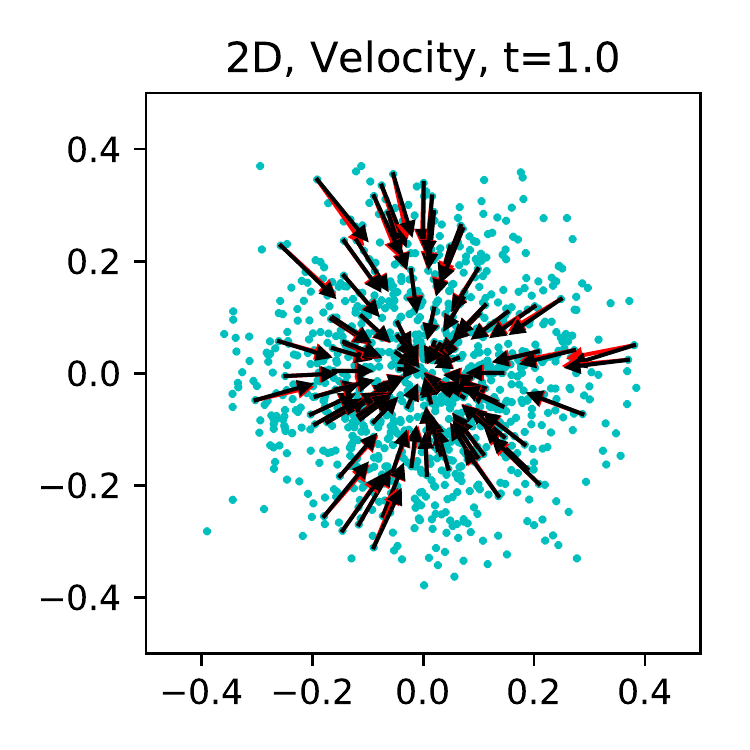}
    \includegraphics[width =0.32\textwidth]{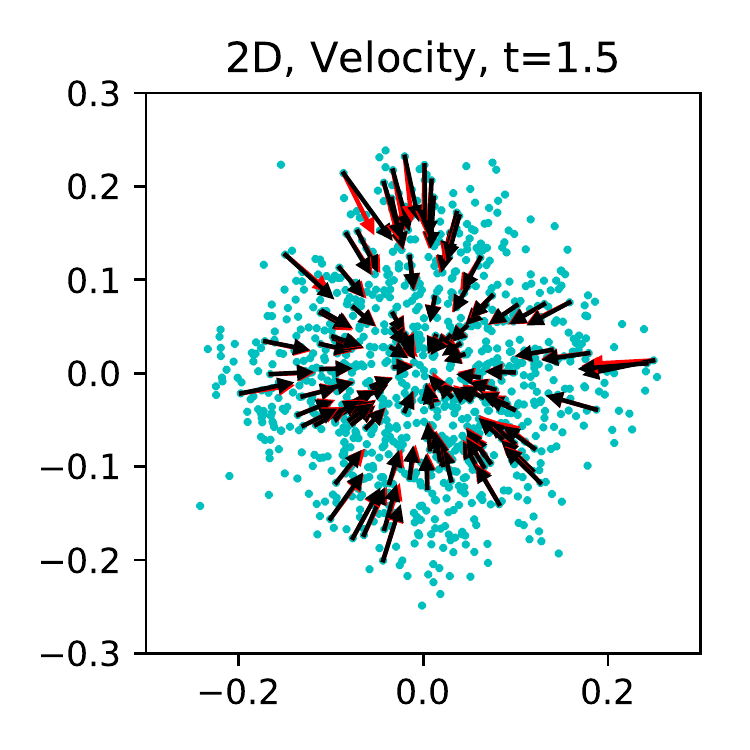}
    \includegraphics[width =0.32\textwidth]{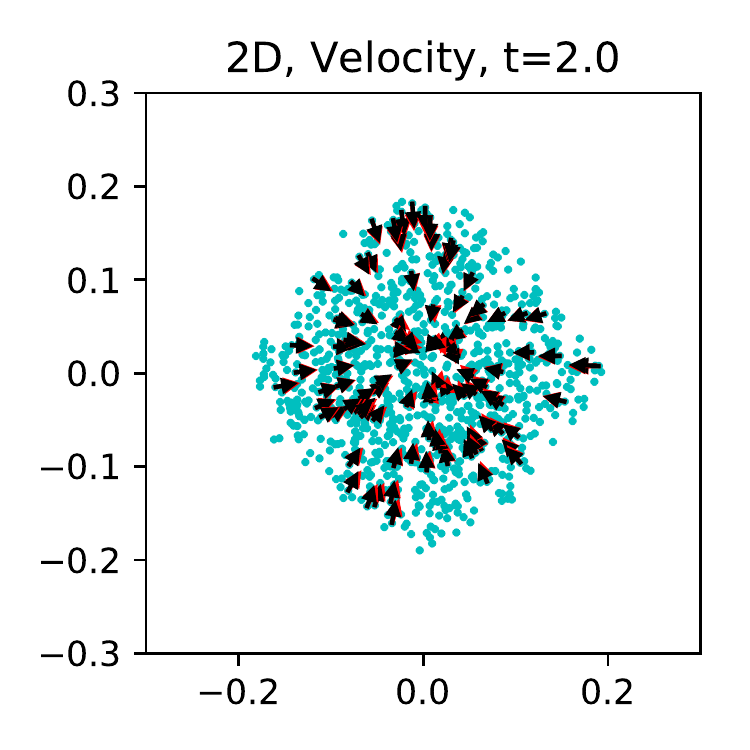}
    \caption{}
    \end{subfigure}
    
    \caption{Results for the 1D and 2D problems of interacting particle systems. (a): The inferred alpha in 1D and 2D problems. (b): The inferred velocity field against the reference velocity from simulation in the 1D problem. The dots also show the distribution of training data. (c): The inferred velocity field (red arrows) against the reference velocity from simulation (black arrows) in the 2D problem. For each arrow, the length represents the norm of the velocity scaled by $0.3$. The green dots are random samples from training data.}
    \label{fig:Interacting}
\end{figure}

The results are visualized in Figure~\ref{fig:Interacting}. For the 1D case, the inferred $\alpha$ is 0.492 with ground truth 0.5 in the end of training. As a comparison, the inference is 0.480 for the 1D case in \cite{mao2019nonlocal}. Our inferred velocity field also matches well with the reference ground truth at different time instants, even at $t= 0$ and $0.25$ when we have no observations at all. This is because we incorporated the knowledge of dynamics in the learning system. For the 2D case, while the inferred velocity field also matches well with the ground truth, the inferred $\alpha$ is 0.469 with ground truth 0.5 in the end of training, i.e., about 6\% error, much larger than in the 1D case. We attribute the error to the fact that the singularity problem for $\phi(r)$, where the order for $r$ is dimension-dependent, is more severe in the 2D case. In supplementary information section S4, we show that as we increase $r_{\min}$ from 0.01 to 0.1, the error is reduced from 6\% to 2\%. Alternatively, in the 2D case we also try to directly use the data at $t = 0.5$ as the starting coordinates to generate trajectories for $t\ge 0.5$ ($M$ is also reduced to the data size $9976$), so that the initial generator $G$ is removed from the generative model.   By doing so, the number of trainable variables is reduced and the learning becomes easier. The inferred $\alpha$ is 0.493 in the end of training. As a comparison, the inference is 0.513 for the 2D case in \cite{mao2019nonlocal}. However, we remark that while this strategy helps to infer $\alpha$ without extra data, we cannot infer the velocity or density for $0\le t < 0.5$.

\section*{Paired Observations}\label{sec:discuss}

We should note that the convergence of the marginal distribution in each snapshot does not necessarily lead to the convergence of the joint distribution of coordinate tuples $(\bm{X}_{t_1}, \bm{X}_{t_2}, ..., \bm{X}_{t_n})$. While the joint distribution is not available for unpaired observations that we are focused on so far, in other cases where the observed particle coordinates can be paired across snapshots, it is possible to improve the inference by fitting the joint distribution of coordinate tuples with the corresponding generated joint distribution. 

In \cite{ding2020subadditivity}, the authors pointed out that Wasserstein convergence of the distribution of $(\bm{X}_{t_1}, \bm{X}_{t_2}, ..., \bm{X}_{t_n})$ is equivalent to Wasserstein convergence of the distributions of  $(\bm{X}_{t_i}, \bm{X}_{t_{i+1}})$ for all $i = 1,2,...,n-1$, if $(\bm{X}_{t_1}, \bm{X}_{t_2}, ..., \bm{X}_{t_n})$ is a Markov chain. However, the sample spaces for $\{\bm{X}_{t_i}\}_{i=1}^n$ are limited to be finite and discrete in \cite{ding2020subadditivity}, thus the result doesn't apply to dynamic systems in continuous spaces. Here, we present a new theorem for continuous sample spaces:

\begin{theorem}\label{thm:1}
Let $(X_1, X_2,...X_T)$ be a Markov chain of length $T\ge3$ and we use $X_{i:j}$ to denote the nodes $(X_i, X_{i+1}...X_j)$, for $i\le j$. Suppose the domain $D_t$ for $X_t$ is a compact subset of $\mathbb{R}^{d_t}$ for $t=1,2...T$. We use the $l_q$ ($q\ge 1$) Euclidean metric for all the Euclidean spaces with different dimensions. 

Let $\{P_n^{X_{i:j}}\}_{n=1}^\infty$ and $P^{X_{i:j}}$ be probability measures of $X_{i:j}$ for $i\le j$, $P_n^{X_i|X_j}$ and $P^{X_i|X_j}$ be the corresponding probability transition kernels. If $P_n^{X_{t:t+1}}$ converges to $P^{X_{t:t+1}}$ in Wasserstein-$p$ ($p\ge 1$) metric for all $t = 1,2...T-1$, $P_n^{X_t|X_{t+1}}$ and $P^{X_{t+2}|X_{t+1}}$ as functions of $X_{t+1}$ are $C$-Lipschitz continuous in Wasserstein-$p$ metric for all $t=1,2...T-2$ and $n$, where $C$ is a constant, then $P_n^{X_{1:T}}$ converges to $P^{X_{1:T}}$ in Wasserstein-$p$ metric. 
\end{theorem}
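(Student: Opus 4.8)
The plan is to prove convergence of the full joint laws by exhibiting, for each $n$, an explicit coupling $\Pi_n$ of $P_n^{X_{1:T}}$ and $P^{X_{1:T}}$ whose transport cost for the $l_q$ metric on $\mathbb{R}^{d_1}\times\cdots\times\mathbb{R}^{d_T}$ tends to zero; since $W_p^p(P_n^{X_{1:T}},P^{X_{1:T}})$ is bounded above by this cost, the theorem follows. The coupling is assembled one node at a time moving forward along the chain, and the whole analysis rests on a single auxiliary fact --- stability of disintegrations under $W_p$-convergence --- which I expect to be the main obstacle.

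\textbf{Key lemma.} Suppose $\nu_n\otimes K_n\to\nu\otimes K$ in $W_p$ on a product of compact sets, where $K$ is $C$-Lipschitz as a map into $(\mathcal{P}_p,W_p)$ and, in addition, the reverse kernels (the disintegrations of $\nu_n\otimes K_n$ along the second coordinate) are equi-Lipschitz. Then $\int W_p^p(K_n(y,\cdot),K(y,\cdot))\,d\nu_n(y)\to 0$. The elementary half is routine: with $\tilde\mu_n:=\nu_n\otimes K$, the Lipschitz bound on $K$ and $W_p(\nu_n,\nu)\to 0$ give $W_p(\nu_n\otimes K_n,\tilde\mu_n)\to 0$. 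The delicate point, and the step I expect to be hardest, is that an \emph{optimal} coupling of $\nu_n\otimes K_n$ with $\tilde\mu_n$ need not be diagonal in the first coordinate, so the conditional law of the second coordinate given \emph{both} first coordinates need not be $K_n(y)$ resp.\ $K(y)$; I would disentangle this by compactness. On the compact base the equi-Lipschitz reverse kernels form an equicontinuous family into the compact space $\mathcal{P}_p$, so by Arzel\`a--Ascoli every subsequence has a further one along which they converge \emph{uniformly} to a Lipschitz kernel; passing to the limit in $\nu_n\otimes K_n\to\mu$ identifies that uniform limit as a version of the disintegration of $\mu$, and uniform kernel convergence together with weak convergence of the relevant marginals (using that $W_p^p$ of two continuous versions is a continuous function vanishing on the support of the limit marginal) upgrades this to the asserted integrated convergence for the whole sequence. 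It is precisely here that both Lipschitz hypotheses of the theorem are consumed: the backward kernels $P_n^{X_t|X_{t+1}}$ of the $n$-th chain supply the equicontinuity that drives Arzel\`a--Ascoli, while the forward limit kernels $P^{X_{t+2}|X_{t+1}}$ supply the Lipschitz control that the cost estimate below uses directly.

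\textbf{Construction and cost estimate.} Build $\Pi_n$ sequentially: begin with an optimal $W_p$ coupling of $P_n^{X_{1:2}}$ and $P^{X_{1:2}}$, producing $(X_1^n,X_2^n)$ and $(X_1,X_2)$; for $t=2,\dots,T-1$, having produced $(X_{1:t}^n,X_{1:t})$, append $(X_{t+1}^n,X_{t+1})$ by drawing it --- conditionally on $(X_t^n,X_t)$ and conditionally independent of everything earlier --- from an optimal $W_p$ coupling of $P_n^{X_{t+1}|X_t=X_t^n}$ and $P^{X_{t+1}|X_t=X_t}$ (a measurable selection of such optimal couplings exists on the compact domains). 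By the Markov property of each chain and the chain rule, the $X^n$- and $X$-marginals of $\Pi_n$ are exactly $P_n^{X_{1:T}}$ and $P^{X_{1:T}}$. Hence $W_p^p(P_n^{X_{1:T}},P^{X_{1:T}})\le \mathbb{E}_{\Pi_n}(\sum_{t=1}^{T}\|X_t^n-X_t\|_q^q)^{p/q}\le C_{T,p,q}\sum_{t=1}^{T}\mathbb{E}_{\Pi_n}\|X_t^n-X_t\|_q^p$, so it suffices to show $\mathbb{E}_{\Pi_n}\|X_t^n-X_t\|_q^p\to0$ for each $t$, which I prove by induction on $t$. The cases $t=1,2$ are immediate from $W_p(P_n^{X_{1:2}},P^{X_{1:2}})\to0$. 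For $t\ge 3$, conditioning on $(X_{t-1}^n,X_{t-1})$ and using that the appended pair is an optimal coupling gives $\mathbb{E}_{\Pi_n}\|X_t^n-X_t\|_q^p=\mathbb{E}_{\Pi_n}W_p^p(P_n^{X_t|X_{t-1}=X_{t-1}^n},P^{X_t|X_{t-1}=X_{t-1}})$; the triangle inequality for $W_p$ bounds this, up to a factor $2^{p-1}$, by $\mathbb{E}_{\Pi_n}W_p^p(P_n^{X_t|X_{t-1}^n},P^{X_t|X_{t-1}^n})$ plus $\mathbb{E}_{\Pi_n}W_p^p(P^{X_t|X_{t-1}^n},P^{X_t|X_{t-1}})$. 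The first term equals $\int W_p^p(P_n^{X_t|X_{t-1}=y},P^{X_t|X_{t-1}=y})\,dP_n^{X_{t-1}}(y)\to0$ by the key lemma applied to the pair $(X_{t-1},X_t)$ --- its laws converge in $W_p$ by hypothesis, the forward limit kernel $P^{X_t|X_{t-1}}$ is $C$-Lipschitz since it equals $P^{X_{s+2}|X_{s+1}}$ with $s=t-2\in\{1,\dots,T-2\}$, and the reverse $n$-kernels $P_n^{X_{t-1}|X_t}$ are equi-Lipschitz by hypothesis; the second term is $\le C^p\,\mathbb{E}_{\Pi_n}\|X_{t-1}^n-X_{t-1}\|_q^p$ by the same Lipschitz property, and tends to $0$ by the induction hypothesis at $t-1$. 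This closes the induction, the cost of $\Pi_n$ tends to $0$, and the theorem follows.

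\textbf{Remark.} One could alternatively induct on $T$, reducing to the case $T=3$ by merging the first two coordinates into a single node $(X_1,X_2)$; the extra ingredient is that $P_n^{(X_1,X_2)|X_3}$ is again $W_p$-Lipschitz with a constant depending only on $C$, obtained by composing the equi-Lipschitz kernels $P_n^{X_2|X_3}$ and $P_n^{X_1|X_2}$ (this is where the backward-kernel hypothesis is indispensable on that route). Compactness of the $D_t$ enters only through finiteness of $p$-th moments and through existence and measurability of the optimal couplings used above, and could be relaxed to uniform moment bounds.
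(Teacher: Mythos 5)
Your coupling strategy has a genuine gap: the ``key lemma'' on which the whole induction rests is false. Convergence of the joint laws $\nu_n\otimes K_n\to\nu\otimes K$ in $W_p$, even together with a Lipschitz limit kernel $K$ and equi-Lipschitz reverse kernels, does not imply $\int W_p^p(K_n(y,\cdot),K(y,\cdot))\,d\nu_n(y)\to 0$. A counterexample: let $\nu_n=\nu$ be uniform on $[0,1]$, let $g_n$ be the sawtooth of period $1/n$ with range $[-1,1]$, and set $K_n(y)=\delta_{g_n(y)}$, $K(y)=\mathrm{Unif}[-1,1]$. Then $\nu_n\otimes K_n$ (the law of $(Y,g_n(Y))$) converges in $W_p$ to the product $\nu\otimes K$; $K$ is $0$-Lipschitz; the reverse kernels of $\nu_n\otimes K_n$ are uniform measures on the $2n$ preimages of $z$, which move with speed $1/(4n)$ as $z$ varies and are therefore equi-Lipschitz; yet $W_p(K_n(y),K(y))=W_p(\delta_{g_n(y)},\mathrm{Unif}[-1,1])\ge 1/2$ for every $y$ and $n$. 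The step of your sketch that breaks is the final ``upgrade'': Arzel\`a--Ascoli does give uniform convergence of the \emph{reverse} kernels (in the example they converge uniformly to $\mathrm{Unif}[0,1]$), but that says nothing about the \emph{forward} kernels $K_n$ versus $K$, which is what your cost estimate consumes. Correspondingly, your sequential coupling $\Pi_n$ is simply not near-optimal: in the example its cost at the appended coordinate is bounded below by $(1/2)^p$ even though the joint laws do converge. (There is also a secondary indexing problem: at the last step $t=T$ you invoke equi-Lipschitzness of $P_n^{X_{T-1}|X_T}$, which the theorem only assumes for $P_n^{X_t|X_{t+1}}$ with $t\le T-2$.)

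The paper's proof avoids this trap by never requiring convergence of the forward kernels in any $W_p$ sense. It reduces Wasserstein convergence to weak convergence (compactness handles the moments), tests $P_n^{X_{1:T}}$ against bounded Lipschitz functions $f$, and treats the problematic term $\int(g_n-g)\,dP_n^{XY}$ --- where $g_n,g$ are $f$ integrated against the forward kernels --- by contradiction: Arzel\`a--Ascoli is applied to the real-valued functions $h_i(y,z)=\int f(x,y,z)\,dP_i^{X|Y=y}(x)$, which are equi-Lipschitz precisely because of the backward-kernel hypothesis, and the uniform limit only needs an averaged, weak form of kernel convergence that does hold. A repair of your argument would require a fundamentally different coupling, not the sequential Markov one.
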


We present the proof for Theorem~\ref{thm:1} in the supplementary information section S5. The assumption on the continuity of the probability transition kernels is not required for finite discrete sample spaces in~\cite{ding2020subadditivity}, but the theorem does not hold without it for continuous sample space. We provide a counter-example in the supplementary information section S6.

The theorem states that under certain conditions, we can set our goal as fitting the distributions of  $(\bm{X}_{t_i}, \bm{X}_{t_{i+1}})$, i.e., coordinate pairs from adjacent snapshots. This should be easier compared with directly fitting the distribution of $(\bm{X}_{t_1}, \bm{X}_{t_2}, ..., \bm{X}_{t_n})$, since the effective dimensionality is reduced. We still can view this approach as ``ensemble-regression'', except that instead of a curve, we try to fit the data with a {\em 2D surface} $(t_i, t_j) \rightarrow \rho_{t_i,t_j}$ in the probability measure space, where $\rho_{t_i, t_j}$ denotes the joint distributions of $(\bm{X}_{t_i}, \bm{X}_{t_j})$. 


\begin{figure}[ht]
    \centering
    \includegraphics[width = 0.45\textwidth]{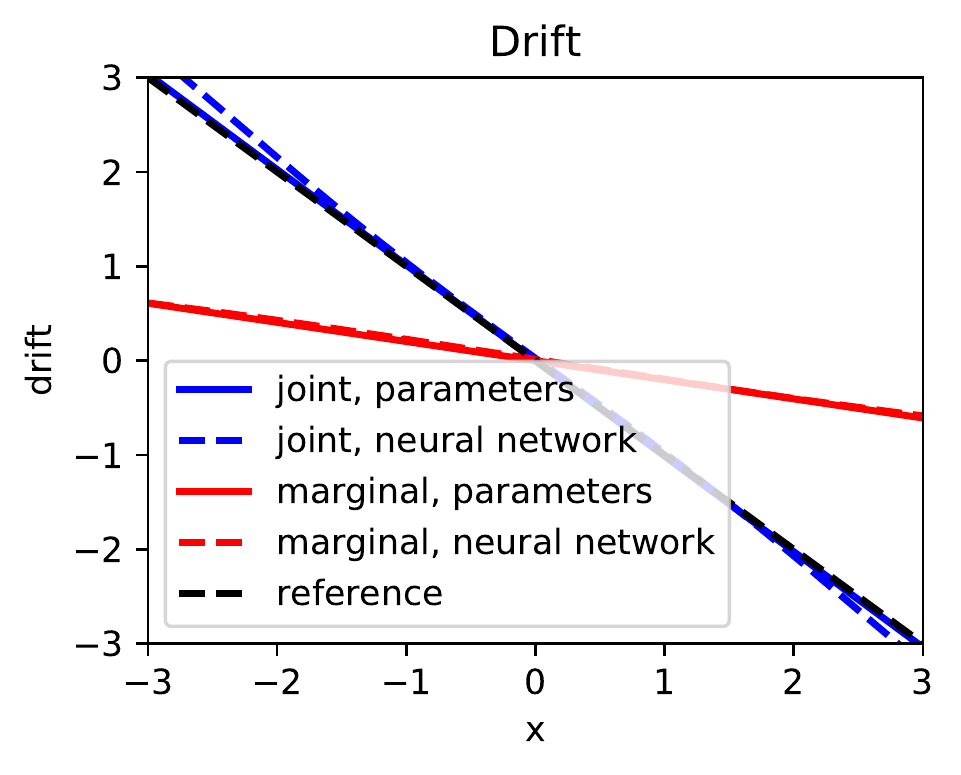}
    \includegraphics[width = 0.45\textwidth]{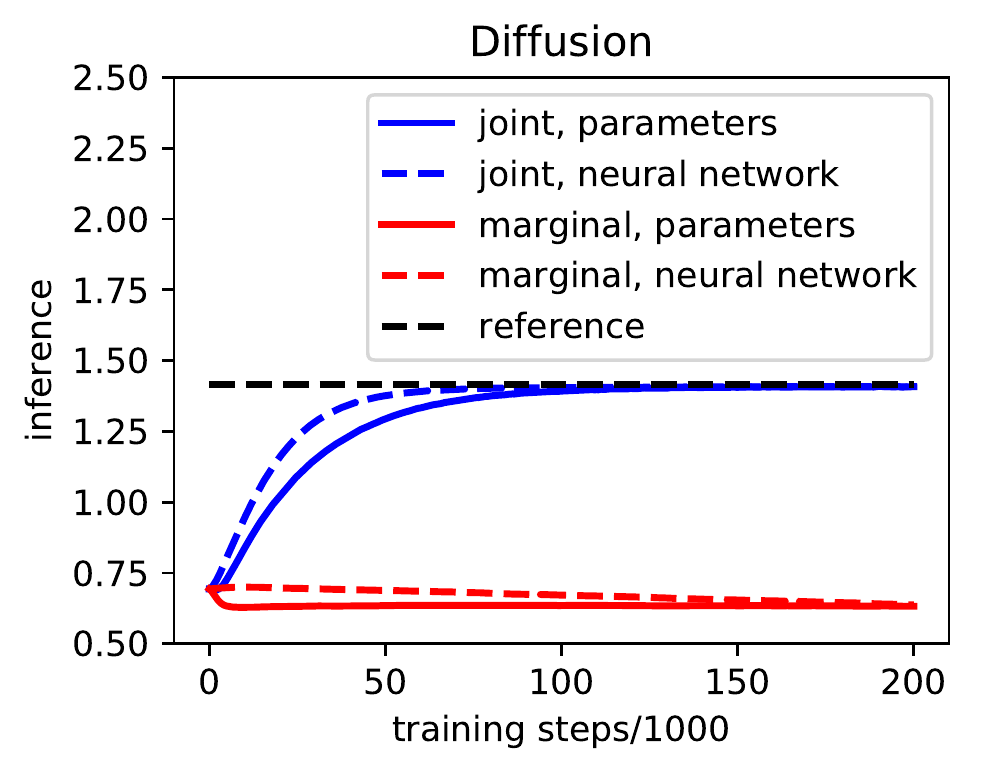}

    \caption{Inferred drift function in the end of training and diffusion coefficient during training for the OU process problem, using a linear function or a neural network to parameterize the drift function.}
    \label{fig:OU}
\end{figure}

As an illustration, we study the 1D Ornstein–Uhlenbeck (OU) process:
\begin{equation}\label{eqn:OU}
\begin{aligned}
     dX_t = -X_tdt + \sqrt{2}dW_t,
\end{aligned}
\end{equation}
with $\rho_0 = \mathcal{N}(0,1)$, so that $\rho_t = \mathcal{N}(0,1)$ for any $t>0$. This is a special example as the governing SODE is not unique given $\rho_t$. We make observations of 100 particles at $t=0.1,0.2,0.3,0.4,0.5,0.75,1.0$ as data, and compare the inferences by fitting the marginal distributions of individual coordinates or the joint distributions of adjacent coordinate pairs using the SW distance. The drift function is parameterized by a linear function $y(x) = ax$ or a neural network, while the diffusion coefficient is represented by a trainable variable rectified by a softplus function. We present the results in Figure~\ref{fig:OU}, where we can clearly see the failure in the cases of fitting the marginal distributions, while fitting the joint distributions works very well.

\section*{Summary and Discussion}\label{sec:summary}
We have proposed a new method for inferring the governing dynamics of particles from {\em unpaired observations} of their coordinates at multiple time instants, namely ``snapshots''. We fitted the observed particle ensemble distribution with a physics-informed generative model, which can be viewed as performing regression in the probability measure space. We refer to this approach as generative ``ensemble-regression'', in analogy to the classic ``point-regression'', where we infer the dynamics by performing regression in the Euclidean space. 

We first applied the method to particle systems governed by independent stochastic ordinary differential equations (SODE) with Brownian or L\'evy noises, where we inferred the drift and the diffusion terms from a small number of snapshots. In the L\'evy noise case, we demonstrated that the heavy tails in the distributions could spoil the training, but we addressed this issue by applying a preprocessing map to both the generated and target distributions. In scenarios with noisy or truncated training data, we  modified the generated distributions accordingly by perturbing or filtering the generated samples. We then addressed high-dimensional SODE problems using the adversarial loss in GANs. In the end, we managed to learn the parameters for particle interactions in a nonlocal flocking systems.

It is possible to apply our method to learn the interaction parameters for particle-based simulation methods. In particular, we will fit the target mass and velocity distributions coming from the analytical solution or other simulation methods that are accurate but expensive. We leave this promising research direction for future work.

\subsection*{Acknowledgement}

This work was supported by the PhILMS grant DE-SC0019453 and by the OSD/AFOSR MURI Grant FA9550-20-1-0358. We would like to thank Prof. Hui Wang and Ms. Tingwei Meng for carefully checking the proof of our theorem. We also want to thank Dr. Zhongqiang Zhang for helpful discussions.

\section*{Supplementary Information}

\subsection*{S1. Comparison between Sliced Wasserstein Distance and WGAN-GP}\label{sec:SWD}

The squared sliced Wasserstein-2 distance is the expectation of the squared Wasserstein-2 distance between the two input measures projected onto uniformly random directions. To estimate $SW_2(\mu, \nu)$ from samples of $\mu$ and $\nu$, we use the following process introduced in \cite{deshpande2018generative}.

\begin{itemize}
    \item Draw samples independently from $\mu$ and $\nu$ with batch size $b$, denoted as $\mathcal{U}$ and $\mathcal{V}$. 
    \item Uniformly sample $m$ projection 
    directions $\{\bm{e}_j\}_{j=1}^m$ in $\mathbb{R}^d$. In this paper we set $m = 1000$.
    \item For each random direction $\bm{e}_j$, project and sort the samples in $\mathcal{U}$ and $\mathcal{V}$ in the direction of $\bm{e}_j$, getting $\{u_{i,j}\}_{i=1}^b$ and $\{v_{i,j}\}_{i=1}^b$, where $u_{i,j}\le u_{i+1,j}$ and $v_{i,j}\le v_{i+1,j}$ for $i = 1,2...b-1$. Calculate $L_j = \sum_{i=1}^b(u_{i,j}-v_{i,j})^2/b$.
    \item Calculate $L =\sum_{j=1}^m L_j/m$ as the estimation of squared $SW_2(\mu, \nu)$.
\end{itemize}

Compared with GANs, the sliced Wasserstein distance does not need discriminators, and is more robust than WGAN-GP in low dimensional problems. Take the following 1D problem as an example. We consider the SODE:
\begin{equation}
    \begin{aligned}
    dX_t = a dt &+ b dB_t, \quad t \ge 0, 
    \end{aligned}
\end{equation}
with $\rho_0 = \mathcal{N}(-0.5, 0.5)$. We set $a = b = 1$ so that the exact solution is 
\begin{equation}
    \begin{aligned}
    \rho_t = \mathcal{N}(t - 0.5, t + 0.5).
    \end{aligned}
\end{equation}
We have $10,000$ samples at $t = 0.5$ and $1.5$, respectively, as the training data, and wish to infer the constant drift and diffusion coefficients. The input noise to the generator is uniform noise from -1 to 1. The results are shown in Figure~\ref{fig:Inverse1DSimple}.  

\begin{figure}[ht]
    \centering
    \begin{subfigure}{0.45\textwidth}
        \includegraphics[width = \textwidth]{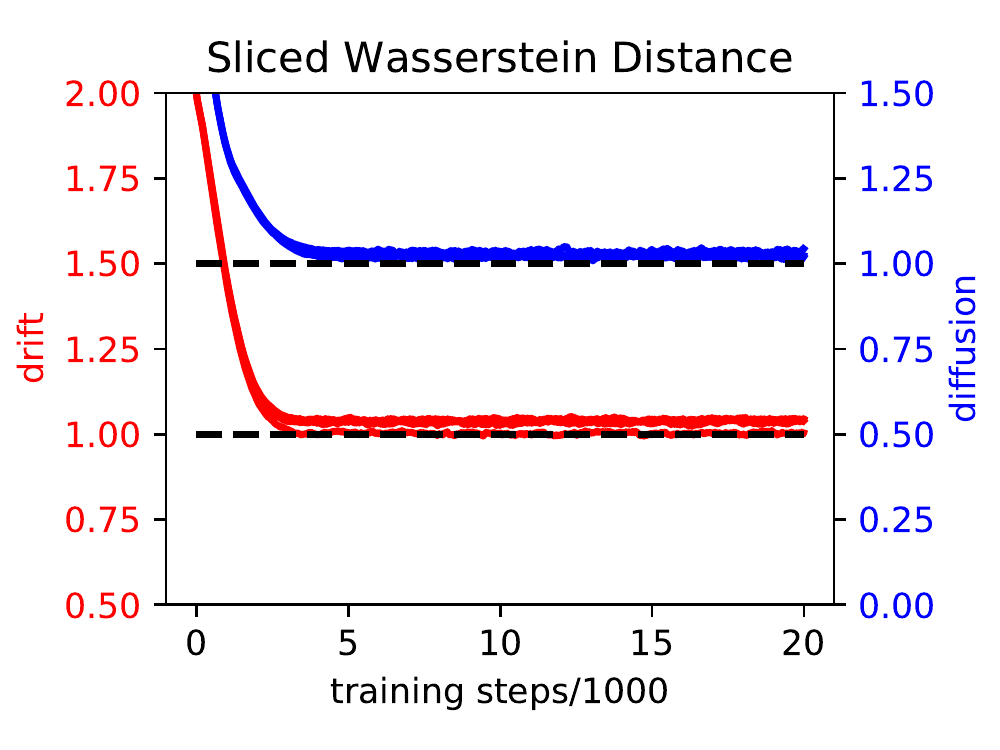}
        \caption{}
    \end{subfigure}
    \begin{subfigure}{0.45\textwidth}
        \includegraphics[width = \textwidth]{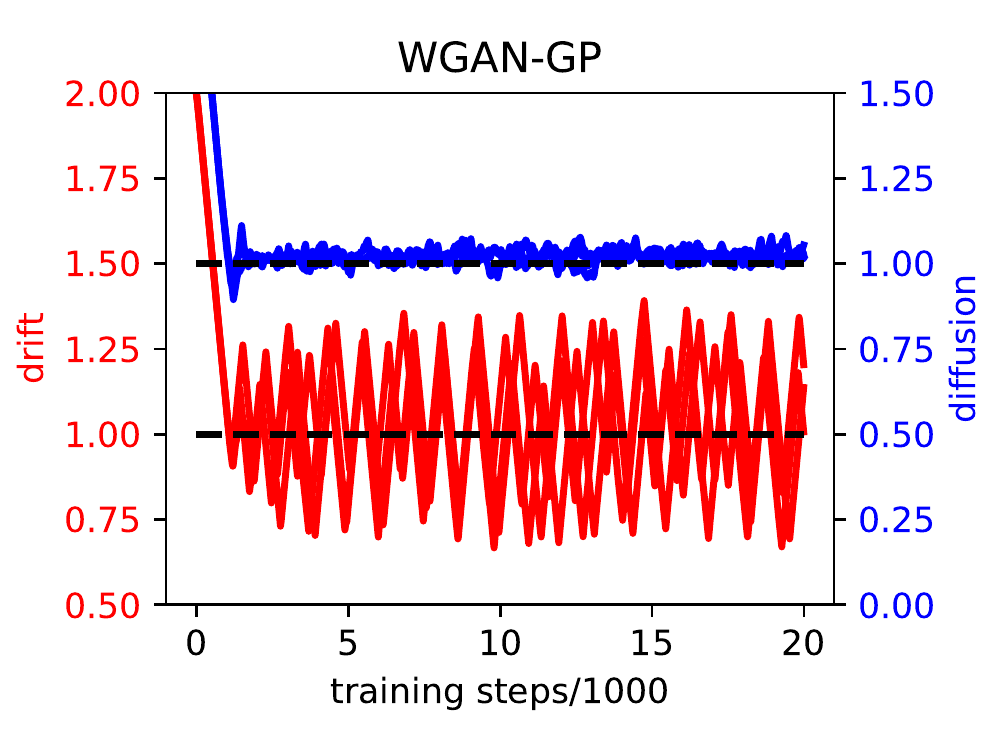}
        \caption{}
    \end{subfigure}
    \caption{Inferred drift and diffusion coefficient during the training, using (a) the squared sliced Wasserstein-2 distance or (b) WGAN-GP for $\textsf{d}$. The black dashed lines represent the ground truth, while the multiple colored lines represent the results in three independent runs.}
    \label{fig:Inverse1DSimple}
\end{figure}
We can clearly see oscillations of the inferred drift coefficient for WGAN-GP. This can be attributed to the two-player game between the generator and discriminator, which was also reported in \cite{daskalakis2017training}. 

\begin{figure}[ht]
    \centering
    \begin{subfigure}{0.48\textwidth}
    \centering
        \includegraphics[width = \textwidth]{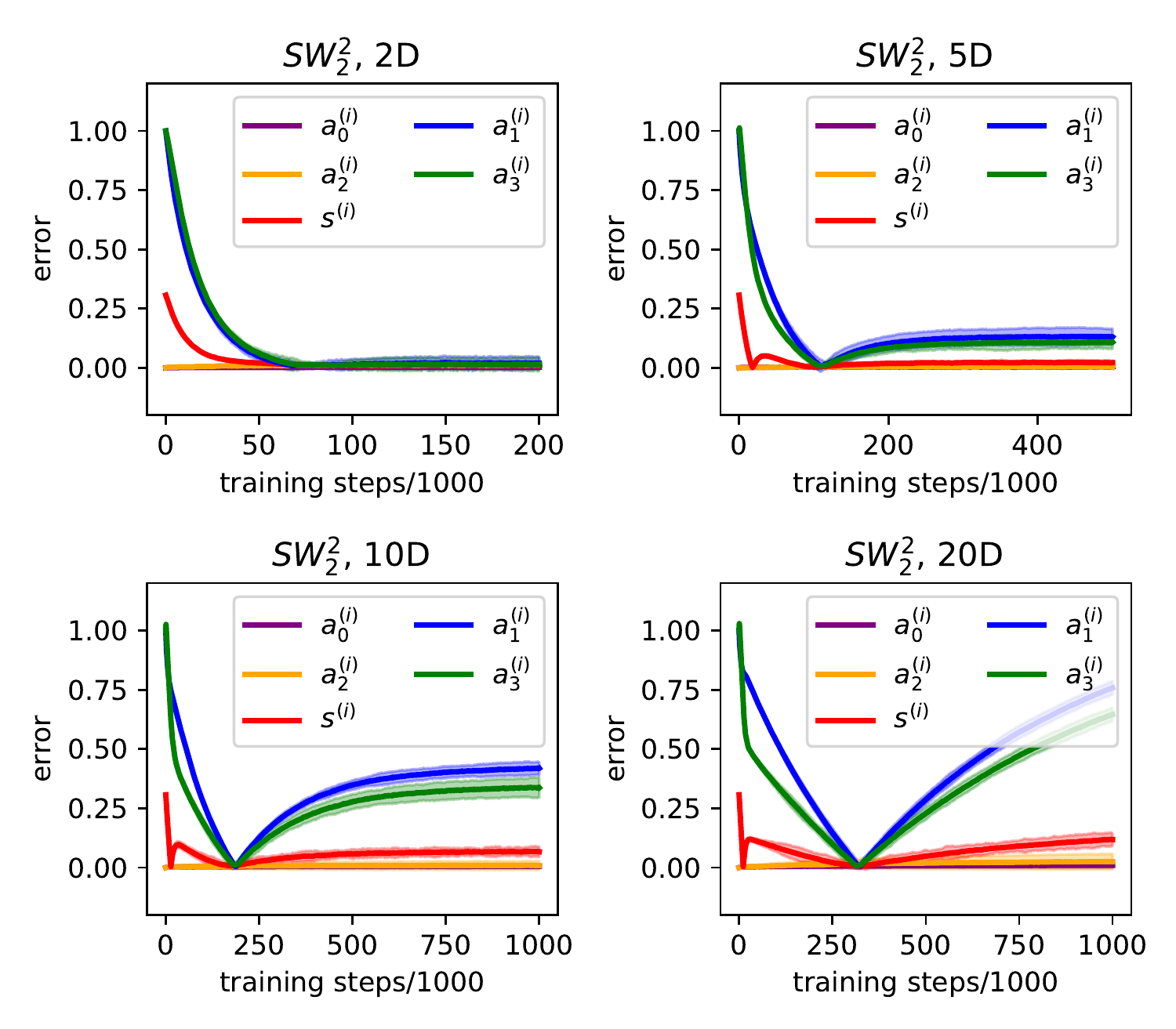}
        \caption{}
        \label{fig:20D-1}
    \end{subfigure}
    \begin{subfigure}{0.48\textwidth}
    \centering
    \includegraphics[width = \textwidth]{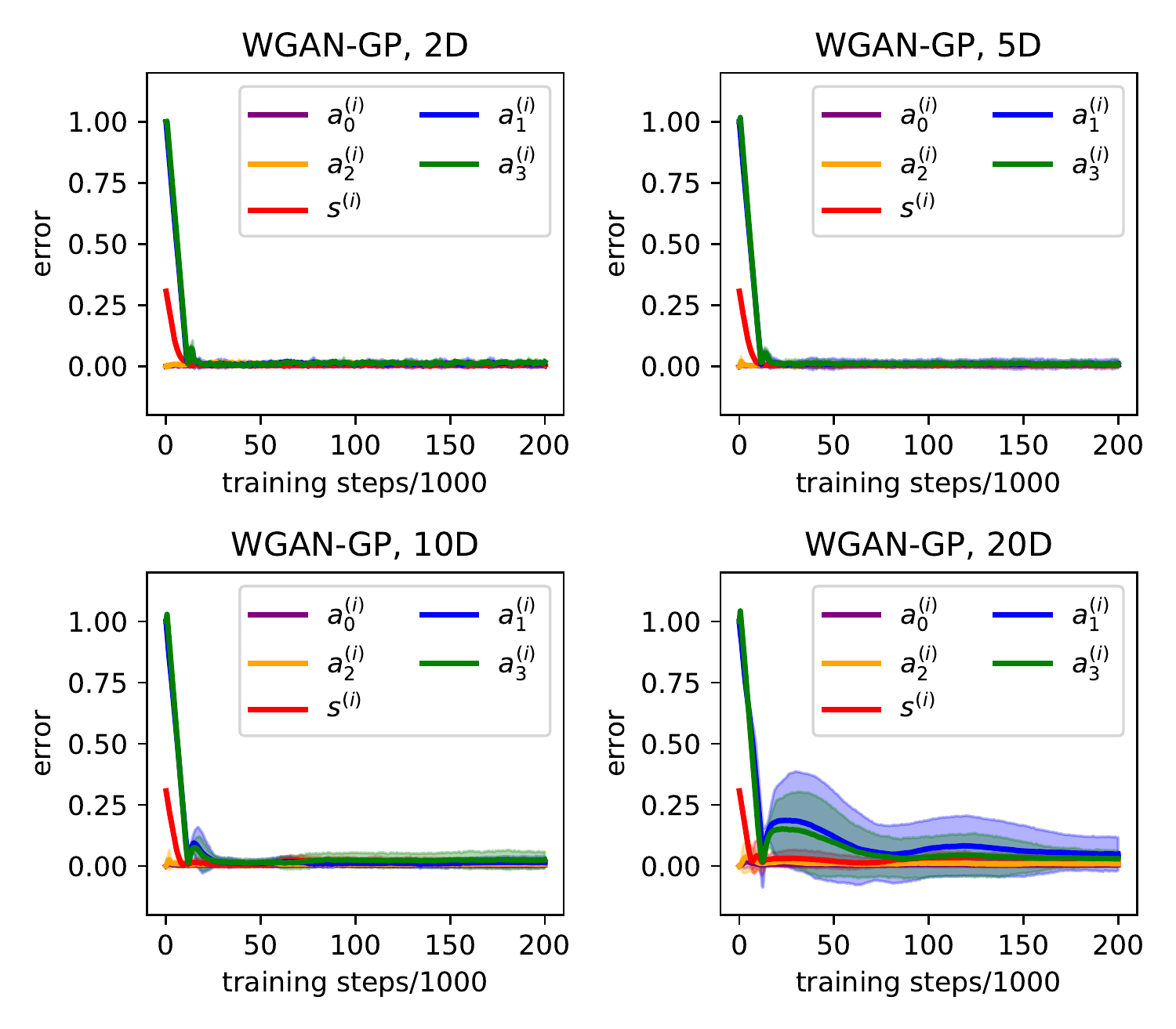}
        \caption{}
        \label{fig:20D-2}
    \end{subfigure}
    \caption{Results of 2D, 5D, 10D and 20D problems, using (a) the squared sliced Wasserstein-2 distance or (b) WGAN-GP for $\textsf{d}$. The solid lines and shaded areas represent the mean and two standard deviations of the errors over all dimensions.}
    \label{fig:Inverse20D}
\end{figure}

We did not observe significant oscillations using WGAN-GP in higher dimensional problems. Indeed, WGAN-GP outperforms the sliced Wasserstein distance in high dimensional problems. Let us consider the following simple SODE problem as an example, where the motions are uncoupled between dimensions:

\begin{equation}\label{eqn:HighInv}
    \begin{aligned}
    dX_t^{(i)} = (X_t^{(i)} - (X_t^{(i)} )^3)dt + dB_t^{(i)}, i = 1,2,...d
    \end{aligned}
\end{equation}
where $X_t^{(i)}$ is the $i$-th component of $\bm{X}_t \in \mathbb{R}^d$, with $\rho_0 =  \mathcal{N}(\bm{0},0.04\bm{I}_d)$. We prepare $10^5$ sample paths and observe all the particle positions at $t = 0.2, 0.5, 1.0$ as our training data. We use a cubic polynomial $a^{(i)}_0 + a^{(i)}_1x + a^{(i)}_2x^2 + a^{(i)}_3 x^3$ to parameterize the $i$-th component of the drift. We use another trainable variable rectified by a softplus function to represent the diffusion coefficient $s^{(i)} = 1$ in each direction. In total, $5d$ variables are used to parameterize the $d$-dimensional SODE.
The results are shown in Figure~\ref{fig:Inverse20D}. While the SW distance works for 2D problems, it does not scale well to high dimensional problems for which WGAN-GP gives much better inferences.

\subsection*{S2. A Study on the Density Estimation}\label{sec:forward}
In this section, we wish to show that our method can make use of the SODE and multiple snapshots to reduce the error of density estimation from limited samples. We consider the one-dimensional SODE:
\begin{equation}\label{eqn:1D}
    \begin{aligned}
    dX_t = (4X_t - {X_t}^3) dt &+  0.4dB_t, \quad t \ge 0, 
    \end{aligned}
\end{equation}
with $\rho_0 = 0.5 \mathcal{N}(-0.5,0.3^2) + 0.5 \mathcal{N}(0.5,0.3^2)$.

We test the following two cases of training data sets. In both cases we have 1000 samples at $t = 0.05, 0.1, 0.15, 0.2, 0.25$, but the particle trajectories are different.
\begin{itemize}
    \item Case 1: We make observations from different sets of trajectories at different time instants.
    \item Case 2: We make observations from the same set of trajectories at different time instants.
\end{itemize}

We assume that we know Equation~\ref{eqn:1D}, but have no knowledge of $\rho_0$. We could make the analogy of performing linear regression, where we know the slope but do not know the intercept. In Figure~\ref{fig:Forward2} we show the comparison between the inferred densities and the densities estimated directly from data. All the densities are estimated using Gaussian kernel density estimation. The inferred densities and the ground truth come from $10^5$ samples produced by the generative model or simulation, with bandwidth decided by the Scott's rule. To remove the effect of bandwidth selection in the density estimation directly from data, we perform a grid search of the optimal bandwidth factor via the $L_2$ error against ground truth, from 0.01 to 0.5 with grid size 0.01 (Scott's rule suggests about 0.25). 
%

\begin{figure}[H]
    \centering
    \begin{subfigure}{0.45\textwidth}
		\includegraphics[width = \textwidth]{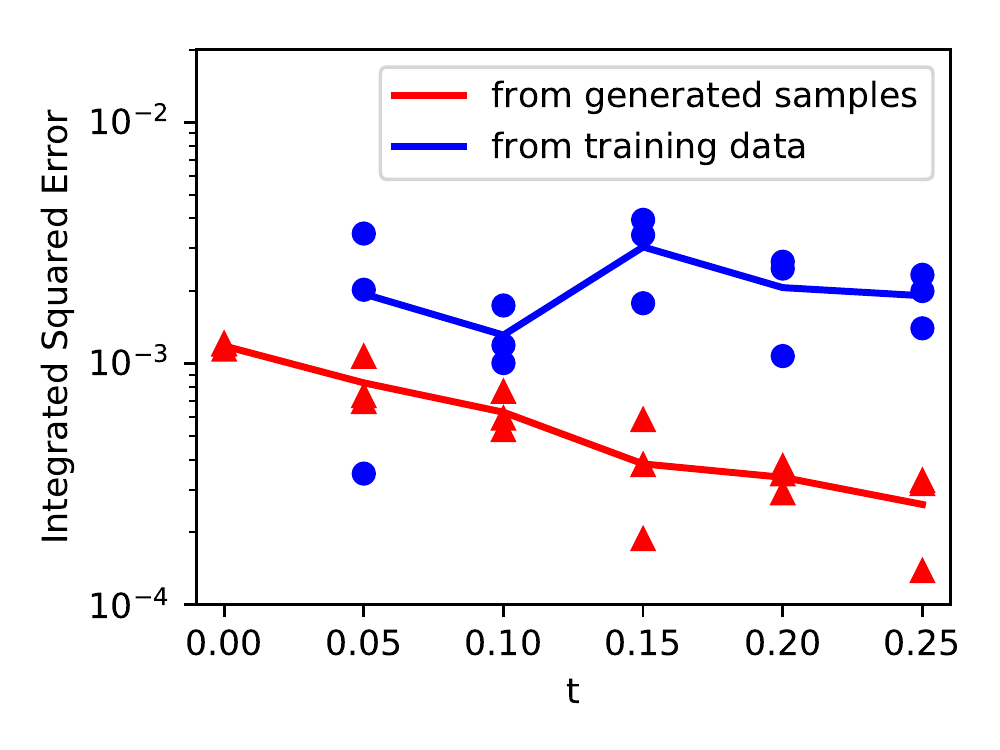}
		\caption{}
		\label{fig:forward-3}
	\end{subfigure}
	\begin{subfigure}{0.45\textwidth}
		\includegraphics[width = \textwidth]{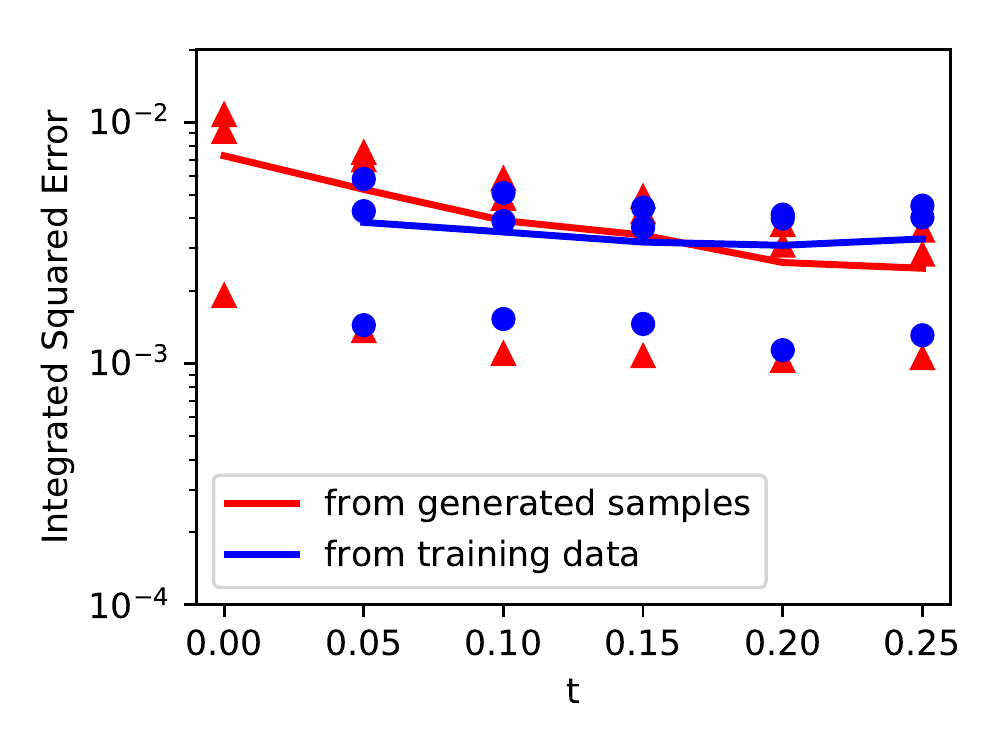}
		\caption{}
		\label{fig:forward-4}
	\end{subfigure}
    \caption{Integrated squared errors (ISE) of the inferred density functions at different time instants in (a) case 1 and (b) case 2. The squared errors are integrated from -4 to 4. The colored lines show the ISE averaged over three independent runs, while the markers with the corresponding colors show the ISE in each run.}
    \label{fig:Forward2}
\end{figure}

The grid search strategy is actually infeasible in practice since we don't know the ground truth, but it should perform better than any bandwidth selector. Despite that, in Figure~\ref{fig:forward-3} we can clearly see that in case 1 our inferred densities with naive Scott's rule significantly outperform the ones estimated directly from training data with grid search. This cannot be attributed to the number of samples, since in case 2 the inferred densities with our method cannot perform better, as shown in Figure~\ref{fig:forward-4}.

The different performances in two cases are reasonable. In case 1, our method can utilize the SODE and the observations at multiple time instants to reduce the error from limited samples. As we infer the density at $t = 0.05$,  we are not  only utilizing the data at $t= 0.05$ but also implicitly taking advantage of the data at later time instants. In case 2, since the observations come from the same sample paths, the observations at later time instants cannot provide additional information considering that the SODE is already known. Let us again make an analogy in the context of linear regression: if the observations have independent noise, multiple observations will be more helpful than a single observation. However, if the observations have the same noise, multiple observations cannot help us more than a single observation, if we already know the slope.

\subsection*{S3. Training Data in 2D Problems with Various Scenarios of Observations}\label{sec:2Dtrainingdata}
In Figure~\ref{fig:2Dtrainingdata} we visualize the training data in 2D problems with various scenarios of observations.
\begin{figure}[ht]
    \centering
    \includegraphics[width = 0.8\textwidth]{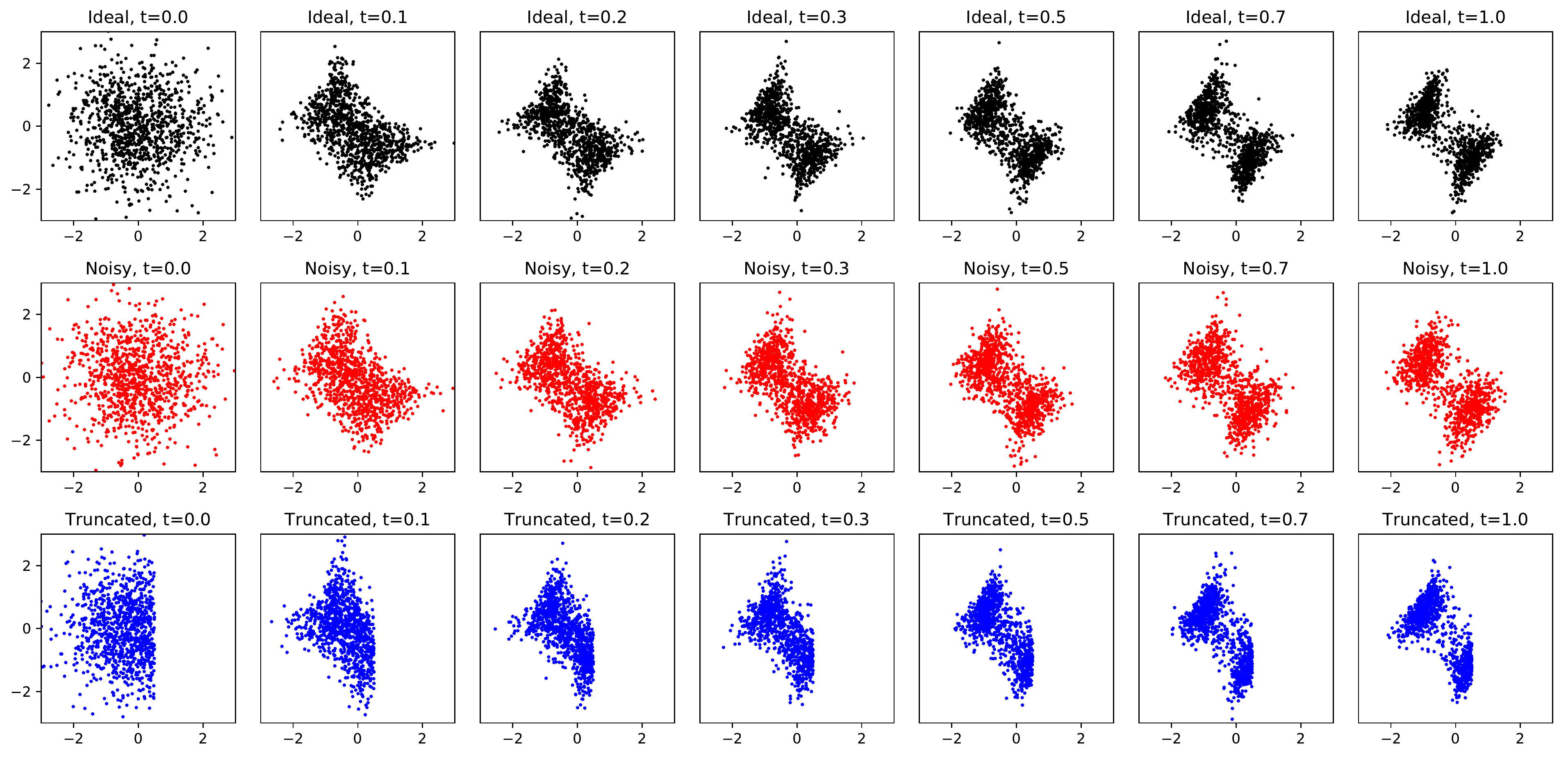}
    \caption{Samples from the training data in 2D problems with ideal, noisy, or truncated observations.}
    \label{fig:2Dtrainingdata}
\end{figure}

\subsection*{S4. Effect of Clipping Radius in the Interacting Particle System}

\begin{figure}[ht]
    \centering
    \includegraphics[width = 0.6\textwidth]{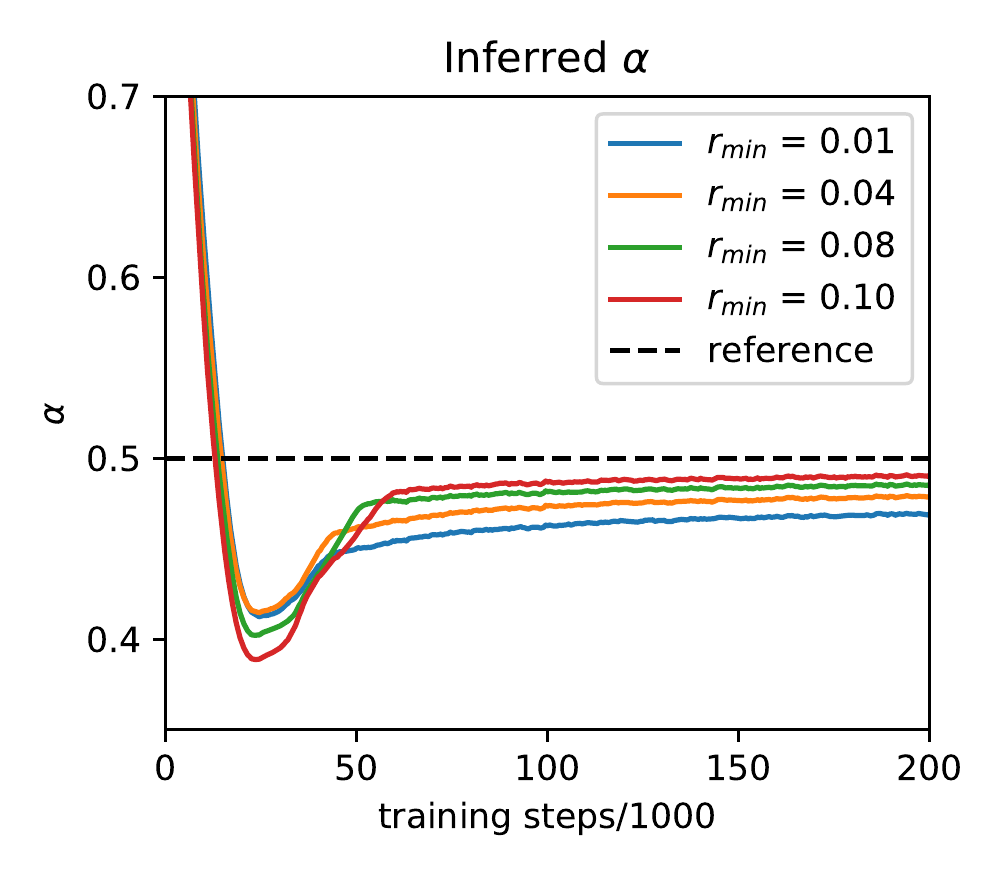}
    \caption{Inferred $\alpha$ in the 2D interacting particle system during training, with different clipping radius $r_{\text{min}}$.}
    \label{fig:clipradius}
\end{figure}

In this section we study the effect of various clipping radius $r_{\text{min}}$ from 0.01 to 0.1, both in the simulation for data generation and the learning algorithm. We remark that changing $r_{\text{min}}$ in this range would not make a huge difference to the system behavior, since the dynamics are dominated by non-local interactions.
In Figure~\ref{fig:clipradius} we show the inferred $\alpha$ during the training. As $r_{\text{min}}$ increase from 0.01 to 0.1, the inferred $\alpha$  in the end of training increases from 0.469 to 0.490, with the ground truth 0.5. This suggests that the error mainly comes from the singularity problem of $\phi(r)$ when $r$ is small.

\subsection*{S5. Proof for Theorem 1}\label{sec:proof}

The proof is based on the weak convergence: convergence of a sequence of probability measures $\{P_n\}_{n=1}^\infty$ to a probability measure $P$ in the Wasserstein-$p$ metric is equivalent to the weak convergence plus the convergence of $p$-th moments of $\{P_n\}_{n=1}^\infty$ to $P$~\cite{villani2008optimal}. Here, the weak convergence mean $E_{P_n}[f(x)]$ converges to $E_{P}[f(x)]$ for each $f \in \mathcal{F}$, where $\mathcal{F} = \{\text{all bounded and continuous functions}\}$ or $\mathcal{F} = \{\text{all bounded and Lipschitz functions}\}$, which are equivalent\cite{klenke2013probability}. Note that in Theorem 1 we assume that the domain for $X_t$ is a compact subset of Euclidean space for each $t$, thus $\|X_{1:T}\|^p$ is a bounded and continuous function of $X_{1:T}$. The convergence of $p$-th moment then directly comes from the weak convergence, i.e., we only need to show that $\{P_n\}_{n=1}^\infty$ converges to $P$ weakly. This is proved in Lemma~\ref{weakconvergence} below.

The notations are inherited from Theorem 1. For simplicity, we will sometimes use $X,Y,Z$ to represent the nodes in sequence, and use $P^{XY}$ to represent the probability measure of $(X,Y)$, etc.

\begin{lemma}\label{integralLip}
Let $P_y$ mapping from the Euclidean space to the probability measure space be $C$-Lipschitz in Wasserstein-$p$ metric ($p\ge 1$). For any $f(x,y,z): \mathbb{R}^{d_x} \times \mathbb{R}^{d_y} \times \mathbb{R}^{d_z}\rightarrow \mathbb{R}$ $K$-Lipschitz,
$$g(x,y) = \int f(x,y,z) dP_y(z)$$
is $(C+1)K$-Lipschitz.
\end{lemma}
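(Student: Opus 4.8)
The plan is to bound $|g(x_1,y_1) - g(x_2,y_2)|$ by splitting the difference into two pieces using the triangle inequality: one piece where only the $z$-integrand argument changes ($x,y$ fixed, measure $P_y$ fixed), and one piece where the integrating measure changes from $P_{y_1}$ to $P_{y_2}$. Concretely, write
\begin{align}
g(x_1,y_1) - g(x_2,y_2) &= \int f(x_1,y_1,z)\,dP_{y_1}(z) - \int f(x_2,y_2,z)\,dP_{y_1}(z) \notag\\
&\quad + \int f(x_2,y_2,z)\,dP_{y_1}(z) - \int f(x_2,y_2,z)\,dP_{y_2}(z). \notag
\end{align}
The first bracket is at most $\int |f(x_1,y_1,z) - f(x_2,y_2,z)|\,dP_{y_1}(z) \le K\,\|(x_1,y_1) - (x_2,y_2)\|$ by $K$-Lipschitzness of $f$ (in the $l_q$ metric on the product space, which dominates the $l_q$ metric on the $(x,y)$-block). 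So that term is fine with constant $K$.

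The main work is the second bracket, $\int h(z)\,dP_{y_1}(z) - \int h(z)\,dP_{y_2}(z)$ where $h(z) := f(x_2,y_2,z)$ is $K$-Lipschitz in $z$. Here I would invoke the Kantorovich–Rubinstein duality: for a $K$-Lipschitz function $h$, $\left|\int h\,d\mu - \int h\,d\nu\right| \le K\, W_1(\mu,\nu)$. Since $W_1 \le W_p$ for $p \ge 1$ (by Jensen / Hölder on the optimal coupling), and $P_y$ is $C$-Lipschitz in the $W_p$ metric by hypothesis, we get $\left|\int h\,dP_{y_1} - \int h\,dP_{y_2}\right| \le K\, W_1(P_{y_1},P_{y_2}) \le K\, W_p(P_{y_1},P_{y_2}) \le K C\,\|y_1 - y_2\| \le KC\,\|(x_1,y_1)-(x_2,y_2)\|$.

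Adding the two bounds gives $|g(x_1,y_1)-g(x_2,y_2)| \le (K + KC)\,\|(x_1,y_1)-(x_2,y_2)\| = (C+1)K\,\|(x_1,y_1)-(x_2,y_2)\|$, which is exactly the claim. The one subtlety I would be careful about is the direction of the inequality $W_1 \le W_p$ and making sure the Kantorovich–Rubinstein duality is applied with the correct Lipschitz constant (the duality gives the $W_1$ distance as a supremum over $1$-Lipschitz test functions, so an arbitrary $K$-Lipschitz $h$ contributes a factor $K$); neither is hard, but both must be stated cleanly. A minor point worth noting is that one should observe $h$ is genuinely Lipschitz as a function of $z$ alone with the same constant $K$ inherited from $f$ — immediate since freezing coordinates cannot increase the Lipschitz constant in the $l_q$ metric. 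I do not expect any real obstacle here; the lemma is essentially a packaging of duality plus the triangle inequality, and it is being set up precisely so that the harder telescoping argument in the proof of Theorem~\ref{thm:1} can call it as a black box.
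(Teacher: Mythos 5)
Your proof is correct and follows essentially the same route as the paper's: a triangle-inequality split into a term where the integrand changes (bounded by the $K$-Lipschitzness of $f$) and a term where the integrating measure changes (bounded via Kantorovich--Rubinstein duality, $W_1 \le W_p$, and the $C$-Lipschitzness of $y \mapsto P_y$). The only difference is the order in which the two pieces appear in the decomposition, which is immaterial.
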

\begin{proof}
Note that
\begin{equation}
\begin{aligned}
   & \left| g(x+\epsilon, y+\xi) - g(x, y) \right| \\
   =& \left|\int f(x+\epsilon, y+\xi, z) dP_{y+\xi}(z)-  \int f(x, y, z) dP_{y}(z)\right| \\
   \le& \left|\int f(x+\epsilon, y+\xi, z) dP_{y+\xi}(z)-  \int f(x+\epsilon,y+\xi, z) dP_{y}(z) \right| +\\
   &\left|\int f(x+\epsilon, y+\xi, z) dP_{y}(z)-  \int f(x, y, z) dP_{y}(z)\right|.
\end{aligned}
\end{equation}

Firstly
\begin{equation}
\begin{aligned}
   & \left|\int f(x+\epsilon, y+\xi, z) dP_{y+\xi}(z)-  \int f(x+\epsilon,y+\xi, z) dP_{y}(z) \right|\\
  \le&  K W_1(P_y, P_{y+\xi})\\
  \le&  K W_p(P_y, P_{y+\xi})\\
  \le&  CK\|\epsilon\|,
\end{aligned}
\end{equation}
where the first inequality comes from the Kantorovich–Rubinstein formula~\cite{villani2008optimal} and that $f(x+\epsilon, y+\xi, z)$ is $K$-Lipschitz. The last inequality comes from that $P_y$ is $C$-Lipschitz in Wasserstein-$p$ sense.

Secondly,
\begin{equation}
\begin{aligned}
   & \left|\int f(x+\epsilon, y+\xi, z) dP_y(z)-  \int f(x, y, z) dP_y(z)\right|\\
   \le & \int \left|f(x+\epsilon, y + \xi, z) - f(x, y, z) \right| dP_y(z) \\
   \le & \int K\|(\epsilon, \xi)\| dP_y(z) \\
   = & K\|(\epsilon, \xi)\|.
\end{aligned}
\end{equation}

We conclude that 
\begin{equation}
\begin{aligned}
\left| g(x+\epsilon, y+\xi) - g(x, y) \right| \le (C+1)K\|(\epsilon,\xi)\|,
\end{aligned}
\end{equation}
i.e. $g(x,y)$ is $(C+1)K$-Lipschitz
\end{proof}

\begin{lemma}\label{marginalweakconvergence}
If $P_n^{XY}$ converge to $P^{XY}$ weakly, then $P_n^X$ converge to $P^X$ weakly.
\end{lemma}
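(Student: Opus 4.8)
The plan is to derive weak convergence of the marginals directly from weak convergence of the joints, by pulling test functions back along the coordinate projection. Recall from the discussion above that weak convergence $P_n \to P$ is equivalent to $E_{P_n}[f] \to E_P[f]$ for every bounded continuous $f$, and equally to the same statement restricted to bounded Lipschitz $f$; I will work with the bounded continuous class.

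Let $f:\mathbb{R}^{d_x}\to\mathbb{R}$ be bounded and continuous, and define $\tilde f(x,y):=f(x)$ on $\mathbb{R}^{d_x}\times\mathbb{R}^{d_y}$. Then $\tilde f$ is bounded, with the same sup norm as $f$, and continuous, being the composition of $f$ with the continuous projection $\pi:(x,y)\mapsto x$. Hence $\tilde f$ is an admissible test function for the hypothesis $P_n^{XY}\to P^{XY}$, so $E_{P_n^{XY}}[\tilde f]\to E_{P^{XY}}[\tilde f]$.

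It then remains only to identify these integrals with integrals against the marginals. Since $P_n^X$ is by definition the pushforward $\pi_{\#}P_n^{XY}$, the change-of-variables formula gives $E_{P_n^{XY}}[\tilde f]=E_{P_n^{XY}}[f\circ\pi]=E_{P_n^X}[f]$, and likewise $E_{P^{XY}}[\tilde f]=E_{P^X}[f]$. Combining the two displays yields $E_{P_n^X}[f]\to E_{P^X}[f]$ for every bounded continuous $f$, which is exactly $P_n^X\to P^X$ weakly.

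There is essentially no obstacle in this argument; the only routine points to verify are that composing with the projection preserves boundedness and continuity — and, if one prefers the Lipschitz test class, Lipschitz continuity with the same constant, since $\pi$ is $1$-Lipschitz in the $\ell_q$ metric — and that integrating a function of the first coordinate against the joint law reproduces the integral against the marginal, which is immediate from the definition of the marginal as a pushforward measure.
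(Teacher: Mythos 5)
Your argument is correct and is essentially identical to the paper's proof: both lift a bounded continuous test function $f(x)$ to the function $(x,y)\mapsto f(x)$ on the product space, invoke weak convergence of the joint laws, and identify the resulting integrals with integrals against the marginals. The student's version merely spells out the pushforward/change-of-variables step that the paper leaves implicit.
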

\begin{proof}
For any bounded and continuous function $f(x)$
\begin{equation}
\begin{aligned}
 &\int f(x)P_n^X(x) - \int f(x)P^X(x) \\
=&\iint f(x)P_n^{XY}(x,y) - \iint f(x)P^{XY}(x,y) \rightarrow 0.
\end{aligned}
\end{equation}
The convergence comes from the weak convergence of $P_n^{XY}$ and that $f(x)$ is bounded and continuous as a function of $x$ and $y$.
\end{proof}

\begin{lemma}\label{longlemma}
Suppose $P_n^{YZ}$ converge to $P^{YZ}$ weakly, $P^{Z|Y = y}$ is Lipschitz in Wasserstein-$p$ metric. For any $g(y,z): \mathbb{R}^{d_y} \times \mathbb{R}^{d_z}\rightarrow \mathbb{R}$ bounded and Lipschitz, we have 
\begin{equation}
\begin{aligned}
   & \iint g(y,z) dP_n^{Z|Y=y}(z) dP_n^{Y}(y) \\
   - & \iint g(y,z) dP^{Z|Y=y}(z) dP_n^{Y}(y) \rightarrow 0.
\end{aligned}
\end{equation}
\end{lemma}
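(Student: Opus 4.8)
The plan is to prove Lemma~\ref{longlemma} by reducing the difference of double integrals to an expression controlled by the weak convergence of $P_n^{YZ}$, via the auxiliary "integrated" function furnished by Lemma~\ref{integralLip}. The key observation is that the two terms in the statement differ only in whether the inner integration uses the converging kernel $P_n^{Z|Y=y}$ or the limit kernel $P^{Z|Y=y}$, while the outer measure $P_n^Y$ is the same in both. So I would rewrite
\begin{equation}
\begin{aligned}
 & \iint g(y,z)\, dP_n^{Z|Y=y}(z)\, dP_n^{Y}(y) \\
 & \quad = \iint g(y,z)\, dP_n^{YZ}(y,z),
\end{aligned}
\end{equation}
using the disintegration of the joint law, and similarly define $h(y) := \int g(y,z)\, dP^{Z|Y=y}(z)$, so that the second term is $\int h(y)\, dP_n^Y(y)$. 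By Lemma~\ref{integralLip} (applied with $f(x,y,z)$ not depending on $x$, and with $P_y = P^{Z|Y=y}$ being $C$-Lipschitz by hypothesis), $h$ is bounded and Lipschitz, hence $\int h(y)\, dP_n^Y(y) \to \int h(y)\, dP^Y(y)$ by weak convergence of $P_n^Y$ (which follows from weak convergence of $P_n^{YZ}$ via Lemma~\ref{marginalweakconvergence}). On the other side, $\iint g(y,z)\, dP_n^{YZ}(y,z) \to \iint g(y,z)\, dP^{YZ}(y,z)$ directly from the assumed weak convergence, since $g$ is bounded and continuous. Finally $\iint g(y,z)\, dP^{YZ}(y,z) = \int h(y)\, dP^Y(y)$ by disintegration of $P^{YZ}$, so the two limits coincide and the difference tends to $0$.

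The steps, in order, would be: (1) express both double integrals in disintegrated form and identify $\iint g\, dP_n^{YZ}$ as the first term; (2) invoke Lemma~\ref{integralLip} to conclude $h(y) = \int g(y,z)\, dP^{Z|Y=y}(z)$ is bounded (trivially, since $g$ is bounded) and $(C+1)\cdot\mathrm{Lip}(g)$-Lipschitz; (3) apply Lemma~\ref{marginalweakconvergence} to get $P_n^Y \Rightarrow P^Y$, hence $\int h\, dP_n^Y \to \int h\, dP^Y$; (4) apply weak convergence of $P_n^{YZ}$ to the bounded continuous test function $g$ to get $\iint g\, dP_n^{YZ} \to \iint g\, dP^{YZ}$; (5) use disintegration of the limiting joint law to match $\iint g\, dP^{YZ} = \int h\, dP^Y$; (6) combine. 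Adding and subtracting $\int h(y)\, dP^Y(y)$ and applying the triangle inequality then closes the argument.

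The main obstacle I anticipate is purely bookkeeping rather than conceptual: one must be slightly careful that the disintegration $P_n^{YZ}(dy,dz) = P_n^{Z|Y=y}(dz)\, P_n^Y(dy)$ is legitimate (it is, since everything lives on compact subsets of Euclidean spaces, so regular conditional probabilities exist), and that Lemma~\ref{integralLip} genuinely applies even though its statement mentions a third variable $x$ — one simply takes $f$ independent of $x$, or restricts to two variables. A secondary subtlety is that the Lipschitz hypothesis on the \emph{limit} kernel $P^{Z|Y=y}$ is the one actually used here (the Lipschitz continuity of the $P_n$-kernels is not needed for this particular lemma), which matches the hypotheses of Theorem~\ref{thm:1}. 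No heavy estimates are required; the weak-convergence machinery quoted at the start of section S5 does all the real work, and this lemma is one of the modular pieces assembled there into the inductive proof that $P_n^{X_{1:T}} \Rightarrow P^{X_{1:T}}$.
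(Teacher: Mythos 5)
Your proposal is correct and follows essentially the same route as the paper's proof: both decompose the difference by adding and subtracting $\iint g(y,z)\,dP^{Z|Y=y}(z)\,dP^{Y}(y)$, handle one piece via weak convergence of $P_n^{YZ}$ applied to the bounded continuous test function $g$, and handle the other by showing $h(y)=\int g(y,z)\,dP^{Z|Y=y}(z)$ is bounded and Lipschitz (via the Lipschitz kernel hypothesis, i.e.\ the two-variable instance of Lemma~\ref{integralLip}) and then invoking $P_n^Y\Rightarrow P^Y$ from Lemma~\ref{marginalweakconvergence}. Your observation that only the limit kernel's Lipschitz continuity is needed here also matches the paper.
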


\begin{proof}

Since $g(y,z)$ is bounded and continuous, and $P_n^{YZ}$ converge to $P^{YZ}$ weakly,
\begin{equation}
\begin{aligned}
   \iint g(y,z) dP_n^{YZ}(y,z) -\iint g(y,z) dP^{YZ}(y,z) \rightarrow 0,
\end{aligned}
\end{equation}
i.e.
\begin{equation}\label{eqn:2-1}
\begin{aligned}
   & \iint g(y,z) dP_n^{Z|Y=y}(z) dP_n^{Y}(y) \\
   - & \iint g(y,z) dP^{Z|Y=y}(z) dP^{Y}(y) \rightarrow 0.
\end{aligned}
\end{equation}

Since $g(y,z)$ is bounded and Lipschitz, $P^{Z|Y=y}$ is Lipschitz, we have 
\begin{equation}
\begin{aligned}
\int g(y,z) dP^{Z|Y=y}(z)
\end{aligned}
\end{equation}
is Lipschitz. Further since $P_n^{Y}$ converge to $P^{Y}$ weakly (from Lemma~\ref{marginalweakconvergence}):
\begin{equation}\label{eqn:2-2}
\begin{aligned}
   & \iint g(y,z) dP^{Z|Y=y}(z) dP_n^{Y}(y) \\
   - & \iint g(y,z) dP^{Z|Y=y}(z) dP^{Y}(y) \rightarrow 0.
\end{aligned}
\end{equation}

Combining \ref{eqn:2-1} and \ref{eqn:2-2}, we have
\begin{equation}
\begin{aligned}
   & \iint g(y,z) dP_n^{Z|Y=y}(z) dP_n^{Y}(y) \\
   - & \iint g(y,z) dP^{Z|Y=y}(z) dP_n^{Y}(y) \rightarrow 0.
\end{aligned}
\end{equation} 

\end{proof}

\begin{lemma}\label{weakconvergence}
Let $(X_1, X_2,...X_T)$ be a Markov chain of length $T\ge3$ and we use $X_{i:j}$ to denote the nodes $(X_i, X_{i+1}...X_j)$, for $i\le j$. Suppose the domain $D_t$ for $X_t$ is a compact subset of $\mathbb{R}^{d_t}$ for $t=1,2...T$. We use the $l_q$ ($q\ge 1$) Euclidean metric for all the Euclidean spaces with different dimensions. 

Let $\{P_n^{X_{i:j}}\}_{n=1}^\infty$ and $P^{X_{i:j}}$ be probability measures of $X_{i:j}$ for $i\le j$, $P_n^{X_i|X_j}$ and $P^{X_i|X_j}$ be the corresponding probability transition kernels. If $P_n^{X_{t:t+1}}$ converges to $P^{X_{t:t+1}}$ weakly for all $t = 1,2...T-1$, $P_n^{X_t|X_{t+1}}$ and $P^{X_{t+2}|X_{t+1}}$ are $C$-Lipschitz continuous in Wasserstein-$p$ metric for all $t=1,2...T-2$ and $n$, where $C$ is a constant, then $P_n^{X_{1:T}}$ converges to $P^{X_{1:T}}$ weakly.

\end{lemma}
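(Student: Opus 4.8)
The plan is to prove Lemma~\ref{weakconvergence} by induction on the chain length $T$, the base case $T=2$ being exactly the hypothesized pairwise weak convergence. Throughout it suffices to test against bounded Lipschitz $f$ (the two classes $\mathcal F$ are interchangeable). The guiding idea is dictated by the asymmetry of the hypotheses: the \emph{backward} kernels $P_n^{X_t|X_{t+1}}$ are Lipschitz for $t\le T-2$, while the \emph{forward} kernels $P^{X_{t+2}|X_{t+1}}$ are Lipschitz for $t\le T-2$. So I would ``strip'' the $P_n$-chain from the left using its backward kernels and ``rebuild'' the $P$-chain from the right using its forward kernels, bridging the two via the last pair convergence and the inductive hypothesis applied to the sub-chain $(X_1,\dots,X_{T-1})$ (which inherits all the lemma's assumptions, since the required index ranges are subsets of $\{1,\dots,T-2\}$).

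First comes the stripping. Set $f_0=f$ and, for $k=1,\dots,T-2$,
\[
f_k^{(n)}(x_{k+1:T})=\int f_{k-1}^{(n)}(x_k,x_{k+1:T})\,dP_n^{X_k|X_{k+1}=x_{k+1}}(x_k).
\]
By the Markov property $f_k^{(n)}(x_{k+1:T})=\mathbb E_{P_n}[f(X_{1:T})\mid X_{k+1:T}=x_{k+1:T}]$, and iterating Lemma~\ref{integralLip} (integration variable $x_k$, parameter $x_{k+1}$, remaining coordinates lumped together; the kernel is $C$-Lipschitz for $k\le T-2$) shows $f_k^{(n)}$ is $(C+1)^k K$-Lipschitz and bounded by $\|f\|_\infty$, \emph{uniformly in $n$}. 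Hence $\mathbb E_{P_n}[f(X_{1:T})]=\int f_{T-2}^{(n)}\,dP_n^{X_{T-1:T}}$ with $f_{T-2}^{(n)}$ a uniformly bounded, uniformly Lipschitz function of $(x_{T-1},x_T)$. Note the rightmost backward kernel $P_n^{X_{T-1}|X_T}$ is never used.

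Next I would swap $P_n^{X_{T-1:T}}$ for the rebuilt measure $P^{X_T|X_{T-1}}\!\otimes P_n^{X_{T-1}}$. Put $F_n(x_{T-1})=\int f_{T-2}^{(n)}(x_{T-1},x_T)\,dP^{X_T|X_{T-1}=x_{T-1}}(x_T)$, which is uniformly bounded and, by Lemma~\ref{integralLip} with the Lipschitz kernel $P^{X_T|X_{T-1}}$ (the case $t=T-2$ of the forward hypothesis), uniformly Lipschitz. Then
\[
\mathbb E_{P_n}[f]-\mathbb E_{P_n}[F_n(X_{T-1})]=A_n+B_n+C_n,
\]
with $A_n=\int f_{T-2}^{(n)}\,d(P_n^{X_{T-1:T}}-P^{X_{T-1:T}})$, $B_n=\int f_{T-2}^{(n)}\,dP^{X_{T-1:T}}-\int F_n\,dP^{X_{T-1}}$, and $C_n=\int F_n\,d(P^{X_{T-1}}-P_n^{X_{T-1}})$. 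Here $B_n=0$ exactly by the tower property, while $A_n,C_n\to0$: on the compact domains weak convergence is metrized by $W_1$, so $|\int g_n\,d\mu-\int g_n\,d\nu|\le\mathrm{Lip}(g_n)\,W_1(\mu,\nu)$, and the uniform Lipschitz bounds on $\{f_{T-2}^{(n)}\}$ and $\{F_n\}$ combine with $P_n^{X_{T-1:T}}\to P^{X_{T-1:T}}$ and (via Lemma~\ref{marginalweakconvergence}) $P_n^{X_{T-1}}\to P^{X_{T-1}}$ --- this is the argument of Lemma~\ref{longlemma} upgraded to a test function ranging over a uniformly bounded, uniformly Lipschitz family. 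Unwinding $F_n$ by the Markov property gives $\mathbb E_{P_n}[F_n(X_{T-1})]=\mathbb E_{P_n}[\tilde f(X_{1:T-1})]$ with $\tilde f(x_{1:T-1})=\int f(x_{1:T-1},x_T)\,dP^{X_T|X_{T-1}=x_{T-1}}(x_T)=\mathbb E_P[f(X_{1:T})\mid X_{1:T-1}=x_{1:T-1}]$, a \emph{fixed} bounded Lipschitz function (Lemma~\ref{integralLip} once more; the leftmost forward kernel $P^{X_2|X_1}$ is never used). The inductive hypothesis on $(X_1,\dots,X_{T-1})$ yields $P_n^{X_{1:T-1}}\to P^{X_{1:T-1}}$ weakly, hence $\mathbb E_{P_n}[\tilde f]\to\mathbb E_P[\tilde f]=\mathbb E_P[f]$, and combining with the two displays above, $\mathbb E_{P_n}[f(X_{1:T})]\to\mathbb E_P[f(X_{1:T})]$. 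The passage from weak to $W_p$ convergence in Theorem~\ref{thm:1} is then automatic on compact domains.

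The main obstacle is the $n$-dependence of the stripped conditional expectation $f_{T-2}^{(n)}$: Lemma~\ref{longlemma} and the bare weak convergence of pairs only handle a single fixed test function, so one must ensure $\{f_{T-2}^{(n)}\}_n$ (and $\{F_n\}_n$) is controlled uniformly --- this is exactly why the kernel Lipschitz constant $C$ is assumed uniform in $n$, and why compactness of the domains is essential (to turn weak convergence into a quantitative $W_1$ bound that applies to the entire uniformly Lipschitz family at once, or, if one prefers, to run an Arzel\`a--Ascoli extraction along subsequences). The rest is bookkeeping: verifying that every kernel invoked in the stripping and rebuilding lies among the hypothesized Lipschitz ones, and that the length-$(T-1)$ sub-chain still satisfies the lemma's assumptions.
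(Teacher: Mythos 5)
Your proof is correct, and it reaches the conclusion by a route that differs from the paper's in two substantive ways. First, the induction is organized differently: the paper proves the $T=3$ case in full, then handles general $T$ by lumping $X_{2:N}$ into a single node $Y$ and invoking the inductive hypothesis on \emph{both} overlapping sub-chains $X_{1:N}$ and $X_{2:N+1}$ before re-applying the three-node case; you instead strip the coordinates one at a time from the left with the backward kernels $P_n^{X_k|X_{k+1}}$, swap only the last forward kernel, and need the inductive hypothesis only on the single sub-chain $X_{1:T-1}$ together with the raw pairwise hypothesis on $(X_{T-1},X_T)$. Second, and more importantly, you dispose of the central difficulty --- the $n$-dependence of the stripped test function --- by a quantitative Kantorovich--Rubinstein estimate, $|\int g_n\,d(\mu_n-\mu)|\le \mathrm{Lip}(g_n)\,W_1(\mu_n,\mu)$, combined with the uniform Lipschitz bound $(C+1)^{T-2}K$ and the fact that weak convergence on compact domains is metrized by $W_1$; the paper instead argues by contradiction, extracting a uniformly convergent subsequence of the $h_i$ via Arzel\`a--Ascoli and then invoking its Lemma~\ref{longlemma} for the fixed limit function. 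Your exact cancellation $B_n=0$ by the tower property also lets you bypass Lemma~\ref{longlemma} entirely. Both arguments lean on the same two pillars (Lemma~\ref{integralLip} to propagate Lipschitz continuity through kernel integration, and compactness plus the uniform-in-$n$ constant $C$), but yours is more direct and quantitative, avoiding the subsequence extraction, at the cost of slightly more bookkeeping in the iterated stripping. The only points worth tightening in a written version are (i) an explicit verification that each $f_k^{(n)}$ equals the $P_n$-conditional expectation $\mathbb{E}_{P_n}[f\mid X_{k+1:T}]$, which is where the Markov property of every $P_n$ enters, and (ii) the observation that the length-$(T-1)$ sub-chain inherits all hypotheses because the required index ranges are subsets of the given ones --- both of which you flag but do not write out.
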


\begin{proof}
We start from the case $T = 3$. For simplicity, we use $X, Y, Z$ to denote the nodes in sequence. 

For any $f(x,y,z)$ $K$-Lipschitz and bounded, we have
\begin{equation}
\begin{aligned}
    &\int f(x,y,z)dP_n^{XYZ}(x,y,z) \\
   =& \int \left(\int f(x,y,z)dP_n^{Z|Y=y}(z) \right) dP_{n}^{XY}(x,y)\\
   =& \int ( g_n(x,y) - g(x,y)) dP_n^{XY}(x,y)\\
   +& \int g(x,y)dP_n^{XY}(x,y), 
\end{aligned}
\end{equation}
where 
\begin{equation}
\begin{aligned}
    g(x,y) &=  \int f(x,y,z)dP^{Z|Y=y}(z)\\
    g_n(x,y) &=  \int f(x,y,z)dP_n^{Z|Y=y}(z).
\end{aligned}
\end{equation}

From Lemma~\ref{integralLip}, since $f(x,y,z)$ is Lipschitz, $P^{Z|Y=y}$ is Lipschitz in Wasserstein-$p$ sense, we have $g(x,y)$ is Lipschitz. $g(x,y)$ is also bounded since $f(x,y,z)$ is bounded. So we have
\begin{equation}
\begin{aligned}
   \int g(x,y)dP_n^{XY}(x,y) \rightarrow  \int g(x,y)dP^{XY}(x,y),
\end{aligned}
\end{equation}
since $P_n^{XY}$ converge to $P^{XY}$ weakly.

We then need to show $\int ( g_n(x,y) - g(x,y) )dP_n^{XY}(x,y)$ converges to 0. We prove by contradiction. Suppose it does not converge, then there exists $\epsilon>0$ and a subsequence of $n$ (denote as $i$) such that 
\begin{equation} \label{eqn:1}
\begin{aligned}
  \left| \int ( g_i(x,y) - g(x,y) )dP_i^{XY}(x,y) \right| \ge \epsilon, 
  \forall i.
\end{aligned}
\end{equation}
Without loss of generality, we assume that 
\begin{equation}
\begin{aligned}
  \int ( g_i(x,y) - g(x,y) )dP_i^{XY}(x,y) \ge \epsilon > 0, 
  \forall i,
\end{aligned}
\end{equation}
i.e.
\begin{equation}
\begin{aligned}
   & \iiint f(x,y,z) dP_i^{Z|Y=y}(z) dP_i^{X|Y=y}(x) dP_i^Y(y) \\
   - &\iiint f(x,y,z) dP^{Z|Y=y}(z)  dP_i^{X|Y=y}(x) dP_i^Y(y) \ge \epsilon > 0,  \forall i.
\end{aligned}
\end{equation} 

Let
\begin{equation}
\begin{aligned}
   & h_i(y,z) = \int f(x,y,z) dP_i^{X|Y=y}(x),
\end{aligned}
\end{equation} 
then
\begin{equation}\label{eqn:contradict1}
\begin{aligned}
   & \iint h_i(y,z) dP_i^{Z|Y=y}(z) dP_i^Y(y) \\
   - &\iint h_i(y,z) dP^{Z|Y=y}(z)  dP_i^Y(y) \ge \epsilon > 0,  \forall i.
\end{aligned}
\end{equation} 

Note that $f$ is $K$-Lipschitz, $P_i^{X|Y=y}$ is $K$-Lipschitz, we have $h_i$ are all $K(C+1)$-Lipschitz, thus uniformly equicontinuous. Also $h_i(y,z)$ are uniformly bounded by the bound of $f$, and the domain for $(Y,Z)$ is compact (since the domain $D_t$ for $X_t$ is compact for all $t$). By the Arzela-Ascoli theorem, there exists a subsequence $j$ such that $h_j \rightarrow h$ uniformly, where $h$ is $K(C+1)$-Lipschitz and bounded.
Therefore,
\begin{equation}
\begin{aligned}
   & \iint (h_j(y,z) - h(y,z)) dP_j^{Z|Y=y}(z) dP_j^Y(y) \\
   - &\iint (h_j(y,z) - h(y,z))  dP^{Z|Y=y}(z)  dP_j^Y(y) \rightarrow 0.
\end{aligned}
\end{equation} 
From Lemma~\ref{longlemma}
\begin{equation}
\begin{aligned}
   & \iint  h(y,z) dP_j^{Z|Y=y}(z) dP_j^Y(y) \\
   - &\iint h(y,z)  dP^{Z|Y=y}(z)  dP_j^Y(y) \rightarrow 0.
\end{aligned}
\end{equation} 
We then have 
\begin{equation}\label{eqn:contradict2}
\begin{aligned}
   & \iint  h_j(y,z) dP_j^{Z|Y=y}(z) dP_j^Y(y) \\
   - &\iint h_j(y,z)  dP^{Z|Y=y}(z)  dP_j^Y(y) \rightarrow 0.
\end{aligned}
\end{equation} 
We have a contradiction between Equation~\ref{eqn:contradict1} and~\ref{eqn:contradict2}. We finish the proof for $T=3$.

We then prove the case for general $T$ by induction. Suppose it holds for $T = N$, we now prove it for $T = N+1$.

From the conditions for the case $T=N+1$ and that the theorem holds for $T = N$, we have $P_n^{X_{1:N}}$ converges to $P^{X_{1:N}}$ weakly and $P_n^{X_{2:N+1}}$ converges to $P^{X_{2:N+1}}$ weakly. We now view $X_1$ as $X$, view the Cartesian product of $X_{2:N}$ as $Y$, and view $X_{N+1}$ as $Z$, so we have $P_n^{XY}$ converges to $P^{XY}$ weakly and $P_n^{YZ}$ converges to $P^{YZ}$ weakly. Also $P_n^{X|Y}$ and $P^{Z|Y}$  are $C$-Lipschitz continuious and Wasserstein-$p$ metric for all $n$, since $P_n^{X|Y} = P_n^{X_1|X_2}$ and $P^{Z|Y} = P^{X_{N+1}|X_{N}}$ from the Markovian property, and that $\|X_2\| \le \|X_{2:N}\|$, $\|X_{N}\| \le \|X_{2:N}\|$. Therefore, from the theorem for $T=3$ case we have that $P_n^{XYZ}$ converges to $P^{XYZ}$ weakly, i.e., $P_n^{X_{1:N+1}}$ converges to $P^{X_{1:N+1}}$ weakly.

\end{proof}

\subsection*{S6. A Counterexample in Continuous Sample Space}\label{sec:counterexample}

As a direct implementation of Corollary 7 in \cite{ding2020subadditivity}, for the Markov chain $X_{1:T}$ in a finite discrete sample space, $P_n^{X_{t:t+1}}$ converges to $P^{X_{t:t+1}}$ in Wasserstein-$p$ sense for each $t=1,2...T-1$ implies that $P_n^{X_{1:T}}$ converges to $P^{X_{1:T}}$ in Wasserstein-$p$ sense. However, if the Markov chains are defined in the continuous sample space, the implementation is, in general, not correct without further assumptions, e.g., the assumption of continuity for probability transition kernels. In this section, we provide a counterexample to show that.

We consider the Markov chain with $T=3$ and use $X, Y, Z$ to denote the nodes in sequence. We define $P_n^{XYZ}$ as follows:
\begin{equation}
\begin{aligned}
P_n^{XYZ}(0,0,0) &= \frac{1}{2}, \\
P_n^{XYZ}(1,\frac{1}{n},1) &= \frac{1}{2},
\end{aligned}
\end{equation}
and $P^{XYZ}$ as following:
\begin{equation}
\begin{aligned}
P^{XYZ}(0,0,0) &= \frac{1}{4}, \\
P^{XYZ}(0,0,1) &= \frac{1}{4}, \\
P^{XYZ}(1,0,0) &= \frac{1}{4}, \\
P^{XYZ}(1,0,1) &= \frac{1}{4}.
\end{aligned}
\end{equation}
We can easily check that $P_n^{XY}$ converges to $P^{XY}$ in Wasserstein-$p$ metric, since
\begin{equation}
\begin{aligned}
P_n^{XY}(0,0) &= \frac{1}{2}, \\
P_n^{XY}(1,\frac{1}{n}) &= \frac{1}{2},
\end{aligned}
\end{equation}
and
\begin{equation}
\begin{aligned}
P^{XY}(0,0) &= \frac{1}{2},\\
P^{XY}(1,0) &= \frac{1}{2},\\
\end{aligned}
\end{equation}
Similarly $P_n^{YZ}$ converges to $P^{YZ}$ in Wasserstein-$p$ metric.
However, $P_n^{XYZ}$ does not converge to $P^{XYZ}$ in Wasserstein-$p$ metric: the support of $P_n^{XYZ}$ and $(0,0,1)$ always have a distance larger than 1.

\bibliographystyle{unsrt}
\bibliography{cite}

\end{document}